\definecolor{alizarin}{RGB}{227,38,54}
\definecolor{ultramarine}{RGB}{24,13,191}
\theoremstyle{plain}
\newtheorem{theorem}{Theorem}[section]
\newtheorem{proposition}[theorem]{Proposition}
\newtheorem{lemma}[theorem]{Lemma}
\theoremstyle{definition}
\newtheorem{definition}[theorem]{Definition}
\newtheorem{example}[theorem]{Example}
\theoremstyle{definition}
\newtheorem*{remark*}{Remark}
\newtheorem*{openquestion*}{Open Question}
\definecolor{brightpink}{rgb}{1.0, 0.0, 0.5}
  \thmt@suspendcounter{\thmt@envname}{\thmt@trivialref{#1}{??}}%
\newcommand*{\R}{\mathbb{R}}   %
\newcommand*{\Men}{\mathcal{I}}    %
\newcommand*{\Women}{\mathcal{J}}  %
\newcommand*{\MenAll}{\mathcal{I}^*}    %
\newcommand*{\WomenAll}{\mathcal{J}^*}  %
\newcommand*{\Agents}{\mathcal{A}} %
\newcommand*{\AgentsAll}{\mathcal{A}^*} %
\newcommand*{\man}{i}              %
\newcommand*{\woman}{j}            %
\newcommand*{\agent}{a}            %
\newcommand*{\MSet}{X}
\newcommand*{\MSetUCB}{X^\UCB}
\newcommand*{\MSetii}{X'}
\newcommand*{\MSetopt}{X^*}
\newcommand*{\MMap}{\mu_{\MSet}}
\newcommand*{\MMapUCB}{\mu_{\MSet^\UCB}}
\newcommand*{\MMapopt}{\mu_{\MSetopt}}
\newcommand*{\MMapt}{\mu_{\MSet^\round}}
\newcommand*{\MMapii}{\mu_{\MSetii}}
\newcommand*{\MMatrix}{Z}
\newcommand*{\CMatrix}{S}
\newcommand*{\Confidence}{C}         %
\newcommand*{\Instability}[4][]{I(#3, #4; #2, \mathcal{A}#1)}  %
\newcommand*{\InstabilityNTU}[3][]{I(#3; #2, \Agents#1)}  %
\newcommand*{\Gap}{\Delta}  %
\newcommand*{\IR}{Z}  %
\newcommand*{\subsidies}{s}         %
\newcommand*{\transfers}{\tau}      %
\newcommand*{\transfersii}{\tau'}   %
\newcommand*{\prices}{p}            %
\newcommand*{\pricesopt}{p^*}       %
\newcommand*{\utility}{u}           %
\newcommand*{\utilityii}{\tilde u}           %
\newcommand*{\uUCB}{u^{\mathrm{UCB}}} %
\newcommand*{\round}{t}             %
\newcommand*{\Rounds}{T}            %
\newcommand*{\Regret}{R}            %
\newcommand*{\noise}{\epsilon}   %
\newcommand*{\Coalition}{\mathcal{S}}
\newcommand*{\blame}{b}
\newcommand*{\pulls}{n}
\newcommand*{\mean}{\hat u}
\newcommand*{\UUnstructured}{\mkern2mu\mathcal{U}_{\mathrm{unstructured}}}
\newcommand*{\UAll}{\mkern2mu\mathcal{U}}
\newcommand*{\UTyped}{\mkern2mu\mathcal{U}_{\mathrm{typed}}}
\newcommand*{\ULinear}{\mkern2mu\mathcal{U}_{\mathrm{linear}}}
\newcommand*{\context}{c}
\newcommand*{\NTUinstabmeasure}{{NTU Subset Instability}}
\newcommand*{\instabmeasure}{{Subset Instability}}
\newcommand*{\UCB}{\mathrm{UCB}}
\newcommand*{\LS}{\mathrm{LS}}
\newcommand*{\eluderdim}{d_{\textrm{$\epsilon$-eluder}}}
\let\temp\epsilon         %
\let\epsilon\varepsilon
\let\varepsilon\temp
\newcommand*{\Hard}{\mathcal{I}}
\newcommand*{\arm}{\alpha}
\DeclarePairedDelimiter{\norm}{\lVert}{\rVert}
\DeclarePairedDelimiter{\paren}{\lparen}{\rparen}
\newcommand*{\Event}{E}
\newcommand*{\bonus}{\rho}
\newcommand*{\Contexts}{\mathcal{C}}
\newcommand*{\Ball}{\mathcal{B}}
\DeclareMathOperator*{\argmin}{arg\,min}  %
\newcommand*{\todoconst}{0.1}
\newcommand*{\todoconstmod}{0.05}
\definecolor{seagreen}{RGB}{0, 139, 114}
\newcommand{\printfnsymbol}[1]{%
  \textsuperscript{\@fnsymbol{#1}}%
}
\title{Learning Equilibria in Matching Markets from Bandit Feedback} 
\author{
  Meena Jagadeesan\stepcounter{footnote}\thanks{Equal contribution.} \\
  {mjagadeesan@berkeley.edu} \\
  UC Berkeley, EECS
  \and
  Alexander Wei\printfnsymbol{2} \\
  {awei@berkeley.edu} \\
  UC Berkeley, EECS
  \and
  Yixin Wang \\
  {ywang@eecs.berkeley.edu} \\
  UC Berkeley, EECS
  \and 
  Michael I. Jordan \\
  {jordan@cs.berkeley.edu} \\
  UC Berkeley, EECS and Statistics
  \and
  Jacob Steinhardt \\
  {jsteinhardt@berkeley.edu} \\
  UC Berkeley, Statistics
}
\date{\vspace{-5ex}}
\begin{document}

\maketitle

\begin{abstract}
Large-scale, two-sided matching platforms must find market outcomes that align with user preferences while simultaneously learning these preferences from data. Classical notions of stability (Gale and Shapley, 1962; Shapley and Shubik, 1971) are unfortunately of limited value in the learning setting, given that preferences are inherently uncertain and destabilizing while they are being learned. To bridge this gap, we develop a framework and algorithms for learning stable market outcomes under uncertainty. Our primary setting is matching with transferable utilities, where the platform both matches agents and sets monetary transfers between them. We design an incentive-aware learning objective that captures the distance of a market outcome from equilibrium. Using this objective, we analyze the complexity of learning as a function of preference structure, casting learning as a stochastic multi-armed bandit problem. Algorithmically, we show that ``optimism in the face of uncertainty,'' the principle underlying many bandit algorithms, applies to a primal-dual formulation of matching with transfers and leads to near-optimal regret bounds. Our work takes a first step toward elucidating when and how stable matchings arise in large, data-driven marketplaces.
\end{abstract}

\maketitle

\section{Introduction}\label{sec:intro}

Data-driven marketplaces face the simultaneous challenges of learning agent preferences and aligning market outcomes with the incentives induced by these preferences. Consider, for instance, online platforms that {match} two sides of a market to each other (e.g., Lyft, TaskRabbit, and Airbnb). On these platforms, customers are {matched} to service providers and {pay} for the service they receive. If agents on either side are not offered desirable matches at fair prices, they would have an incentive to leave the platform and switch to a competing platform.
Agent preferences, however, are often unknown to the platform and must be learned.
When faced with uncertainty about agent preferences (and thus incentives), \emph{when can a marketplace efficiently explore and learn market outcomes that align with agent incentives?}

We center our investigation around a model called \emph{matching with transferable utilities}, proposed by Shapley and Shubik~\citep{SS71}. In this model, there is a two-sided market of customers and service providers. 
Each customer has a utility that they derive from being matched to a given provider and vice versa. The platform selects a {matching} between the two sides and assigns a monetary transfer between each pair of matched agents. {Transfers are a salient feature of most real-world matching markets: riders pay drivers on Lyft, clients pay freelancers on TaskRabbit, and guests pay hosts on Airbnb.} An agent's net utility is their value for being matched to their partner plus the value of their transfer (either of which can be negative in the cases of costs and payments).
In matching markets, the notion of \textit{stability} captures alignment of a market outcome with agent incentives. Informally, a market outcome is \textit{stable} if no pair of agents would rather match with each other than abide by the market outcome, and stable matchings can be computed when preferences are fully known. 

In the context of large-scale matching platforms, however, the assumption that preferences are known breaks down. Platforms usually cannot have users report their complete preference profiles. Moreover, users may not even be aware of what their own preferences are. For example, a freelancer may not exactly know what types of projects they prefer until actually trying out specific ones. In reality, a data-driven platform is more likely to learn information about preferences from repeated feedback\footnote{Feedback might arise from explicit sources (e.g., riders rating drivers after a Lyft ride) or implicit sources (e.g., engagement metrics on an app); in either case, feedback is likely to be sparse and noisy.} over time. Two questions now emerge: In such marketplaces, how can stable matchings be learned? And what underlying structural assumptions are necessary for efficient learning to be possible? 

To address these questions, we propose and investigate a model for learning stable matchings from noisy feedback. We model the platform's learning problem using stochastic multi-armed bandits, which lets us leverage the extensive body of work in the bandit literature to analyze the data efficiency of learning (see \citet{LS2020} for a textbook treatment). More specifically, our three main contributions are:
(i)~We develop an incentive-aware learning objective---\instabmeasure{}---that captures the distance of a
market outcome from equilibrium. (ii)~Using \instabmeasure{} as a measure of regret, we show that any ``UCB-based'' algorithm from the classical bandit literature can be adapted to this incentive-aware setting. (iii)~We instantiate this idea for several families of preference structures to design efficient algorithms for incentive-aware learning. This helps elucidate how preference structure affects the complexity of learning stable matchings.

\paragraph{Designing the learning objective.}

Since mistakes are inevitable while exploring and learning, achieving exact stability at every time step is an unattainable goal.
To address this issue, we lean on approximation, focusing on learning market outcomes that are \emph{approximately} stable.
Thus, we need a metric that captures the distance of a market outcome from equilibrium.\footnote{Previous work \cite{DK05, LMJ20} has investigated utility difference (i.e. the difference between the total utility achieved by the selected matching and the utility achieved by a stable matching) as a measure of regret. However, this does not capture distance from equilibrium in matching markets with monetary transfers (see \Cref{sec:instab}) or without monetary transfers (see \Cref{subsubsec:utility}).} 

We introduce a notion for approximate stability that we call \emph{\instabmeasure{}}. Specifically, we define the \instabmeasure{} of a market outcome to be the maximum difference, over all subsets $\Coalition$ of agents, between the total utility of the maximum weight matching on $\Coalition$ and the total utility of $\Coalition$ under the market outcome.\footnote{This formulation is inspired by the \emph{strong $\epsilon$-core} of \citet{SS66}.} We show that \instabmeasure{} can be interpreted as the amount the platform would have to \emph{subsidize} participants to keep them on the platform and make the resulting matching stable. We can also interpret \instabmeasure{} as the platform's cost of learning when facing competing platforms with greater knowledge of user preferences. Finally, we show that \instabmeasure{} is the maximum gain in utility that a coalition of agents could have derived from an alternate matching such that no agent in the coalition is worse off. 

\instabmeasure{} also satisfies the following properties, which make it suitable for learning: (i) \instabmeasure{} is equal to zero if and only if the market outcome is (exactly) stable; (ii) \instabmeasure{} is robust to small perturbations to the utility functions of individual agents, which is essential for learning with noisy feedback; and (iii) \instabmeasure{} upper bounds the utility difference of a market outcome from the socially optimal market outcome.

\paragraph{Designing algorithms for learning a stable matching.} Using \instabmeasure{}, we investigate the problem of learning a stable market outcome from noisy user feedback using the stochastic contextual bandit model (see, e.g., \citep{LS2020}). In each round, the platform selects a market outcome (i.e., a matching along with transfers), with the goal of minimizing cumulative instability. 

{We develop a general approach for designing bandit algorithms within our framework. Our approach is based on a primal-dual formulation of matching with transfers \cite{SS71}, in which the primal variables correspond to the matching and the dual variables can be used to set the transfers.} We find that ``optimism in the face of uncertainty,'' the principle underlying many UCB-style bandit algorithms \cite{DBLP-journals/ml/AuerCF02, LS2020}, can be adapted to this primal-dual setting. The resulting algorithm is simple: maintain upper confidence bounds on the agent utilities and compute, in each round, an optimal primal-dual pair in terms of these upper confidence bounds. The crux of the analysis is the following lemma, which bounds instability by the gap between the upper confidence bound and true utilities:
\begin{lemma}[Informal, see \Cref{lemma:confset} for a formal statement]
\label{lemma:confsetinf}
Given confidence sets for each utility value such that each confidence set contains the true utility, let $(\MSet, \transfers)$ be a stable matching with transfers with respect to the utility functions given by the upper confidence bounds. The instability of $(\MSet, \transfers)$ is upper bounded by the sum of the sizes of the confidence sets of pairs in $\MSet$.
\end{lemma}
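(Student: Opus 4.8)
The plan is to pass from matchings-and-transfers to \emph{payoff vectors}, bound instability coalition by coalition, and telescope over the matched edges. Write $\phi_\agent$ for the net utility that agent $\agent$ obtains under the outcome $(\MSet,\transfers)$ when net utility is computed with the true utilities $\utility$, and $\phi^{\UCB}_\agent$ for the same net utility computed with the upper confidence bounds $\uUCB$. Both vectors assign $0$ to agents left unmatched by $\MSet$; for a matched pair $(\man,\woman)\in\MSet$ the transfer is held fixed, so $\phi^{\UCB}_\man-\phi_\man=\uUCB_{\man\woman}-\utility_{\man\woman}\ge 0$ and symmetrically in $\woman$, whence
\[
(\phi^{\UCB}_\man-\phi_\man)+(\phi^{\UCB}_\woman-\phi_\woman)=(\uUCB_{\man\woman}-\utility_{\man\woman})+(\uUCB_{\woman\man}-\utility_{\woman\man})\le \lvert\Confidence_{\man\woman}\rvert,
\]
where $\lvert\Confidence_{\man\woman}\rvert$ denotes the combined size of the confidence set(s) recorded for the pair $(\man,\woman)$ (the inequality is the only place the containment hypothesis is used). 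Summing over $\MSet$, and using that unmatched agents contribute $0$ on both sides, yields $\sum_\agent(\phi^{\UCB}_\agent-\phi_\agent)\le\sum_{(\man,\woman)\in\MSet}\lvert\Confidence_{\man\woman}\rvert$, which is exactly the target quantity.

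Next I would extract from stability of $(\MSet,\transfers)$ with respect to $\uUCB$ just two facts: individual rationality, $\phi^{\UCB}_\agent\ge 0$ for every $\agent$; and the no-blocking inequality $\phi^{\UCB}_\man+\phi^{\UCB}_\woman\ge \uUCB_{\man\woman}+\uUCB_{\woman\man}$ for every cross-side pair $(\man,\woman)$. Since each upper confidence bound dominates the true utility, the right side is $\ge \utility_{\man\woman}+\utility_{\woman\man}$, so $\phi^{\UCB}_\man+\phi^{\UCB}_\woman\ge\utility_{\man\woman}+\utility_{\woman\man}$ for all cross-side pairs. Now fix a coalition $\Coalition$, let $M$ be a maximum-weight matching on $\Coalition$ under the true utilities, and let $\Coalition_M\subseteq\Coalition$ be the agents it matches. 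The value of $M$ is $\sum_{(\man,\woman)\in M}(\utility_{\man\woman}+\utility_{\woman\man})\le\sum_{(\man,\woman)\in M}(\phi^{\UCB}_\man+\phi^{\UCB}_\woman)=\sum_{\agent\in\Coalition_M}\phi^{\UCB}_\agent$. Subtracting the value $\sum_{\agent\in\Coalition}\phi_\agent$ that $\Coalition$ attains under the outcome and splitting $\Coalition$ along $\Coalition_M$,
\[
\sum_{(\man,\woman)\in M}(\utility_{\man\woman}+\utility_{\woman\man})-\sum_{\agent\in\Coalition}\phi_\agent\le\sum_{\agent\in\Coalition_M}(\phi^{\UCB}_\agent-\phi_\agent)-\sum_{\agent\in\Coalition\setminus\Coalition_M}\phi_\agent.
\]
For the last sum I would use $-\phi_\agent\le\phi^{\UCB}_\agent-\phi_\agent$ (equivalently $\phi^{\UCB}_\agent\ge 0$), which collapses the right side to $\sum_{\agent\in\Coalition}(\phi^{\UCB}_\agent-\phi_\agent)$; every summand is nonnegative, so this is at most $\sum_\agent(\phi^{\UCB}_\agent-\phi_\agent)\le\sum_{(\man,\woman)\in\MSet}\lvert\Confidence_{\man\woman}\rvert$ by the first paragraph. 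Taking the supremum over $\Coalition$ bounds the instability of $(\MSet,\transfers)$ as claimed.

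Every individual step is a one-liner, so the real work is in the setup rather than in any estimate. The point needing the most care is the bookkeeping around unmatched agents at two levels: agents unmatched by $\MSet$ (one must check that the two net-utility notions agree there and both equal $0$) and agents of $\Coalition$ left unmatched by $M$ — these are handled by the small observation that their possibly-negative contribution $-\phi_\agent$ is dominated by their individual-rationality slack $\phi^{\UCB}_\agent-\phi_\agent$, which sidesteps a separate case analysis. A secondary, purely cosmetic matter is the exact convention for the ``size of the confidence set of a pair'' — one set per ordered pair $(\man,\woman)$ versus one per matched edge — which only affects constants in the final bound.
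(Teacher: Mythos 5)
Your proof is correct and follows essentially the same route as the paper's: for each coalition you control the maximum-weight-matching term via the pointwise dominance $\uUCB\ge\utility$ together with the stability (IR and no-blocking) of the UCB outcome, and you control the net-utility term by cancelling transfers and telescoping the per-agent gaps $\uUCB-\utility$ over the matched edges, which is exactly the paper's decomposition into its terms (A) and (B). The only difference is presentational — you unroll the paper's appeal to ``$\Instability{\uUCB}{\MSet}{\transfers}=0$'' into the primitive stability inequalities and handle coalition members unmatched by $M$ explicitly via individual rationality — and your bookkeeping at those points is sound.
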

We can thus analyze our algorithms by combining Lemma \ref{lemma:confsetinf} with the analyses of existing UCB-style algorithms. In particular, we can essentially inherit the bounds on the size of the confidence bounds from traditional analyses of multi-arm bandits. 

\paragraph{Complexity of learning a stable matching.}
{Our main technical result is a collection of regret bounds for different structural assumptions on agent preferences. These bounds resemble the classical stochastic multi-armed bandits bounds when rewards have related structural assumptions.} We summarize these regret bounds in Table \ref{table:regret} and elaborate on them in more detail below.

\begin{table}[]
    \centering
    \begin{tabular}{l@{\hskip 0.3in}c}
    \toprule & Regret bound %
    \\ \midrule
     Unstructured preferences & $\widetilde O\bigl(N \sqrt{n\Rounds}\bigr)$ 
     \\ \addlinespace[0.08cm] 
     Typed preferences & $\widetilde O\bigl(|\Contexts|  \sqrt{n\Rounds} \bigr)$ 
     \\ \addlinespace[0.08cm]
     Separable linear preferences & $\widetilde O\bigl(d\sqrt{N} \sqrt{n\Rounds}\bigr)$
     \\ \bottomrule
    \end{tabular}
    \caption{Regret bounds for different preference structures when there are $N$ agents on the platform and no more than $n$ agents arriving in each round.}
    \label{table:regret}
\end{table}

\begin{theorem}[Unstructured Preferences, Informal]
\label{thm:unstructinf}
For unstructured preferences, there exists a UCB-style algorithm that incurs \smash{$\widetilde O(N \sqrt{n\Rounds})$} regret according to \instabmeasure{} after $\Rounds$ rounds, where $N$ is the number of agents on the platform and $n$ is the number of agents that arrive in any round. (This bound is optimal up to logarithmic factors.) 
\end{theorem}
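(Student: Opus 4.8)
\textbf{Proof proposal for \Cref{thm:unstructinf}.}

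The plan is to establish the upper bound by instantiating the general UCB recipe promised by \Cref{lemma:confsetinf}, and the matching lower bound by reducing from a standard stochastic bandit instance. For the upper bound, the starting point is to treat each ordered pair $(\man, \woman)$ of agents (or each directional valuation $\utility_{\man\woman}$) as a separate arm with an unknown mean in, say, $[-1,1]$, and to maintain Hoeffding-type confidence intervals. Since unstructured preferences impose no sharing of information across pairs, after a pair has been matched $\pulls$ times its confidence half-width scales as \smash{$\widetilde O(1/\sqrt{\pulls})$}. In each round the algorithm forms the optimistic utility functions $\uUCB$ (true utility plus confidence radius), computes a stable matching-with-transfers $(\MSet^\round, \transfers^\round)$ with respect to $\uUCB$ via the primal--dual / max-weight-matching subroutine, and plays it. By \Cref{lemma:confsetinf}, the per-round instability is bounded by $\sum_{(\man,\woman)\in \MSet^\round}$ of the confidence-set sizes of the matched pairs.

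The heart of the argument is then the cumulative sum $\sum_{\round=1}^{\Rounds} \sum_{(\man,\woman)\in\MSet^\round} \big(\text{width of confidence set of } (\man,\woman) \text{ at round } \round\big)$. Here I would argue exactly as in the textbook UCB regret analysis: in any round at most $n$ pairs are matched, each pair's contribution is $\widetilde O(1/\sqrt{\pulls_{\man\woman}(\round)})$ where $\pulls_{\man\woman}(\round)$ counts prior matches, and there are at most $O(N^2)$ pairs in total (or $O(N)$ relevant pairs if each agent is matched at most once per round --- in the worst case $\Theta(N)$ pairs are active each round, contributing the leading $N$ factor). Summing $1/\sqrt{\pulls}$ over the pull schedule gives the usual $\sqrt{(\#\text{pulls})}$ behavior; carefully bookkeeping the $n$ arrivals per round and $N$ agents yields a bound of the form \smash{$\widetilde O(N\sqrt{n\Rounds})$}. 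The only subtlety beyond vanilla bandits is that the set of matched pairs is chosen adaptively by the primal--dual subroutine rather than by a simple $\argmax$ over arms, so I would be careful to invoke \Cref{lemma:confsetinf} pointwise per round (it holds for \emph{any} stable matching w.r.t.\ the optimistic utilities) and then bound the resulting data-dependent sum by a worst-case counting argument over which pairs can be repeatedly selected.

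For the lower bound, I would embed a hard stochastic multi-armed bandit instance into a matching market: take a market where the matching structure is essentially forced (e.g., a single agent on one side with $N$ candidate partners on the other, or $\Theta(N)$ disjoint two-agent submarkets each with two choices), so that choosing a market outcome amounts to pulling one of $\Theta(N)$ arms whose mean utilities are unknown. For such an instance, \instabmeasure{} reduces (up to constants) to the classical notion of regret --- the gap between the chosen pair's utility and the best pair's utility --- because the stabilizing transfers are pinned down and the coalition achieving the max in \instabmeasure{} is just the optimal pair. The standard $\Omega(\sqrt{KT})$ bandit lower bound (via Le Cam / KL divergence between instances differing in one arm's mean) then transfers, and scaling up by the $\Theta(N)$ parallel copies and the per-round arrival count $n$ gives $\Omega(N\sqrt{n\Rounds})$, matching the upper bound up to logarithmic factors.

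The main obstacle I anticipate is not either endpoint in isolation but the adaptive-selection bookkeeping in the upper bound: making sure that the data-dependent sum $\sum_\round \sum_{(\man,\woman)\in\MSet^\round} 1/\sqrt{\pulls_{\man\woman}(\round)}$ is correctly charged when the primal--dual subroutine can, in principle, keep re-selecting an under-explored pair because its \emph{optimistic} value stays high --- this is exactly the mechanism that makes UCB work, but one has to confirm that the matching structure does not let a single pair be selected so often that the $\sqrt{\cdot}$ telescoping breaks, and that the worst case over which $\Theta(N)$ pairs are simultaneously active in a round is what produces the stated $N\sqrt{n\Rounds}$ rate rather than something larger like $N\sqrt{N\Rounds}$. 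A secondary but minor point is verifying that the high-probability event on which all confidence sets contain the true utilities (needed for \Cref{lemma:confsetinf}) holds uniformly over all $O(N^2)$ pairs and all $\Rounds$ rounds, which costs only logarithmic factors via a union bound.
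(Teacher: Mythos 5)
Your upper-bound argument is essentially the paper's proof of \Cref{thm:instanceind}: Hoeffding confidence intervals per pair, \Cref{lemma:confset} to charge each round's instability to the widths of the matched pairs' confidence sets, a counting argument over the at most $|\Agents|^2$ pairs and at most $n\Rounds$ total matches to bound the cumulative sum by $\widetilde O(|\Agents|\sqrt{n\Rounds})$, and a union bound plus the Lipschitz property of \Cref{prop:desiderata} to dispose of the low-probability failure event. Your worry about the adaptively chosen matching is resolved exactly as you anticipate: a given pair's confidence width can exceed any threshold $\tilde w$ in at most $1 + 1/\tilde w^2$ rounds, so the telescoping survives no matter which pairs the primal--dual subroutine keeps re-selecting.

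The gap is in the lower bound. Neither of your proposed hard instances is strong enough. A single customer with $N$ candidate providers is one $N$-armed bandit and yields only $\Omega(\sqrt{N\Rounds})$; a collection of $\Theta(N)$ disjoint submarkets each offering two choices is $\Theta(N)$ independent $2$-armed bandits and yields only $\Omega(N\sqrt{\Rounds})$. Both fall short of the claimed $\Omega(N\sqrt{n\Rounds})=\Omega(N^{3/2}\sqrt{\Rounds})$ when $n=\Theta(N)$. To reach that rate you need $\Theta(N)$ customers \emph{each} facing an effectively $\Theta(N)$-armed problem, and you must prevent the matching constraint from coupling these per-customer problems, since otherwise the reduction to independent bandit instances breaks. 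The paper achieves this with an imbalanced market: $K$ customers and $10K\log(K\Rounds)$ providers partitioned into $K$ blocks, each customer's preferred block drawn uniformly at random, so that with high probability all customers can simultaneously be matched inside their preferred blocks; each customer then faces a genuine $K$-armed bandit, and forwarding a randomly chosen customer's pulls simulates the classical hard instance of \Cref{lemma:classicallb}, forcing average per-customer regret $\Omega(\sqrt{K\Rounds})$ and hence total regret $\Omega(K^{3/2}\sqrt{\Rounds})$. This construction, not the scaling heuristic you describe, is what produces the extra $\sqrt{N}$ factor. A secondary point: your plan to argue that \instabmeasure{} collapses to classical regret on the hard instance is more delicate than necessary; the paper simply invokes part 3 of \Cref{prop:desiderata} (\instabmeasure{} dominates utility difference), so any lower bound for learning the maximum weight matching transfers for free.
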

\begin{theorem}[Typed Preferences, Informal]
\label{thm:typedinf}
Consider preferences such that each agent $a$ has a type $\context_a \in \Contexts$ and the utility of $a$ when matched to another agent $a'$ is given by a function of the types $\context_a$ and $\context_{a'}$. There exists a UCB-style algorithm that incurs \smash{$\widetilde O(|\Contexts|  \sqrt{n \Rounds})$} regret according to \instabmeasure{} after $\Rounds$ rounds, where $n$ is the maximum number of agents that arrive to the platform in any round. 
\end{theorem}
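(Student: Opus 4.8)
The plan is to apply the general ``optimism in the face of uncertainty'' recipe underlying \Cref{lemma:confsetinf}, with the bandit ``arms'' taken to be the (at most) $|\Contexts|^2$ ordered pairs of types. For each ordered pair $(\context,\context')\in\Contexts\times\Contexts$ the algorithm maintains the running count $\pullscntr$ and empirical mean $\mean$ of all feedback produced so far by an agent of type $\context$ matched to an agent of type $\context'$, and forms an upper confidence bound $\mean+\bonus$ with $\bonus=\widetilde O(1/\sqrt{\pullscntr})$ a sub-Gaussian/Hoeffding bonus (and $\bonus=\infty$ when $\pullscntr=0$). A union bound over the $\le|\Contexts|^2$ type-pairs and $\Rounds$ rounds produces a good event on which every true type-pair utility lies in its confidence interval; I condition on this event, the complement contributing only a negligible additive term. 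In each round, the algorithm fills in the utility matrix of the $\le n$ arriving agents with these type-pair UCBs and computes an \emph{exactly} stable matching with transfers $(\MSet^\round,\transfers^\round)$ via the primal--dual formulation, which is well-posed because the UCB utilities form a known numerical matrix.

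First I would invoke \Cref{lemma:confsetinf} round by round: on the good event its hypothesis holds, so the instability of $(\MSet^\round,\transfers^\round)$ under the true utilities is at most $\sum_{(a,a')\in\MSet^\round}\bigl(\bonus_\round(\context_a,\context_{a'})+\bonus_\round(\context_{a'},\context_a)\bigr)$. Summing over rounds and regrouping by type-pair,
\[
\Regret\;\le\;\sum_{\round=1}^{\Rounds}\ \sum_{(a,a')\in\MSet^\round}\bigl(\bonus_\round(\context_a,\context_{a'})+\bonus_\round(\context_{a'},\context_a)\bigr)\;=\;\sum_{(\context,\context')}\ \sum_{\round=1}^{\Rounds} k_\round(\context,\context')\,\bonus_\round(\context,\context'),
\]
where $k_\round(\context,\context')$ is the number of matched pairs of type $(\context,\context')$ in round $\round$. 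Modulo the batching issue below, $\sum_\round k_\round\bonus_\round$ telescopes just like the classical bandit sum $\sum_\ell\widetilde O(1/\sqrt\ell)$, to $\widetilde O$ of the square root of the total number of observations of that type-pair. Since there are $\le|\Contexts|^2$ type-pairs and each round contributes $O(n)$ matched pairs, hence $O(n)$ observations, these totals sum to $O(n\Rounds)$, and Cauchy--Schwarz gives $\Regret\le\widetilde O\bigl(\sqrt{|\Contexts|^2\cdot n\Rounds}\bigr)=\widetilde O\bigl(|\Contexts|\sqrt{n\Rounds}\bigr)$, as claimed.

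The step I expect to be the genuine obstacle is that several arriving pairs may share the same type within one round: since the UCBs are frozen during a round, the $j$-th observation in a same-type batch of size $k$ is charged the (larger) start-of-round width rather than its ideal $\widetilde O(1/\sqrt\ell)$ width, so the clean telescoping breaks. I would repair this by splitting each type-pair's observations into an \emph{early} phase---rounds in which the batch more than doubles the running count, whose total contribution is a constant times the final early-phase count and hence $\widetilde O(n)$ per type-pair because batches have size $\le n$---and a \emph{late} phase, in which every width is within a constant factor of its ideal value so that the telescoping argument applies verbatim; alternatively, a phased/doubling variant of the algorithm avoids the issue entirely. This contributes only a lower-order additive $\widetilde O(n|\Contexts|^2)$ term. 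The remaining ingredients---correctness of the primal--dual computation of $(\MSet^\round,\transfers^\round)$, sub-Gaussian concentration of the type-pair means, and the bookkeeping for agents entering and leaving the platform across rounds---are routine given the earlier development.
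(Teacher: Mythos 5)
Your proposal matches the paper's proof in all essentials: per-type-pair UCB confidence intervals, an application of \Cref{lemma:confset} to bound each round's instability by the sum of confidence widths over matched pairs, a telescoping/counting argument over the at most $|\Contexts|^2$ type-pairs yielding $\widetilde O(|\Contexts|\sqrt{n\Rounds})$, and a negligible contribution from the bad event. The within-round batching issue you flag is real, and the paper absorbs it the same way in spirit---by allowing an extra additive $2n$ in the count of times a type-pair's confidence width can exceed a given threshold---so your early/late-phase split is just a more explicit rendering of the same fix.
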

\begin{theorem}[Separable Linear Preferences, Informal]
\label{thm:sepinf}
Consider preferences such that the utility of an agent $a$ when matched to another agent $a'$ is $\langle \phi(a), \context_{a'}\rangle$, where $\phi(a) \in \mathbb{R}^d$ is unknown and $\context_{a'} \in \mathbb{R}^d$ is known. There exists a UCB-style algorithm that incurs \smash{$\widetilde O(d \sqrt{N} \sqrt{n \Rounds})$} regret according to \instabmeasure{} after $\Rounds$ rounds, where $N$ is the number of agents on the platform and $n$ is the maximum number of agents that arrive in any round.
\end{theorem}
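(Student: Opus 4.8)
The plan is to instantiate the general UCB template behind \Cref{lemma:confsetinf} with a per-agent linear-bandit estimator. Fix an agent $a$ with unknown feature vector $\phi(a)\in\R^{d}$. Whenever $a$ is matched to some agent $a'$, it observes a noisy sample of $\langle\phi(a),\context_{a'}\rangle$, which is linear in the \emph{known} context $\context_{a'}$; so I would maintain a ridge-regression estimate $\hat\phi_{\round}(a)$ of $\phi(a)$ from $a$'s past observations, together with the standard self-normalized confidence bound for linear least squares. Letting $V^{(a)}_{\round-1}$ denote the regularized Gram matrix of the contexts $a$ has been matched to before round $\round$, this bound gives---with probability $1-\delta$ simultaneously over all agents, rounds, and contexts, under the usual normalization assumptions on $\phi$ and on the contexts---a valid confidence interval for each utility value $\langle\phi(a),\context_{a'}\rangle$ of half-width $\beta_{\round}\,\norm{\context_{a'}}_{(V^{(a)}_{\round-1})^{-1}}$ with $\beta_{\round}=\widetilde O(\sqrt d)$. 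The algorithm then takes the upper endpoints of these intervals as the optimistic utilities $\uUCB$ and, in each round, runs the primal--dual subroutine to compute a stable matching with transfers with respect to $\uUCB$ (whose existence and computability are already established), so that \Cref{lemma:confsetinf} applies.

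Next I would convert the per-round guarantee into a cumulative bound. By \Cref{lemma:confsetinf}, the instability incurred in round $\round$ is at most the sum of the sizes of the confidence sets of the pairs in the selected matching; regrouping by endpoint, this is $O\bigl(\beta_{\round}\sum_{a}\norm{\context_{\mu_{\round}(a)}}_{(V^{(a)}_{\round-1})^{-1}}\bigr)$, where the sum runs over all agents $a$ matched in round $\round$ and $\mu_{\round}(a)$ denotes $a$'s partner in that round. Writing $\mathcal T_a$ for the set of rounds in which $a$ is matched and $T_a=\lvert\mathcal T_a\rvert$, summing over rounds and regrouping by agent gives
\[
\Regret(\Rounds)\;=\;\widetilde O\!\Bigl(\sqrt d\;\sum_{a}\;\sum_{\round\in\mathcal T_a}\norm{\context_{\mu_{\round}(a)}}_{(V^{(a)}_{\round-1})^{-1}}\Bigr).
\]
For each fixed $a$, the contexts $(\context_{\mu_{\round}(a)})_{\round\in\mathcal T_a}$ are exactly the vectors accumulated into $V^{(a)}$, so the elliptical-potential (determinant--trace) lemma yields $\sum_{\round\in\mathcal T_a}\norm{\context_{\mu_{\round}(a)}}^{2}_{(V^{(a)}_{\round-1})^{-1}}=\widetilde O(d)$; by Cauchy--Schwarz the inner sum is $\widetilde O(\sqrt{d\,T_a})$ and the per-agent contribution is $\widetilde O(d\sqrt{T_a})$. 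A second application of Cauchy--Schwarz over the at most $N$ agents, together with $\sum_a T_a\le n\Rounds$ (at most $n$ agents are matched in each of the $\Rounds$ rounds), gives $\sum_a\sqrt{T_a}\le\sqrt{N\sum_a T_a}\le\sqrt{N\,n\,\Rounds}$, hence $\Regret(\Rounds)=\widetilde O\bigl(d\sqrt N\,\sqrt{n\Rounds}\bigr)$, as claimed.

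The two quantitative inputs---the self-normalized concentration bound and the elliptical-potential lemma---are imported essentially verbatim from the linear-bandit literature, and the union bound over the $\le N$ agents and $\Rounds$ rounds (with $\delta$ taken inverse-polynomial in $N$ and $\Rounds$, so that the rare failure event contributes negligibly given the a priori bound on per-round instability) only costs logarithmic factors, which are absorbed into $\widetilde O$. I expect the main obstacle to be the bookkeeping that lets these tools plug into \Cref{lemma:confsetinf}: one has to check that the context entering $a$'s round-$\round$ confidence width is precisely the context appended to $V^{(a)}$ after round $\round$ (so that the elliptical-potential sum telescopes correctly), that agents who do not arrive or are left unmatched in a round contribute to neither the regret nor the Gram-matrix updates in that round (so that $T_a$ is consistently ``the number of rounds in which $a$ is matched''), and that the confidence intervals stay valid even though partners are chosen adaptively from past data (which is exactly what the self-normalized martingale bound provides). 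No market-design ingredient beyond the primal--dual existence result and \Cref{lemma:confsetinf} should be needed; the argument runs parallel to the unstructured case (\Cref{thm:unstructinf}), with the per-agent $N$-armed-bandit contribution $\widetilde O(\sqrt{N\,T_a})$ there replaced by the $d$-dimensional linear-bandit contribution $\widetilde O(d\sqrt{T_a})$.
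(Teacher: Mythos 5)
Your proposal is correct and follows essentially the same architecture as the paper's proof of \Cref{thm:linear}: reduce per-round instability to the sum of confidence-interval widths of matched pairs via \Cref{lemma:confset}, bound each agent's cumulative width by $\widetilde O(d\sqrt{T_a})$ using per-agent linear least squares, and finish with Cauchy--Schwarz over agents together with $\sum_a T_a \le n\Rounds$. The only difference is the choice of linear-bandit machinery for the per-agent bound---you use the ridge-regression self-normalized bound and the elliptical-potential lemma, while the paper uses a least-squares confidence ellipsoid analyzed via the eluder dimension (following Russo--Van Roy); both are standard and yield the same $\widetilde O(d\sqrt{T_a})$ contribution.
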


These results elucidate the role of preference structure on the complexity of learning a stable matching. Our regret bounds scale with $N \sqrt{n T}$ for unstructured preferences (\Cref{thm:unstructinf}), $|\Contexts| \sqrt{n T}$ for typed preferences (\Cref{thm:typedinf}), and $d \sqrt{N} \sqrt{n T}$ for linear preferences (\Cref{thm:sepinf}). To illustrate these differences in a simple setting, let's consider the case where all of the agents show up every round, so $n = N$. In this case, our regret bound for unstructured preferences is superlinear in $N$; in fact, this dependence on $N$ is \textit{necessary} as we demonstrate via a lower bound (see \Cref{lemma:lowerbound}).  On the other hand, the complexity of learning a stable matching changes substantially with preference structure assumptions. In particular, our regret bounds are sublinear / linear in $N$ for typed preferences and separable linear preferences. This means that in large markets, a centralized platform can efficiently learn a stable matching with these preference structure assumptions.

\paragraph{Connections and extensions.} 

Key to our results and extensions is the primal-dual characterization of equilibria in matching markets with transfers. Specifically, equilibria are described by a linear program whose primal form maximizes total utility over matchings and whose dual variables correspond to transfers. This linear program inspires our definition of Subset Instability, connects Subset Instability to platform profit (see \Cref{appendix:alternatedef}), and relates learning with Subset Instability to regret minimization in combinatorial bandits (see \Cref{sec:lowerbound}). We adapt ideas from combinatorial bandits to additionally obtain $O(\log T)$ instance-dependent regret bounds (see \Cref{appendix:instancespecific}). 

Our approach also offers a new perspective on learning stable matchings in {markets with \emph{non-transferable} utilities} \citep{DK05,LMJ20}. Although this setting does not admit a linear program formulation, we show Subset Instability can be extended to what we call NTU Subset Instability (see \Cref{appendix:matchntu}), which turns out to have several advantages over the instability measures studied in previous work. Our algorithmic principles extend to NTU Subset Instability: we prove regret bounds commensurate with those for markets with transferable utilities.

\subsection{Related work}\label{sec:related}

In the machine learning literature, starting with \citet{DK05} and \citet{LMJ20}, several works \cite{DK05,LMJ20,SBS20,LRMJ20,CS21,BSS21} study learning stable matchings from bandit feedback in the Gale-Shapley stable marriage model \cite{GS62}. A major difference between this setting and ours is the absence of monetary transfers between agents. These works focus on the \textit{utility difference} rather than the instability measure that we consider. \citet{CS21} extend this bandits model to incorporate fixed, predetermined cost/transfer rules. However, they do not allow the platform to set arbitrary transfers between agents. Moreover, they also consider a weaker notion of stability that does not consider agents negotiating arbitrary transfers: defecting agents must set their transfers according to a fixed, predetermined structure. In contrast, we follow the classical definition of stability \cite{SS71}. 

Outside of the machine learning literature,  several papers also consider the complexity of finding stable matchings in other feedback and cost models, e.g., communication complexity \cite{GNOR19, ashlagi2020clearing, DBLP-conf/sigecom/Shi20} and query complexity \cite{EGK20, ashlagi2020clearing}. Of these works, \citet{DBLP-conf/sigecom/Shi20}, which studies the communication complexity of finding approximately stable matchings with transferable utilities, is perhaps most similar to ours. This work assumes agents know their preferences and focuses on the communication bottleneck, whereas we study the costs associated with learning preferences. Moreover, the approximate stability notion in \citet{DBLP-conf/sigecom/Shi20} is the maximum unhappiness of any \emph{pair} of agents, whereas \instabmeasure{} is equivalent to the maximum unhappiness over any \emph{subset} of agents. For learning stable matchings, \instabmeasure{} has the advantages of being more fine-grained and having a primal view that motivates a clean UCB-based algorithm.

Our notion of instability connects to historical works in coalitional game theory: related are the concepts of the strong-$\epsilon$ core of \citet{SS66} and the indirect function of \citet{indirect-function}, although each was introduced in a very different context than ours. Nonetheless, they reinforce the fact that our instability notion is a very natural one to consider.

A complementary line of work in economics  \cite{LMPS14, B17, A20, L20} considers stable matchings under incomplete information. These works focus on defining stability when the agents have incomplete information about their own preferences, whereas we focus on the platform's problem of learning stable matchings from noisy feedback. As a result, these works relax the definition of stability to account for uncertainty in the preferences of agents, rather than the uncertainty experienced by the platform from noisy feedback.

Multi-armed bandits have also been applied to learning in other economic contexts. For example, learning a socially optimal matching (without learning transfers) is a standard application of combinatorial bandits \cite{CesaBianchiLugosi12, GKJ12, DBLP-conf/icml/ChenWY13, combinatorialbanditsrevisited, DBLP-conf/aistats/KvetonWAS15}. Other applications at the interface of  bandit methodology and economics include dynamic pricing \cite{R74, DBLP-conf/focs/KleinbergL03, DBLP-journals/jacm/BadanidiyuruKS18}, incentivizing exploration \cite{FKKK14, MSS15}, learning under competition \cite{AMSW20}, and learning in matching markets without incentives \cite{JKK21}. 

Finally, primal-dual methods have also been applied to other problems in the bandits literature (e.g., \cite{ISSS19, TPRL20, LSY21}).

\section{Preliminaries}\label{sec:preliminaries}

The foundation of our framework is the \emph{matching with transfers} model of \citet{SS71}.
In this section, we introduce this model along with the concept of stable matching. %

\subsection{Matching with transferable utilities}

Consider a two-sided market that consists of a finite set $\Men$ of customers on one side and a finite set $\Women$ of providers on the other. Let $\Agents\coloneqq\Men\cup\Women$ be the set of all agents. A \emph{matching} $\MSet\subseteq\Men\times\Women$ is a set of pairs $(\man, \woman)$ that are pairwise disjoint, representing the pairs of agents that are matched. Let $\mathscr{X}_\Agents$ denote the set of all matchings on $\Agents$. For notational convenience, we define for each matching $\MSet \in\mathscr{X}_\Agents$ an equivalent functional representation $\MMap\colon\Agents\to\Agents$, where $\MMap(\man) = \woman$ and $\MMap(\woman) = \man$ for all matched pairs $(\man, \woman)\in\MSet$, and $\MMap (\agent) = \agent$ if $\agent\in\Agents$ is unmatched. 

When a pair of agents $(\man, \woman)\in\Men\times\Women$ matches, each experiences a utility gain. We denote these utilities by a global utility function $\utility: \Agents \times \Agents \rightarrow \mathbb{R}$, where $\utility(\agent, \agent')$ denotes the utility that agent $\agent$ gains from being matched to agent $\agent'$. (If $\agent$ and $\agent'$ are on the same side of the market, we take $\utility(\agent, \agent')$ to be zero by default.) We allow these utilities to be negative, if matching results in a net cost (e.g., if an agent is providing a service). We assume each agent $\agent\in\Agents$ receives zero utility if unmatched, i.e., $\utility(\agent, \agent) = 0$. When we wish to emphasize the role of an individual agent's utility function, we will use the equivalent notation $\utility_\agent(\agent')\coloneqq \utility(\agent, \agent')$.

A \emph{market outcome} consists of a matching $\MSet\in\mathscr{X}_\Agents$ along with a vector \smash{$\transfers \in \R^{\Agents}$} of transfers, where $\transfers_\agent$ is the amount of money transferred from the platform to agent $\agent$ for each $\agent\in\Agents$.
These monetary transfers are a salient feature of most real-world matching markets: riders pay drivers on Lyft, clients pay freelancers on TaskRabbit, and guests pay hosts on Airbnb. \citet{SS71} capture this aspect of matching markets by augmenting the classical two-sided matching model with transfers of utility between agents. Transfers are typically required to be \emph{zero-sum}, meaning that $\transfers_\man + \transfers_\woman = 0$ for all matched pairs $(\man, \woman)\in\MSet$ and $\transfers_\agent = 0$ if $\agent$ is unmatched. Here, $\MSet$ represents how agents are matched and $\transfers_\agent$ represents the transfer that agent $\agent$ receives (or pays). The net utility that an agent $\agent$ derives from a matching with transfers $(\MSet, \transfers)$ is therefore $\utility(\agent, \MMap(\agent)) + \transfers_\agent$.

\paragraph{Stable matchings.}
In matching theory, stability captures when a market outcome aligns with individual agents' preferences. Roughly speaking, a market outcome $(\MSet, \transfers)$ is stable if: (i) no individual agent $\agent$ would rather be unmatched, and (ii) no pair of agents $(\man, \woman)$ can agree on a transfer such that both would rather match with each other than abide by $(\MSet, \transfers)$. Formally:
\begin{definition}%
\label{def:stability}
A market outcome $(\MSet, \transfers)$ is \emph{stable} if:
(i) it is \emph{individually rational}, i.e., 
\begin{equation}
\label{eq:IR}
\utility_\agent(\MMap(\agent)) + \transfers_\agent\ge 0
\end{equation}
for all agents $\agent\in\Agents$, and (ii) it \emph{has no blocking pairs}, i.e.,
\begin{equation}\label{eq:block}
\bigl(\utility_\man(\MMap(\man)) + \transfers_\man\bigr) + \bigl(\utility_\woman(\MMap(\woman)) + \transfers_\woman\bigr)\ge \utility_\man(\woman) + \utility_\woman(\man)
\end{equation}
for all pairs of agents $(\man,\woman)\in\Men\times\Women$.\footnote{We observe that \eqref{eq:block} corresponds to no pair of agents $(\man, \woman)$ being able to agree on a transfer such that both would rather match with each other than abide by $(\MSet, \transfers)$. Notice that a pair $(\man, \woman)$ violates \eqref{eq:block} if and only if they can find a transfer $\transfers_\man' = -\transfers_\woman'$ such that $\utility_\man(\woman) + \transfers_\man' > \utility_\man(\MMap(\man)) + \transfers_\man$ and $\utility_\woman(\man) + \transfers_\woman' > \utility_\woman(\MMap(\woman)) + \transfers_\woman$.}
\end{definition}

A fundamental property of the matching with transfers model is that if  $(\MSet, \transfers)$ is stable, then $\MSet$ is a maximum weight matching, i.e., $\MSet$ maximizes $\sum_{\agent\in\Agents} \utility_\agent(\MMap(\agent))$ over all matchings $\MSet\in\mathscr{X}_{\Agents}$ \citep{SS71}. The same work shows that stable market outcomes coincide with Walrasian equilibria. (For completeness, we recapitulate the basic properties of this model in \Cref{appendix:recap}.)

To make the matching with transfers model concrete, we use the simple market depicted in the center panel of \Cref{fig:my_label} as a running example throughout the paper. This market consists of a customer Charlene and two providers Percy and Quinn, which we denote by $\Men = \{C\}$ and $\Women = \{P, Q\}$. If the agents' utilities are as given in \Cref{fig:my_label}, then Charlene would prefer Quinn, but Quinn's cost of providing the service is much higher. Thus, matching Charlene and Percy is necessary for a stable outcome. This matching is stable for any transfer from Charlene to Percy in the interval $[5, 7]$.

\begin{figure}[t]
    \centering\footnotesize
    \begin{tikzpicture}[scale=0.2, roundnode/.style={circle, draw=gray!60, fill=gray!5, very thick, minimum size=3mm}]
    \draw[thin, dashed, gray] (-15.75, 8) -> (-15.75, -6);
    \node at (-20, 8) {Customers};
    \node at (-12, 8) {Providers};
    \node[roundnode] at (-20, -1) (charlene0) {};
    \node[roundnode] at (-20, 2) (dana0) {};
    \node[roundnode] at (-20, 5) (ellen0) {};
    \node[roundnode] at (-20, -4) (frances0) {};
    \node[roundnode] at (-12, 4) (percy0) {};
    \node[roundnode] at (-12, 0.5) (quinn0) {};
    \node[roundnode] at (-12, -3) (ryan0) {};
    \draw[ultra thick, blue] (charlene0) -> (percy0);
    \draw[ultra thick, blue] (ellen0) -> (quinn0);
    \draw[ultra thick, blue] (frances0) -> (ryan0);
    \draw[ultra thick, seagreen, ->] (-19.3, 0.55) -> (-13.3, 4.3) node [midway, sloped, above right, xshift=1] {\contour{white}{\includegraphics[scale=0.07]{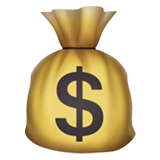}}};
    \draw[ultra thick, seagreen, ->] (-18.95, 3.525) -> (-13.35, 0.150) node [midway, sloped, below right, xshift=1] {\contour{white}{\includegraphics[scale=0.07]{money-bag_1f4b0.png}}};
    \draw[ultra thick, seagreen, ->] (-18.68, -4.81) -> (-12.88, -4.11) node [midway, sloped, below right, xshift=-3] {\contour{white}{\includegraphics[scale=0.07]{money-bag_1f4b0.png}}};
    
    \draw[thin, dashed, gray] (5, 8) -> (5, -6);
    \node at (0, 8) {Customers};
    \node at (10, 8) {Providers};
    \node[roundnode] at (0, 0) (charlene) {$C$};
    \node[roundnode] at (10, 4) (percy) {$P$};
    \node[roundnode] at (10, -4) (quinn) {$Q$};
    \draw[ultra thick, blue] (charlene) -> (percy);
    \draw[ultra thick, seagreen, ->] (1.9, 1.75) -> (7.6, 4.15) node [midway, sloped, above] {\contour{white}{pay 6}};
    \node[align=left] at (-2, 0) {$\utility_C(P) = 9$ \\[0.7em] \\ \\ $\utility_C(Q) = 12$};
    \node[align=left] at (11.5, 0) {$\utility_P(C) = -5$ \\[0.5em] $\utility_Q(C) = -10$};

    \draw[thin, dashed, gray] (32, 8) -> (32, -6);
    \node at (27, 8) {Customers};
    \node at (37, 8) {Providers};
    \node[roundnode] at (27, 0) (charlene2) {$C$};
    \node[roundnode] at (37, 4) (percy2) {$P$};
    \node[roundnode] at (37, -4) (quinn2) {$Q$};
    \draw[ultra thick, dashed, blue] (charlene2) -> (percy2); 
    \draw[ultra thick, dashed, blue] (charlene2) -> (quinn2); 
    \draw[ultra thick, dashed, seagreen, ->] (28.9, 1.75) -> (34.6, 4.15) node [midway, sloped, above] {\textbf{?}};
    \draw[ultra thick, dashed, seagreen, ->] (28.9, -1.75) -> (34.6, -4.15) node [midway, sloped, below] {\textbf{?}};
    \node[align=right] at (24, 0) {$\utility_C(P) = 9\pm 1$ \\[0.7em] \\ \\ $\utility_C(Q) = 12\pm 4$};
    \node[align=left] at (39, 0) {$\utility_P(C) = -7\pm 3$ \\[0.5em] $\utility_Q(C) = -10\pm 1$};
    
    \draw[thick] (17, 11) -> (17, -7);
    \draw[thick] (-7.75, 11) -> (-7.75, -7);
    
    \node[label={left:Matching market:}] at (-11.25, 10) {};
    
    \node[label={right:True utilities + stable outcome:}] at (-7.75, 10) {};
    
    \node[label={right:Platform's uncertainty sets:}] at (17, 10) {};
    
    \end{tikzpicture}
    \caption{The left panel depicts a schematic of a matching (blue) with transfers (green). The center panel depicts a matching market with three agents and a stable matching with transfers for that market. (If the transfer $6$ is replaced with any value between $5$ and $7$, the outcome remains stable.) The right panel depicts the same market, but with utilities replaced by uncertainty sets; note that no matching with transfers is stable for all realizations of utilities.}
    \label{fig:my_label}
\end{figure}

\section{Learning Problem and Feedback Model}\label{subsec:bandits}

We instantiate the platform's learning problem in a {stochastic contextual bandits} framework. Matching takes place over the course of $\Rounds$ rounds. We denote the set of all customers by $\MenAll$, the set of all providers by $\WomenAll$, and the set of all agents on the platform by $\AgentsAll = \MenAll \cup \WomenAll$. Each agent $\agent \in \AgentsAll$ has an associated context $\context_\agent\in\Contexts$, where $\Contexts$ is the set of all possible contexts. This context represents the {side information} available to the platform about the agent, e.g., demographic, location, or platform usage information. Each round, a set of agents arrives to each side of the market. The platform then selects a market outcome and incurs a regret equal to the \emph{instability} of the market outcome (which we introduce formally in \Cref{sec:instab}). Finally, the platform receives noisy feedback about the utilities of each matched pair $(\man, \woman)$.

To interpret the noisy feedback, note that platforms in practice often receive feedback both explicitly (e.g., riders rating drivers after a Lyft ride) and implicitly (e.g., engagement metrics on an app). In either instance, feedback is likely to be sparse and noisy. For simplicity, we do not account for agents strategically manipulating their feedback to the platform and focus on the problem of learning preferences from unbiased reports.

We now describe this model more formally. In the $\round$-th round: 
\begin{enumerate}
\itemsep=0.0em
    \item A set $\Men^\round\subseteq\MenAll$ of customers and a set $\Women^\round\subseteq\WomenAll$ of providers arrive to the market. Write $\Men^\round \cup \Women^\round \eqqcolon \Agents^\round$. The platform observes the identity $\agent$ and the \emph{context} $\context_\agent\in\Contexts$ of each agent $\agent \in \Agents^\round$. 
    \item The platform selects a matching with \emph{zero-sum} transfers $(\MSet^\round, \transfers^\round)$ between $\Men^\round$ and $\Women^\round$. 
    \item The platform observes noisy utilities $\utility_\agent(\mu_{X^t}(\agent)) + \noise_{\agent,\round}$ for each agent $\agent\in\Men^\round\cup\Women^\round$, where the $\noise_{\agent,\round}$ are independent, $1$-subgaussian random variables.\footnote{Our feedback model corresponds to \emph{semi-bandit} feedback, since the platform has (noisy) access to each agent's utility within the matching rather than the overall utility of the matching.} 
    \item The platform incurs regret equal to the \textit{instability} of the selected market outcome $(\MSet^\round, \transfers^\round)$. (We define instability formally in Section \ref{sec:instab}.)
\end{enumerate}
The platform's total regret $\Regret_\Rounds$ is thus the cumulative instability incurred up through round $\Rounds$.

\subsection{Preference structure} 
In this bandits framework, we can impose varying degrees of structure on agent preferences. We encode these preference structures via the functional form of agents' utility functions and their relation to agent contexts. More formally, let $\UAll$ be the set of functions $\utility\colon \AgentsAll \times \AgentsAll \rightarrow \mathbb{R}$, i.e., $\UAll$ is the set of all possible (global) utility functions. We now introduce several classes of preference structures as subsets of $\UAll$.

\paragraph{Unstructured preferences.} The simplest setting we consider is one where the preferences are unstructured. Specifically, we consider the class of utility functions
\[\UUnstructured = \bigl\{u \in \UAll \mid u(a, a') \in [-1, 1] \bigr\}.\]  
(Here, one can think of the context as being uninformative, i.e., $\Contexts$ is the singleton set.) In this setup, the platform must learn each agent's utility function $\utility_\agent(\cdot) = \utility(\agent, \cdot)$. 

\paragraph{Typed preferences.} We next consider a market where each agent comes in one of finitely many \emph{types}, with agents of the same type having identical preferences. Assuming typed preference structures is standard in theoretical models of markets (see, e.g., \citet{debreu-scarf, echenique-revealed, azevedo-hatfield}). We can embed types into our framework by having each agent's context represent their type, with $|\Contexts| < \infty$. The global utility function is then fully specified by agents' contexts:
\[\UTyped = \left\{u \in \UAll \mid u(\agent, \agent') = f(\context_\agent, \context_{\agent'}) \text{ for some $f\colon\Contexts\times\Contexts\to [-1,1]$}\right\}. \]

\paragraph{Separable linear preferences.} We next consider markets where each agent is associated with \emph{known} information given by their context as well as \emph{hidden} information that must be learned by the platform. (This differs from unstructured preferences, where all information was hidden, and typed preferences, where each agent's context encapsulated their full preferences.) We explore this setting under the assumption that agents' contexts and hidden information interact linearly. 

We assume that all contexts belong to $\Ball^d$ (i.e., $\Contexts=\Ball^d$) where $\Ball^d$ is the $\ell_2$ unit ball in $\R^d$. We also assume that there exists a function $\phi\colon\AgentsAll\to\Ball^d$ mapping each agent to the hidden information associated to that agent. The preference class $\ULinear^{d}$ can then be defined as
\[\ULinear^d = \Bigl\{\utility\in\mathcal{U} \bigm\vert \utility(\agent, \agent') = \langle c_{a'}, \phi(a) \rangle\text{ for some  $\phi\colon \AgentsAll \to \Ball^d$} \Bigr\}. \]

\section{Measuring Approximate Stability}\label{sec:instab}

When learning stable matchings, we must settle for guarantees of approximate stability, since exact stability---a binary notion---is unattainable when preferences are uncertain. To see this, we return to the example from \Cref{fig:my_label}. Suppose that the platform has uncertainty sets given by the right panel. Recall that for the true utilities, all stable outcomes match Charlene with Percy. If the true utilities were instead the upper bounds of each uncertainty set, then all stable outcomes would match Charlene and Quinn. Given only the uncertainty sets, it is impossible for the platform to find an (exactly) stable matching, so it is necessary to introduce a measure of approximate stability as a relaxed benchmark for the platform; we turn to this now.

Given the insights of \citet{SS71}---that all stable outcomes maximize the sum of agents' utilities---it might seem natural to measure distance from stability simply in terms of the
\emph{utility difference}. To define this formally, let $\Agents$ be the set of agents participating in the market. (This corresponds to $\Agents^\round$ at time step $t$ in the bandits model.) The utility difference\footnote{Utility difference is standard as a measure of regret for learning a maximum weight matching in the combinatorial bandits literature (see, e.g., \cite{GKJ12}). However, we show that for learning stable matchings, a fundamentally different measure of regret is needed.} of a market outcome $(\MSet, \transfers)$ is given by: 
\begin{equation}\label{eq:utilitydifference}
\left(\max_{\MSetii \in \mathscr{X}_\Agents} \sum_{\agent \in \Agents} \utility_\agent(\MMapii(\agent)) \right) - \left(\sum_{a \in \Agents} \utility_\agent(\MMap(\agent)) + \transfers_\agent)\right).
\end{equation} The first term $\max_{\MSetii \in \mathscr{X}_\Agents} \sum_{\agent \in \Agents} \utility_\agent(\MMapii(\agent))$ is the maximum total utility of any matching, and the second term $ \sum_{a \in \Agents} (\utility_\agent(\MMap(\agent)) + \transfers_\agent)$ is the total utility of market outcome $(\MSet, \transfers)$. Since transfers are zero-sum, \eqref{eq:utilitydifference} can be equivalently written as
\[\left(\max_{\MSetii \in \mathscr{X}_\Agents} \sum_{\agent \in \Agents} \utility_\agent(\MMapii(\agent)) \right) - \sum_{a \in \Agents} \utility_\agent(\MMap(\agent)).\]
But this shows that utility difference actually ignores the transfers $\transfers$ entirely! In fact, the utility difference can be zero even when the transfers lead to a market outcome that is far from stable (see \Cref{appendix:limitations}). Utility difference is therefore \emph{not} incentive-aware, making it unsuitable as an objective for learning stable matchings with transfers.

In the remainder of this section, we propose a measure of instability---\instabmeasure{}---which we will show serves as a suitable objective for learning stable matchings with transfers. Specifically, we show that \instabmeasure{} captures the distance of a market outcome from equilibrium while reflecting both the platform's objective and the users' incentives. We additionally show that \instabmeasure{} satisfies several structural properties that make it useful for learning.

\subsection{\instabmeasure{}}\label{sec:interpretation}

\instabmeasure{} is based on utility difference, but rather than only looking at the market in aggregate, it takes a maximum ranging over all subsets of agents.
\begin{definition}
\label{def:instability}
Given utilities $\utility$, the \emph{\instabmeasure{}}  $\Instability{\utility}{\MSet}{\transfers}$ of a matching with transfers $(\MSet, \transfers)$ is
\begin{equation*}\label{eq:subsetinstab}
\max_{\Coalition \subseteq \Agents} \left[ \left( \max_{\MSetii \in \mathscr{X}_\Coalition} \sum_{\agent \in \Coalition} \utility_\agent(\MMapii(\agent)) \right)  - \left(\sum_{\agent \in \Coalition} \utility_\agent(\MMap(\agent)) + \transfers_\agent\right)  \right]. \tag{$\ast$}
\end{equation*}
(The first term $\max_{X' \in \mathscr{X}_\Coalition} \sum_{\agent \in \Coalition} \utility_\agent(\MMapii(\agent))$ is the maximum total utility of any matching over $\Coalition$, and the second term $\sum_{a \in \Agents} (\utility_\agent(\MMap(\agent)) + \transfers_\agent)$ is the total utility of the agents in $\Coalition$ under market outcome $(\MSet, \transfers)$.)
\end{definition}
Intuitively, \instabmeasure{} captures stability because it checks whether any subset of agents would prefer an alternate outcome. We provide a more extensive economic interpretation below; but before doing so, we first illustrate Definition \ref{def:instability} in the context of the example in \Cref{fig:my_label}.

Consider the matching $\MSet = \{(C, Q)\}$ with transfers $\transfers_C = -11$ and $\transfers_Q = 11$. (This market outcome is stable for the upper bounds of the uncertainty sets of the platform in \Cref{fig:my_label}, but not stable for the true utilities.) It is not hard to see that the subset $\Coalition$ that maximizes \instabmeasure{} is $\Coalition = \left\{C, P \right\}$, in which case $\max_{\MSetii \in \mathscr{X}_\Coalition} \sum_{\agent \in \Coalition} \utility_\agent(\MMapii(\agent))= 4$ and $\sum_{\agent \in \Coalition} \left(\utility_\agent(\MMap(\agent)) + \transfers_\agent\right) = 1$. Thus, 
the \instabmeasure{} of $(\MSet, \transfers)$ is $\Instability{\utility}{\MSet}{\transfers} = 4 - 1 = 3$. In contrast, the utility difference of $(\MSet, \transfers)$ is $2$.

We now discuss several interpretations of \instabmeasure{}, which provide further insight into why \instabmeasure{} serves as a meaningful notion of approximate stability in online marketplaces. In particular, \instabmeasure{} can be interpreted as \textit{the minimum stabilizing subsidy}, as \textit{the platform's cost of learning}, as \textit{a measure of user unhappiness}, and as a \textit{distance from equilibrium}.

\paragraph{\instabmeasure{} as the platform's minimum stabilizing subsidy.} 
\instabmeasure{} can be interpreted in terms of monetary subsidies from the platform to the agents. Specifically, the \instabmeasure{} of a market outcome equals the {minimum amount the platform could subsidize agents so that the subsidized market outcome is individually rational and has no blocking pairs}.

More formally, let $\subsidies\in\R^\Agents_{\ge 0}$ denote subsidies made by the platform, where the variable $s_\agent \ge 0$ represents the subsidy provided to agent $\agent$.\footnote{The requirement that $s_a \geq 0$ enforces that all subsidies are nonnegative; without it, \eqref{eq:instability} would reduce to the utility difference, which is not incentive-aware.} For a market outcome $(\MSet, \transfers)$, the \textit{minimum stabilizing subsidy} is
\begin{equation}
\label{eq:instabilityrecast}
\min_{\subsidies \in \R_{\ge 0}^{\Agents}} \Biggl\{ \sum_{\agent \in \Agents} \subsidies_{\agent} \biggm\vert  \text{$(\MSet, \transfers + \subsidies)$ is stable}\Biggr\},
\end{equation}
where we define stability in analogy to \Cref{def:stability}. Specifically, we say that a market outcome $(\MSet, \transfers)$ with subsidies $\subsidies$ is \emph{stable} if it is {individually rational}, i.e., $\utility_\agent(\MMap(\agent)) + \transfers_\agent + \subsidies_\agent\ge 0$ for all agents $\agent\in\Agents$, and {has no blocking pairs}, i.e., $(\utility_\man(\MMap(\man)) + \transfers_\man + \subsidies_\man) + (\utility_\woman(\MMap(\woman)) + \transfers_\woman + \subsidies_\woman)\ge \utility_\man(\woman) + \utility_\woman(\man)$ for all pairs of agents $(\man, \woman)\in\Men\times\Women$. 

Given this setup, we show the following equivalence:
\begin{restatable}{proposition}{subsidy}
\label{prop:subsidy}
Minimum stabilizing subsidy equals \instabmeasure{} for any market outcome.
\end{restatable}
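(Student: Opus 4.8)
The plan is to recognize the minimum stabilizing subsidy as the optimal value of a linear program and to evaluate that value via LP duality. Abbreviate agent $\agent$'s net utility under $(\MSet,\transfers)$ by $w_\agent \coloneqq \utility_\agent(\MMap(\agent)) + \transfers_\agent$, and for $(\man,\woman)\in\Men\times\Women$ set $c_{\man\woman} \coloneqq \utility_\man(\woman) + \utility_\woman(\man) - w_\man - w_\woman$. Unpacking the definition of stability with subsidies, the quantity in \eqref{eq:instabilityrecast} equals
\[
  \min\Bigl\{\, \textstyle\sum_{\agent\in\Agents}\subsidies_\agent \;\Bigm|\; \subsidies\in\R_{\ge0}^{\Agents},\ \ \subsidies_\agent\ge -w_\agent\ \ (\forall\,\agent\in\Agents),\ \ \subsidies_\man+\subsidies_\woman\ge c_{\man\woman}\ \ (\forall\,(\man,\woman)\in\Men\times\Women)\,\Bigr\},
\]
because the two constraint families are precisely individual rationality and the no-blocking-pair condition for $(\MSet,\transfers+\subsidies)$.

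First I would show that the minimum stabilizing subsidy is at least $\Instability{\utility}{\MSet}{\transfers}$. Fix any feasible $\subsidies$, and let $\Coalition\subseteq\Agents$ and $\MSetii\in\mathscr{X}_\Coalition$ attain the maximum in \Cref{def:instability}. Summing the no-blocking constraint $(w_\man+\subsidies_\man)+(w_\woman+\subsidies_\woman)\ge\utility_\man(\woman)+\utility_\woman(\man)$ over the matched pairs $(\man,\woman)\in\MSetii$, together with the individual-rationality constraint $w_\agent+\subsidies_\agent\ge 0=\utility_\agent(\MMapii(\agent))$ over the agents of $\Coalition$ left unmatched by $\MSetii$, gives $\sum_{\agent\in\Coalition}(w_\agent+\subsidies_\agent)\ge\sum_{\agent\in\Coalition}\utility_\agent(\MMapii(\agent))$, i.e.\ $\sum_{\agent\in\Coalition}\subsidies_\agent\ge\Instability{\utility}{\MSet}{\transfers}$; since $\subsidies\ge 0$ the full sum $\sum_{\agent\in\Agents}\subsidies_\agent$ is no smaller.

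For the reverse inequality I would pass to the LP dual. With multipliers $\lambda_\agent\ge 0$ for the individual-rationality constraints and $y_{\man\woman}\ge 0$ for the no-blocking constraints, the dual maximizes $-\sum_{\agent}\lambda_\agent w_\agent+\sum_{(\man,\woman)}y_{\man\woman}c_{\man\woman}$ subject to $\lambda_\man+\sum_{\woman}y_{\man\woman}\le 1$ for every $\man\in\Men$ and $\lambda_\woman+\sum_{\man}y_{\man\woman}\le 1$ for every $\woman\in\Women$. Its constraint matrix is the vertex--edge incidence matrix of the complete bipartite graph with parts $\Men$ and $\Women$, augmented by an identity block for the $\lambda_\agent$'s, and is therefore totally unimodular; as the right-hand side is all-ones and the feasible region is bounded, the dual optimum is attained at a $0/1$ point. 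At such a point, $M\coloneqq\{(\man,\woman):y_{\man\woman}=1\}$ is a matching, and $L\coloneqq\{\agent:\lambda_\agent=1\}$ is disjoint from the set $V(M)$ of agents matched by $M$; writing $\Coalition\coloneqq L\cup V(M)$ and substituting the definition of $c$, the dual objective collapses to $\sum_{\agent\in\Coalition}\utility_\agent(\mu_M(\agent))-\sum_{\agent\in\Coalition}w_\agent$, which is at most the $\Coalition$-term in \Cref{def:instability} and hence at most $\Instability{\utility}{\MSet}{\transfers}$. Since the primal is feasible and bounded below by $0$, strong duality identifies the minimum stabilizing subsidy with the dual optimum, which we have just bounded by $\Instability{\utility}{\MSet}{\transfers}$; combined with the previous paragraph, this gives equality. (Conversely, feeding a maximizing coalition together with a maximum-weight matching within it into the dual shows that the dual optimum equals $\Instability{\utility}{\MSet}{\transfers}$ exactly.)

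I expect the main obstacle to be this last step: verifying total unimodularity of the dual constraint matrix and translating a $0/1$ dual solution into a coalition whose \instabmeasure{} term dominates the dual objective. A route that avoids forming the dual explicitly is to substitute $\subsidies_\agent=\max(0,-w_\agent)+t_\agent$ with $t_\agent\ge 0$, which reduces the residual problem to a fractional vertex-cover LP on $\Men\times\Women$ whose value, by K\"onig's theorem, is the weight of a maximum-weight matching; the bookkeeping is heavier but uses only classical bipartite-matching duality.
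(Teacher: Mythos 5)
Your proof is correct and follows essentially the same route as the paper's: formulate the minimum stabilizing subsidy as the LP \eqref{eq:instability}, pass to its dual, and use integrality of an optimal dual solution to identify the dual optimum with the maximum over coalitions in \Cref{def:instability}. If anything, your version is tighter in two places where the paper is terse: you justify the integrality of the dual optimum explicitly via total unimodularity (the paper only asserts that the dual ``can be embedded into a maximum weight matching linear program''), and your direct summing-of-constraints argument for the lower bound avoids the paper's detour through \Cref{prop:unhappiness}, whose own proof in turn cites \Cref{prop:subsidy}.
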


The proof boils down to showing that the two definitions are ``dual'' to each other. To formalize this, we rewrite the minimum stabilizing subsidy as the solution to the following linear program:\footnote{In this linear program, the first set of constraints ensures there are no blocking pairs, while the second set of constraints ensures individual rationality.}:
\begin{alignat}{5}\label{eq:instability}
	\min_{\subsidies \in\R^{|\Agents|}} & \sum_{\agent \in \Agents} \subsidies_{\agent} & \\ %
	\text{s.t.}\ \ & \bigl(\utility_\man(\MMap(\man)) + \transfers_\man + \subsidies_\man\bigr) + \bigl(\utility_\woman(\MMap(\woman)) + \transfers_\woman + \subsidies_\woman\bigr)\ge \utility_\man(\woman) + \utility_\woman(\man)\qquad&&\forall (\man,\woman)\in\Men\times\Women \nonumber \\
	& \utility_\agent(\MMap(\agent)) + \transfers_\agent + \subsidies_\agent\ge 0\quad&&\forall\agent\in\Agents \nonumber \\
	&\subsidies_\agent\ge 0 \quad&&\forall\agent\in\Agents. \nonumber
\end{alignat}
The crux of our argument is that the dual linear program to \eqref{eq:instability} maximizes the combinatorial objective \eqref{eq:subsetinstab}. The equivalence of \eqref{eq:subsetinstab} and \eqref{eq:instability} then follows from strong duality.

With this alternate formulation of \instabmeasure{} in mind, we revisit the example in \Cref{fig:my_label}. Again, consider the matching $\MSet = \{(C, Q)\}$ with transfers $\transfers_C = -11$ and $\transfers_Q = 11$. (This is stable for the upper bounds of the uncertainty sets of the platform in \Cref{fig:my_label}, but not stable for the true utilities.) We have already shown above that the \instabmeasure{} of this market outcome is 3. To see this via the subsidy formulation, note that the optimal subsidy $\subsidies$ gives $C$ and $P$ a total of $3$. (E.g., we give $C$ a subsidy of $\subsidies_C = 2$ and $P$ a subsidy of $\subsidies_P = 1$.) Indeed, if $\subsidies_C + \subsidies_P = 3$, then
\[ \bigl(\utility_C(\MMap(C)) + \transfers_C + \subsidies_C\bigr) + \bigl(\utility_P(\MMap(P)) + \transfers_P + \subsidies_P\bigr) \ge \utility_C(P) + \utility_P(C) \]
holds (with equality), so the pair $(C, P)$ could no longer gain by matching with each other.

The subsidy perspective turns out to be useful when designing learning algorithms. In particular, while the formulation in \Cref{def:instability} involves a maximization over the $2^{|\Agents|}$ subsets of $\Agents$, the linear programming formulation \eqref{eq:instability} only involves $O(|\Agents|)$ variables and $O(|\Agents|^2)$ constraints.

\paragraph{\instabmeasure{} as the platform's cost of learning.} We next
connect minimum stabilizing subsidies to the platform's \emph{cost of learning}---how much the platform would have to pay to keep users on the platform in the presence of a worst-case (but budget-balanced) competitor with perfect knowledge of agent utilities.

Observe that \eqref{eq:instabilityrecast} is the minimum amount the platform could subsidize agents so that no budget-balanced competitor could convince agents to leave.
The way that we formalize ``convincing agents to leave'' is that: (a) an agent will leave the original platform if they prefer to be unmatched over being on the platform, or (b) a pair of agents who are matched on the competitor's platform will leave the original platform if they both prefer the new market outcome over their original market outcomes. Thus, if we imagine the platform as actually paying the subsidies, then the cumulative instability (i.e., our regret) can be realized as a ``cost of learning'': it is how much the platform pays the agents to learn a stable outcome while ensuring that no agent has the incentive to leave during the learning process. Later on, we will see that our algorithmic approach can be extended to efficiently compute feasible subsidies for \eqref{eq:instability} that are within a constant factor of our regret bound, meaning that subsidies can be implemented using only the information that the platform has. Moreover, in \Cref{appendix:alternatedef}, we show that cost of learning can also be explicitly connected to the platform's revenue. 

\paragraph{\instabmeasure{} as a measure of user unhappiness.} While the above interpretations focus on \instabmeasure{} from the platform's perspective, we show that \instabmeasure{} can also be interpreted as a measure of {user unhappiness}. Given a subset $\Coalition\subseteq\Agents$ of agents, which we call a coalition, we define the \textit{unhappiness} of $\Coalition$ with respect to a market outcome $(\MSet, \transfers)$ to be the maximum gain (relative to $(\MSet, \transfers)$) in total utility that the members of coalition $\Coalition$ could achieve by matching only among themselves, such that no member is worse off than they were in $(\MSet, \transfers)$. (See \Cref{app:unhappiness} for a formal definition.) The condition that no member is worse off ensures that all agents would actually want to participate in the coalition (i.e.~they prefer it to the original market outcome).

User unhappiness differs from the original definition of \instabmeasure{} in \eqref{eq:subsetinstab}, because \eqref{eq:subsetinstab} does not require individuals to be better off in any alternative matching. However, we show that this difference is inconsequential: %

\begin{restatable}{proposition}{unhappiness}
\label{prop:unhappiness}
The maximum unhappiness of any coalition $\Coalition\subseteq\Agents$ with respect to $(\MSet, \transfers)$ equals the \instabmeasure{} $\Instability{\utility}{\MSet}{\transfers}$.
\end{restatable}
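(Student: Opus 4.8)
The plan is to compare the two quantities coalition by coalition, exploiting the fact that transfers internal to a coalition cancel. Fix the market outcome $(\MSet, \transfers)$ and abbreviate $w_a \coloneqq \utility_a(\MMap(a)) + \transfers_a$ for the net utility of agent $a$. For a coalition $\Coalition$ and a matching $\MSetii \in \mathscr{X}_\Coalition$, any transfers internal to $\MSetii$ are zero-sum, so the total net utility of $\Coalition$ after such a reallocation is exactly $\sum_{a \in \Coalition} \utility_a(\MMapii(a))$, regardless of the transfers; hence the ``gain'' in the definition of the unhappiness of $\Coalition$ is always $\sum_{a \in \Coalition} \utility_a(\MMapii(a)) - \sum_{a \in \Coalition} w_a$, the same expression that appears in \eqref{eq:subsetinstab}. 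The remaining requirement in the unhappiness definition --- existence of internal transfers under which no member is worse off --- decomposes over the components of $\MSetii$ (its matched pairs and its isolated agents): for a matched pair $(a, b) \in \MSetii$ it holds iff $\utility_a(b) + \utility_b(a) \ge w_a + w_b$, in which case $\transfersii_a$ can be chosen in the nonempty interval $[w_a - \utility_a(b),\, \utility_b(a) - w_b]$, and for an agent $a$ left unmatched by $\MSetii$ it holds iff $w_a \le 0$. In either case the condition says exactly that that component's contribution to the gain is nonnegative.

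With this reformulation both inequalities are short. For ``$\le$'': for every coalition $\Coalition$, the matchings admissible in the unhappiness of $\Coalition$ form a subset of $\mathscr{X}_\Coalition$ and carry the same objective, so the unhappiness of $\Coalition$ is at most the bracketed quantity of \eqref{eq:subsetinstab} for that $\Coalition$; maximizing over $\Coalition$ shows the maximum unhappiness is at most $\Instability{\utility}{\MSet}{\transfers}$.

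For ``$\ge$'': I would let $(\Coalition^*, \MSetii^*)$ attain the maximum in \eqref{eq:subsetinstab} and argue that we may take every component of $\MSetii^*$ to contribute nonnegatively to the gain --- if some component (a pair, or an isolated agent) had strictly negative contribution, deleting its agents from both $\Coalition^*$ and $\MSetii^*$ would strictly increase the objective, contradicting optimality (components contributing exactly $0$ may be kept freely). By the component-wise characterization above, $\MSetii^*$ is then admissible for the unhappiness of $\Coalition^*$, so the unhappiness of $\Coalition^*$, and hence the maximum unhappiness over all coalitions, is at least the value of $(\Coalition^*, \MSetii^*)$, which equals $\Instability{\utility}{\MSet}{\transfers}$. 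The two inequalities give the claimed equality.

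The one genuinely delicate step is this component-wise reduction of the ``no member worse off'' clause to nonnegativity of each component's contribution --- in particular, verifying that the interval of admissible transfers for a matched pair is nonempty precisely when that pair's contribution is nonnegative, and handling agents left unmatched inside a coalition correctly. The rest is routine bookkeeping, and unlike the proof of \Cref{prop:subsidy} it needs no linear-programming duality.
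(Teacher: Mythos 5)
Your argument is correct, but it takes a genuinely different route from the paper. The paper proves \Cref{prop:unhappiness} by passing through linear programming: it identifies \instabmeasure{} with the subsidy LP \eqref{eq:instability} via \Cref{prop:subsidy} and strong duality, and then matches optimal integral solutions of the dual program \eqref{eq:dual-instability} with witness coalitions, so that the integrality of the dual (an embedding into a maximum-weight-matching LP) does the work of producing the blocking structure. You instead argue directly: internal transfers cancel in the objective, so unhappiness and the bracketed quantity in \eqref{eq:subsetinstab} coincide as functions of $(\Coalition, \MSetii)$, and the ``no member worse off'' clause decomposes over components of $\MSetii$ into nonnegativity of each component's contribution (the nonempty-interval check for matched pairs and $w_\agent \le 0$ for isolated agents are both right). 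The deletion argument then upgrades an optimal $(\Coalition^*, \MSetii^*)$ to one whose components all contribute nonnegatively, making it admissible for the unhappiness program. Your proof is more elementary and self-contained --- it needs no duality and, as a bonus, it would let one prove \Cref{prop:unhappiness} first and cite it cleanly inside the duality-based proof of \Cref{prop:subsidy}, whereas the paper's two proofs reference each other. What the paper's route buys is uniformity: the same dual object simultaneously yields the subsidy, unhappiness, and $\epsilon$-core interpretations.

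One small patch is needed in your ``$\le$'' direction: when no $(\MSetii, \transfersii)$ is feasible, the unhappiness of $\Coalition$ is defined to be $0$, which can exceed the bracketed quantity of \eqref{eq:subsetinstab} for that particular $\Coalition$ (e.g.\ a singleton $\{\agent\}$ with $w_\agent > 0$ has bracketed value $-w_\agent < 0$). The conclusion still holds because $\Instability{\utility}{\MSet}{\transfers} \ge 0$ (take $\Coalition = \emptyset$), so you should compare the unhappiness of $\Coalition$ to $\max\bigl(0, \text{bracketed quantity}\bigr)$ rather than to the bracketed quantity itself.
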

See \Cref{app:unhappiness} for a full proof. In the proof, we relate the maximum unhappiness of any coalition to the dual linear program to \eqref{eq:instability}. To show this relation, we leverage the fact that optimal solutions to the dual program correspond to blocking pairs of agents as well as individual rationality violations.

The main takeaway from \Cref{prop:unhappiness} is that \instabmeasure{} not only measures costs to 
the platform, but also costs to users, in terms of the maximum amount they ``leave on the table''
by not negotiating an alternate arrangement amongst themselves.

\paragraph{\instabmeasure{} as a distance from equilibrium.} Finally, we connect \instabmeasure{} to solution concepts for coalitional games, a general concept in game theory that includes matching with transfers as a special case. Coalitional games (also known as cooperative games) capture competition and cooperation amongst a group of agents. The \textit{core} is the set of outcomes in a cooperative game such that no subset $\Coalition$ of agents can achieve higher total utility among themselves than according to the given outcome. In games where the core is empty, a natural relaxation is the \textit{strong $\epsilon$-core} \cite{SS66}, which is the set of outcomes in a cooperative game such that no subset $\Coalition$ of agents can achieve total utility among themselves that is at least $\epsilon$ greater than according to the given outcome.

\instabmeasure{} can be seen as transporting the strong $\epsilon$-core notion to a slightly different context. In particular, in the context of matching with transferable utilities, the core is exactly the set of stable matchings; since a stable matching always exists, the core is always nonempty. Even though the core is nonempty, we can nonetheless use the strong $\epsilon$-core to measure \textit{distance from the core}. More specifically, it is natural to consider the smallest $\epsilon$ such that $(\MSet, \transfers)$ is in the strong $\epsilon$-core. This definition exactly aligns with {\instabmeasure{}}, thus providing an alternate interpretation of \instabmeasure{} within the context of coalitional game theory.

\subsection{Properties of \instabmeasure{}}\label{sec:properties}

We now describe additional properties of our instability measure that are important for learning. We show that \instabmeasure{} is: (i) zero if and only if the matching with transfers is stable, (ii) Lipschitz in the true utility functions, and (iii) lower bounded by the utility difference.

\begin{restatable}{proposition}{desiderata}
\label{prop:desiderata}
\instabmeasure{} satisfies the following properties:
\begin{enumerate}
    \item \instabmeasure{} is always nonnegative and is zero if and only if $(\MSet, \transfers)$ is stable.
    \item \instabmeasure{} is Lipschitz continuous with respect to agent utilities. That is, for any possible market outcome $(\MSet, \transfers)$, and any pair of utility functions $\utility$ and $\utilityii$ it holds that:
    \[|\Instability{\utility}{\MSet}{\transfers} - \Instability{\utilityii}{\MSet}{\transfers}| \le 2\sum_{\agent\in\Agents} \norm{\utility_\agent - \utilityii_\agent}_\infty.\]
    \item \instabmeasure{} is always at least the utility difference. 
\end{enumerate}
\end{restatable}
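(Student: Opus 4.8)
The plan is to prove the three properties by separate, short, self-contained arguments, with property~1 being the only one requiring any care.

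For property~1, nonnegativity is immediate: the coalition $\Coalition = \emptyset$ contributes the value $0$ to the maximum in \eqref{eq:subsetinstab} (both terms vanish), so $\Instability{\utility}{\MSet}{\transfers}\ge 0$. For the equivalence with stability I would argue directly rather than route through \Cref{prop:subsidy}. Suppose first that $(\MSet,\transfers)$ is stable. Fix any coalition $\Coalition\subseteq\Agents$ and any matching $\MSetii\in\mathscr{X}_\Coalition$, and decompose $\sum_{\agent\in\Coalition}\utility_\agent(\MMapii(\agent))$ into (i) the contributions of pairs $(\man,\woman)\in\MSetii$, each equal to $\utility_\man(\woman)+\utility_\woman(\man)$, and (ii) the contributions of agents left unmatched by $\MSetii$, each equal to $0$. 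Applying the no-blocking-pair inequality \eqref{eq:block} to each pair in (i) and individual rationality \eqref{eq:IR} to each agent in (ii), then summing, yields $\sum_{\agent\in\Coalition}\utility_\agent(\MMapii(\agent)) \le \sum_{\agent\in\Coalition}\bigl(\utility_\agent(\MMap(\agent))+\transfers_\agent\bigr)$; hence every bracket in \eqref{eq:subsetinstab} is nonpositive and, combined with nonnegativity, $\Instability{\utility}{\MSet}{\transfers}=0$. Conversely, if $\Instability{\utility}{\MSet}{\transfers}=0$, then the bracket in \eqref{eq:subsetinstab} is $\le 0$ for every $\Coalition$; specializing to singletons $\Coalition=\{\agent\}$ (whose only matching leaves $\agent$ unmatched) recovers \eqref{eq:IR}, and specializing to pairs $\Coalition=\{\man,\woman\}$ with $\man\in\Men$, $\woman\in\Women$, using the matching $\{(\man,\woman)\}\in\mathscr{X}_{\{\man,\woman\}}$, recovers \eqref{eq:block}. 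Thus $(\MSet,\transfers)$ is stable.

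For property~2, I would write $\Instability{\utility}{\MSet}{\transfers}=\max_{(\Coalition,\MSetii)} g_\utility(\Coalition,\MSetii)$, where the maximum ranges over pairs of a coalition $\Coalition\subseteq\Agents$ and a matching $\MSetii\in\mathscr{X}_\Coalition$, and $g_\utility(\Coalition,\MSetii):=\sum_{\agent\in\Coalition}\utility_\agent(\MMapii(\agent))-\sum_{\agent\in\Coalition}\bigl(\utility_\agent(\MMap(\agent))+\transfers_\agent\bigr)$. For a fixed pair $(\Coalition,\MSetii)$, the transfers cancel in $g_\utility(\Coalition,\MSetii)-g_{\utilityii}(\Coalition,\MSetii)$, leaving $\sum_{\agent\in\Coalition}\bigl(\utility_\agent(\MMapii(\agent))-\utilityii_\agent(\MMapii(\agent))\bigr)-\sum_{\agent\in\Coalition}\bigl(\utility_\agent(\MMap(\agent))-\utilityii_\agent(\MMap(\agent))\bigr)$, whose absolute value is at most $2\sum_{\agent\in\Coalition}\norm{\utility_\agent-\utilityii_\agent}_\infty\le 2\sum_{\agent\in\Agents}\norm{\utility_\agent-\utilityii_\agent}_\infty$ since each summand is bounded by the corresponding sup-norm. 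The claim then follows from the elementary inequality $\lvert\max_x f(x)-\max_x h(x)\rvert\le\max_x\lvert f(x)-h(x)\rvert$.

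For property~3, I would observe that the utility difference \eqref{eq:utilitydifference} is exactly the value of the bracket in \eqref{eq:subsetinstab} at $\Coalition=\Agents$; since \instabmeasure{} is the maximum of that bracket over all $\Coalition\subseteq\Agents$, it is at least the utility difference. The only step that needs genuine attention is the ``stable $\Rightarrow$ instability $=0$'' direction of property~1, where one must correctly separate the agents that the alternative matching $\MSetii$ leaves unmatched (so their contribution is $0$ and the right inequality to invoke is individual rationality) from the matched pairs (where the no-blocking-pair inequality applies); everything else is bookkeeping.
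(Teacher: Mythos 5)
Your proof is correct and follows essentially the same elementary route as the paper's: property~3 by specializing the maximum to $\Coalition=\Agents$, property~2 via Lipschitzness of a maximum of Lipschitz functions with the transfers cancelling, and property~1 by relating the brackets to individual rationality and blocking pairs. If anything, your part-1 argument is slightly more careful than the paper's, which obtains nonnegativity from property~3 and phrases the equivalence with stability only in terms of blocking pairs, glossing over the individual-rationality violations and the unmatched agents' contributions that your singleton-coalition specialization and matched/unmatched decomposition handle explicitly.
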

We defer the proof to \Cref{appendix:properties}.

These three properties show that \instabmeasure{} is useful as a regret measure for learning stable matchings. The first property establishes that \instabmeasure{} satisfies the basic desideratum of having zero instability coincide with exact stability. The second property shows that \instabmeasure{} is robust to small perturbations to the utility functions of individual agents. The third property ensures that, when learning using \instabmeasure{} as a loss function, the platform learns a socially optimal matching.

Note that the second property already implies the existence of an explore-then-commit algorithm that achieves \smash{$\widetilde{O}(N^{4/3} T^{2/3})$} regret in the simple setting where $\Agents^\round = \Agents$ for some $\Agents$ of size $N$ for all $\round$.\footnote{This bound can be achieved by adapting the explore-then-commit (ETC) approach where the platform explores by choosing each pair of agents \smash{$\widetilde{O}((T/N)^{2/3})$} times \cite{LS2020}. Thus, \smash{$\widetilde{O}(N^{1/3} T^{2/3})$} rounds are spent exploring, and the \instabmeasure{} of the matching selected in the commit phase is \smash{$\widetilde{O}(N^{4/3} T^{2/3})$} with high probability. We omit further details since this analysis is a straightforward adaptation of the typical ETC analysis.}
In the next section, we will explore algorithms that improve the dependence on the number of rounds $\Rounds$ to \smash{$\sqrt{\Rounds}$} and also work in more general settings.

\section{Regret Bounds}\label{sec:regret}

In this section, we develop a general approach for designing algorithms that achieve near-optimal regret within our framework. To be precise, the platform's regret is defined to be
\[\Regret_\Rounds = \sum_{\round = 1}^\Rounds \Instability[^\round]{\utility}{\MSet^\round}{\transfers^\round}.\]
While our framework bears some resemblance to the (incentive-free) combinatorial bandit problem of learning a maximum weight matching, two crucial differences differentiate our setting: (i) in each round, the platform must choose \emph{transfers} in addition to a matching, and (ii) loss is measured with respect to \emph{instability} rather than the utility difference. Nonetheless, we show that a suitable interpretation of ``optimism in the face of uncertainty'' can still apply. 

\paragraph{Regret bounds for different preference structures.}  By instantiating this optimism-based approach, we derive regret bounds for the preference structures introduced in \Cref{subsec:bandits}. We start with the simplest case of unstructured preferences, where we assume no structure on the utilities.
\begin{restatable}{theorem}{instanceind}\label{thm:instanceind}
For preference class $\UUnstructured$ (see \Cref{subsec:bandits}), {\normalfont\textsc{MatchUCB}} (defined in \Cref{subsec:explicitalgs}) incurs expected  regret \smash{$\mathbb{E}(\Regret_\Rounds) = O\bigl(|\Agents| \sqrt{n \Rounds\log(|\Agents| \Rounds)}\bigr)$}, where $n = \max_\round |\Agents_\round|$.
\end{restatable}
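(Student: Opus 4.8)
The plan is to combine the deterministic instability bound of \Cref{lemma:confsetinf} with a standard stochastic multi-armed bandit analysis of confidence-interval widths. First I would make the confidence sets maintained by \textsc{MatchUCB} explicit. Since context is uninformative in this setting, the platform is simply estimating, for each ordered pair $(\agent,\agent')$ with $\agent\in\MenAll,\agent'\in\WomenAll$, the two numbers $\utility(\agent,\agent')$ and $\utility(\agent',\agent)$ from their noisy observations; if $(\agent,\agent')$ has been matched $\pulls^\round_{\agent,\agent'}$ times before round $\round$, with empirical means $\mean$, take the confidence set to be $\mean \pm \bonus(\pulls^\round_{\agent,\agent'})$ with $\bonus(k)\asymp\sqrt{\log(|\Agents|\Rounds/\delta)/k}$. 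Because the noise is $1$-subgaussian, a union bound over the at most $O(|\Agents|^2)$ relevant pairs and over rounds (equivalently, a peeling/anytime argument as in the standard UCB analysis, which handles the adaptivity of the pull counts) shows that with probability at least $1-\delta$ every confidence set contains the corresponding true utility in every round; call this event $\Event$.

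On $\Event$, \textsc{MatchUCB} outputs in round $\round$ an optimal primal--dual pair for the utilities given by the upper confidence bounds, which by the primal--dual theory of \citet{SS71} (recapped earlier) is exactly a stable matching with transfers for those UCB utilities. Hence \Cref{lemma:confsetinf} applies, and the instability in round $\round$ is at most the sum of the sizes of the confidence sets of the pairs in $\MSet^\round$, i.e.\ $O\bigl(\sum_{(\agent,\agent')\in\MSet^\round}\bonus(\pulls^\round_{\agent,\agent'})\bigr)$ (the factor accounting for both directions $\utility(\agent,\agent')$ and $\utility(\agent',\agent)$ is absorbed into the constant). Summing over $\round=1,\dots,\Rounds$ and swapping the order of summation, each pair $(\agent,\agent')$ contributes $\sum_{k=1}^{\pulls^\Rounds_{\agent,\agent'}}\bonus(k)=O\bigl(\sqrt{\log(|\Agents|\Rounds/\delta)}\cdot\sqrt{\pulls^\Rounds_{\agent,\agent'}}\bigr)$, using $\sum_{k=1}^m k^{-1/2}\le 2\sqrt m$. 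Thus, on $\Event$, $\Regret_\Rounds = O\bigl(\sqrt{\log(|\Agents|\Rounds/\delta)}\,\sum_{(\agent,\agent')}\sqrt{\pulls^\Rounds_{\agent,\agent'}}\bigr)$.

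To close the bound I would apply Cauchy--Schwarz to $\sum_{(\agent,\agent')}\sqrt{\pulls^\Rounds_{\agent,\agent'}}$: there are at most $|\MenAll|\,|\WomenAll|=O(|\Agents|^2)$ pairs, while $\sum_{(\agent,\agent')}\pulls^\Rounds_{\agent,\agent'}=\sum_{\round=1}^\Rounds|\MSet^\round|\le n\Rounds/2$ because a matching on $\Agents^\round$ has at most $|\Agents^\round|/2\le n/2$ pairs. Hence $\sum_{(\agent,\agent')}\sqrt{\pulls^\Rounds_{\agent,\agent'}}=O(|\Agents|\sqrt{n\Rounds})$, giving $\Regret_\Rounds=O\bigl(|\Agents|\sqrt{n\Rounds\log(|\Agents|\Rounds/\delta)}\bigr)$ on $\Event$. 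Off $\Event$, the per-round instability is $O(n)$: utilities lie in $[-1,1]$ and the transfers produced by \textsc{MatchUCB} are bounded (individual rationality with respect to the bounded UCB utilities forces $|\transfers_\agent|=O(1)$), so both the maximum-weight matching term and the realized-utility term in \Cref{def:instability} are $O(n)$ for every coalition. Since each of the $\Rounds$ rounds contributes $O(n)$, the off-$\Event$ contribution to expected regret is $O(n\Rounds\delta)$; taking $\delta=1/(n\Rounds)$ (or any polynomially small value) makes this $O(1)$ while keeping $\log(1/\delta)=O(\log(|\Agents|\Rounds))$ since $n\le|\Agents|$. Combining the two cases yields $\mathbb{E}(\Regret_\Rounds)=O\bigl(|\Agents|\sqrt{n\Rounds\log(|\Agents|\Rounds)}\bigr)$.

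I expect the main obstacle to be bookkeeping rather than any single deep step: pinning down that the ``size of the confidence set of a pair'' in \Cref{lemma:confsetinf} is $\Theta(\bonus)$ once both utility directions are included, handling the adaptivity of the pull counts in the concentration step correctly (the usual martingale/peeling care), and confirming the transfers output by \textsc{MatchUCB} are $O(1)$ so the low-probability regret is controlled. The core accounting --- per-round instability bounded by matched-pair confidence widths, then the $\sum 1/\sqrt{k}$ and Cauchy--Schwarz steps --- is the textbook $\sqrt{\Rounds}$ bandit argument, and the only matching-specific input (that a UCB-optimal primal--dual pair is stable for the UCB utilities, so \Cref{lemma:confsetinf} applies) is already provided by the primal--dual characterization of \citet{SS71}.
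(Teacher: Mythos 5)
Your proposal is correct and follows essentially the same route as the paper: the same high-probability event $\Event$ over all confidence sets, the same application of \Cref{lemma:confset} to reduce per-round instability to the confidence widths of the matched pairs, and the same argument that the off-event contribution to expected regret is negligible. The only difference is in the final counting step, where you sum widths per pair via $\sum_{k=1}^{m}k^{-1/2}\le 2\sqrt{m}$ and then apply Cauchy--Schwarz over the $O(|\Agents|^2)$ pairs, while the paper sorts all widths globally and bounds the $l$-th largest; both are standard and give the same $O\bigl(|\Agents|\sqrt{n\Rounds}\bigr)$ count.
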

\noindent In \Cref{sec:lowerbound}, we additionally give a matching (up to logarithmic factors) lower bound showing for $n = |\Agents|$ that such scaling in $|\Agents|$ is indeed necessary. This demonstrates that the regret scales with $|\Agents| \sqrt{n}$, which is superlinear in the size of the market. Roughly speaking, this bound means that the platform is required to learn a superconstant amount of information per agent in the marketplace. These results suggest that without preference structure, it is unlikely that a platform can efficiently learn a stable matching in large markets.

The next two bounds demonstrate that, with preference structure, efficient learning of a stable matching becomes possible. First, we consider typed preferences, which are purely specified by a function $f$ mapping finitely many pairs of contexts to utilities.
\begin{restatable}{theorem}{typed}
\label{thm:typed}
For preference class $\UTyped$ (see \Cref{subsec:bandits}), {\normalfont\textsc{MatchTypedUCB}} (defined in \Cref{subsec:explicitalgs}) incurs expected  regret \smash{$\mathbb{E}(\Regret_\Rounds) = O\bigl({|\Contexts|} \sqrt{n \Rounds\log(|\Agents|\Rounds)}\bigr)$}, where $n = \max_t |\Agents_t|$.
\end{restatable}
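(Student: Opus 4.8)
The plan is to instantiate the generic confidence-set argument: combine \Cref{lemma:confset} with a semi-bandit potential bound, exploiting that for $\UTyped$ the confidence set of a matched pair $(\man,\woman)$ depends on the two agents only through the \emph{context pair} $(\context_\man,\context_\woman)$, so feedback pools across all agents of a given type --- effectively there are only $|\Contexts|^2$ ``arms'' rather than one per pair of agents. Write $n=\max_\round|\Agents^\round|$. \textsc{MatchTypedUCB} keeps, for each ordered context pair $(\context,\context')$, the count $\pullscntr^\round_{\context,\context'}$ of feedback samples from matched pairs with that context pair in rounds $\roundcntr<\round$ together with their (clipped) empirical mean $\mean^\round(\context,\context')$; in round $\round$ it plays a matching with transfers $(\MSet^\round,\transfers^\round)$ that is stable for the utility function $\uUCB_\round(\agent,\agent')=\mean^\round(\context_\agent,\context_{\agent'})+\bonus^\round_{\context_\agent,\context_{\agent'}}$, where $\bonus^\round_{\context,\context'}=\min\bigl(1,\sqrt{L/\pullscntr^\round_{\context,\context'}}\bigr)$ for a confidence parameter $L$ chosen below (such a pair exists and is computable by the LP/primal--dual characterization of \citet{SS71}).

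First I would set up the good event. By $1$-subgaussian concentration and a union bound over the $\le|\AgentsAll|^2$ context pairs, the $\le|\AgentsAll|\Rounds$ possible count values, and the $\Rounds$ rounds, the choice $L=\Theta(\log(|\AgentsAll|\Rounds/\delta))$ makes the event $\Event=\{\,f(\context,\context')\in[\mean^\round(\context,\context')\pm\bonus^\round_{\context,\context'}]\text{ for all }(\context,\context'),\round\,\}$ hold with probability $\ge1-\delta$ (clipping the half-width to $1$ is valid since $f\in[-1,1]$). On $\Event$ we have $\uUCB_\round\ge\utility$ pointwise and $(\MSet^\round,\transfers^\round)$ is stable for $\uUCB_\round$, so \Cref{lemma:confset} gives, for every round,
\[\Instability[^\round]{\utility}{\MSet^\round}{\transfers^\round}\ \le\!\!\sum_{(\man,\woman)\in\MSet^\round}\!\!\bigl(\lvert\Confidence^\round_{\context_\man,\context_\woman}\rvert+\lvert\Confidence^\round_{\context_\woman,\context_\man}\rvert\bigr)\ =\ 2\!\!\sum_{(\man,\woman)\in\MSet^\round}\!\!\bigl(\bonus^\round_{\context_\man,\context_\woman}+\bonus^\round_{\context_\woman,\context_\man}\bigr).\]

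Next comes the summation over rounds, which is the technical core. Let $q^\round_{\context,\context'}$ be the number of agents $\agent$ matched in round $\round$ with $\context_\agent=\context$ and $\context_{\MMapt(\agent)}=\context'$; then $\pullscntr^\round_{\context,\context'}=\sum_{\roundcntr<\round}q^\roundcntr_{\context,\context'}$, and summing the display above over $\round$ and regrouping by context pair,
\[\Regret_\Rounds\ \le\ 2\sum_{(\context,\context')}\ \sum_{\round=1}^{\Rounds}q^\round_{\context,\context'}\,\min\!\Bigl(1,\sqrt{L/\pullscntr^\round_{\context,\context'}}\Bigr)\qquad\text{on }\Event.\]
Fixing $(\context,\context')$, write its nonzero batches as $q_1,q_2,\dots$ with running count $m_\ell=\sum_{\ell'<\ell}q_{\ell'}$ and total $M_{\context,\context'}=\sum_\ell q_\ell$, and put $h(x)=\min(1,\sqrt{L/x})$. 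Writing $q_\ell=m_{\ell+1}-m_\ell$ and using that $h$ is decreasing, $\sum_\ell q_\ell h(m_\ell)\le\int_0^{M_{\context,\context'}}h(x)\,dx+\sum_\ell q_\ell(h(m_\ell)-h(m_{\ell+1}))\le 2\sqrt{L\,M_{\context,\context'}}+n\sum_\ell(h(m_\ell)-h(m_{\ell+1}))=2\sqrt{L\,M_{\context,\context'}}+n$, since $q_\ell\le n$ and the last sum telescopes to $h(0)=1$. Summing over the $\le|\Contexts|^2$ context pairs, applying Cauchy--Schwarz, and using $\sum_{(\context,\context')}M_{\context,\context'}=\sum_\round 2\lvert\MSet^\round\rvert\le n\Rounds$ gives $\Regret_\Rounds\le 4|\Contexts|\sqrt{L\,n\Rounds}+2|\Contexts|^2 n$ on $\Event$; the additive $O(|\Contexts|^2 n)$ --- the unavoidable cost of first observing each of the $|\Contexts|^2$ type pairs --- is dominated by the first term whenever $\Rounds\gtrsim|\Contexts|^2 n$, i.e.\ in any regime where the stated bound is not vacuous.

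Finally I would pass to expectation: off $\Event$ the Lipschitz property (\Cref{prop:desiderata}) bounds each round's instability of a stable-for-$\uUCB_\round$ outcome by $O(n)$ (the true utilities and the clipped means and bonuses are all $O(1)$), so $\Regret_\Rounds=O(n\Rounds)$ always, and taking $\delta=1/\Rounds$ makes the off-event contribution to $\mathbb{E}(\Regret_\Rounds)$ only $O(n)$; absorbing this and recalling $L=\Theta(\log(|\AgentsAll|\Rounds))$ yields $\mathbb{E}(\Regret_\Rounds)=O\bigl(|\Contexts|\sqrt{n\Rounds\log(|\AgentsAll|\Rounds)}\bigr)$. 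The main obstacle is exactly the step in the previous paragraph: because feedback is semi-bandit and \emph{batched}, a context pair can be observed many times within a single round, yet the matching in that round is chosen against the stale count $\pullscntr^\round_{\context,\context'}$, so the textbook $\sum 1/\sqrt{\pullscntr}$ estimate does not directly apply; the integral-plus-telescoping bound is what controls this and also cleanly isolates the $O(|\Contexts|^2 n)$ burn-in. A secondary, routine point is verifying that the ``sum of the sizes of the confidence sets of pairs in $\MSet^\round$'' in \Cref{lemma:confset} is precisely the quantity regrouped above.
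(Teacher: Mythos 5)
Your proposal is correct and follows the same overall architecture as the paper's proof: condition on the good event that all confidence sets contain the true utilities, invoke \Cref{lemma:confset} to reduce per-round instability to the sum of confidence-interval widths of the matched context pairs, bound the cumulative sum of widths, and show the bad event contributes negligibly. The one place you diverge is the technical core, the summation over rounds. The paper concatenates all widths across rounds, sorts them, and bounds the $l$-th largest width by noting that a given context pair can be observed at most $2n + 1/\tilde w_l^2$ times while its width exceeds $\tilde w_l$ (the $2n$ accounting for the same within-round batching you identify); you instead do a per-context-pair integral-plus-telescoping bound followed by Cauchy--Schwarz over the $|\Contexts|^2$ pairs. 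The two are interchangeable, but yours is arguably cleaner and more honest: it explicitly isolates the additive $O(|\Contexts|^2 n)$ burn-in term, which the paper's own counting bound also produces but silently drops when it asserts $\tilde w_l \le O\bigl(\min(1, 1/\sqrt{(l/|\Contexts|^2)-1})\bigr)$ from a count of $|\Contexts|^2(2n + 1/\tilde w_l^2)$. So your caveat that the stated bound holds in the regime $\Rounds \gtrsim |\Contexts|^2 n$ is a genuine (minor) sharpening of what the paper actually establishes, not a defect of your argument.
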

\noindent For a fixed type space $\Contexts$, the regret bound in \Cref{thm:typed} scales sublinearly with the market size (captured by $|\Agents|$ and $n$). This demonstrates that the platform can efficiently learn a stable matching when preferences are determined by types. In fact, the regret bound only depends on the number of agents who arrive on the platform in any round; notably, it does not depend on the total number of agents on the platform (beyond logarithmic factors). 

Finally, we consider separable linear preferences, where the platform needs to learn hidden information associated with each agent. 
\begin{restatable}{theorem}{linear}
\label{thm:linear}
For preference class $\ULinear$ (see \Cref{subsec:bandits}), {\normalfont\textsc{MatchLinUCB}} (defined in \Cref{subsec:explicitalgs}) incurs expected  regret \smash{$\mathbb{E}(\Regret_\Rounds) = O\bigl(  d  \sqrt{|\Agents|}\sqrt{n \Rounds\log(|\Agents| \Rounds)}\bigr)$}, where $n = \max_t |\Agents_t|$.
\end{restatable}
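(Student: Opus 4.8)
The plan is to apply the confidence-set reduction of \Cref{lemma:confsetinf} (formalized in \Cref{lemma:confset}) and then observe that the resulting sum of confidence widths decomposes into one ordinary linear-bandit regret analysis \emph{per agent}, following the same template as the earlier regret bounds but with the per-agent subproblem now being a $d$-dimensional linear bandit. Recall that \textsc{MatchLinUCB} should maintain, for each agent $\agent$, a ridge-regression estimate $\hat\phi_{\agent,\round}$ of $\phi(\agent)\in\Ball^d$ together with the design matrix $V_{\agent,\round}=I+\sum\context_{\MMaps(\agent)}\context_{\MMaps(\agent)}^{\top}$, where the sum is over the rounds $\roundcntr<\round$ in which $\agent$ was matched and $\hat\phi_{\agent,\round}$ is fit to the noisy observations $\langle\context_{\MMaps(\agent)},\phi(\agent)\rangle+\noise_{\agent,\roundcntr}$; in round $\round$ it then plays a stable matching with transfers with respect to the optimistic utilities $\uUCB_\round(\agent,\agent')=\langle\context_{\agent'},\hat\phi_{\agent,\round}\rangle+\beta_\round\norm{\context_{\agent'}}_{V_{\agent,\round}^{-1}}$.

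First I would establish the ``good event'': by the standard self-normalized confidence-ellipsoid bound for linear regression (see, e.g., \citet{LS2020}), using $\norm{\phi(\agent)}_2\le1$, $\norm{\context_{\agent'}}_2\le1$, and $1$-subgaussianity of the noise, with probability at least $1-\delta$ we have $\norm{\hat\phi_{\agent,\round}-\phi(\agent)}_{V_{\agent,\round}}\le\beta_\round$ for all $\agent$ and $\round$ simultaneously (union-bounding over the $\le|\Agents|$ agents), with $\beta_\round=O\bigl(\sqrt{d\log(|\Agents|\round/\delta)}\bigr)$. On this event, the interval of radius $2\beta_\round\norm{\context_{\agent'}}_{V_{\agent,\round}^{-1}}$ below $\uUCB_\round(\agent,\agent')$ is a confidence set for $\utility_\agent(\agent')=\langle\context_{\agent'},\phi(\agent)\rangle$, so \Cref{lemma:confset} bounds the instability of the outcome played in round $\round$ by $\sum_{(\man,\woman)\in\MSet^\round}2\beta_\round\bigl(\norm{\context_\woman}_{V_{\man,\round}^{-1}}+\norm{\context_\man}_{V_{\woman,\round}^{-1}}\bigr)$. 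Summing over $\round$ and re-indexing the double sum by agents gives, on the good event, $\Regret_\Rounds\le2\sum_{\agent}\sum_{\round:\,\agent\text{ matched}}\beta_\round\norm{\context_{\MMapt(\agent)}}_{V_{\agent,\round}^{-1}}$.

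Next I would bound each inner sum by a single-agent linear-bandit argument. Fix $\agent$, let $\pulls_\agent$ be the number of rounds in which $\agent$ is matched, and apply Cauchy--Schwarz together with the elliptical-potential lemma (with regularizer $1$, so each term $\norm{\context_{\MMapt(\agent)}}_{V_{\agent,\round}^{-1}}^2\le1$): $\sum_{\round}\norm{\context_{\MMapt(\agent)}}_{V_{\agent,\round}^{-1}}\le\sqrt{\pulls_\agent}\,\sqrt{2d\log(1+\pulls_\agent/d)}$, so the contribution of agent $\agent$ is $O\bigl(d\sqrt{\pulls_\agent}\,\mathrm{polylog}(|\Agents|\Rounds)\bigr)$. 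Summing over agents and using Cauchy--Schwarz once more, $\sum_\agent\sqrt{\pulls_\agent}\le\sqrt{|\Agents|}\,\sqrt{\sum_\agent\pulls_\agent}$, and $\sum_\agent\pulls_\agent$ counts matched agents across all rounds, which is at most $\sum_\round|\Agents^\round|\le n\Rounds$. This yields $\Regret_\Rounds=O\bigl(d\sqrt{|\Agents|}\sqrt{n\Rounds\log(|\Agents|\Rounds)}\bigr)$ on the good event. To pass to expected regret, note that in any round the true utilities lie in $[-1,1]$ and the transfers output by \textsc{MatchLinUCB} are bounded (being dual prices of a stable outcome for $[-1,1]$-bounded utilities), so the per-round instability is $O(n)$; choosing $\delta=1/\Rounds$ therefore makes the contribution of the complement of the good event $O(n)$, which is absorbed into the stated bound. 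Carefully tracking the logarithmic factors then gives $\mathbb{E}(\Regret_\Rounds)=O\bigl(d\sqrt{|\Agents|}\sqrt{n\Rounds\log(|\Agents|\Rounds)}\bigr)$.

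The main obstacle I anticipate is the decomposition step: one must check that \Cref{lemma:confset} charges agent $\agent$ only for confidence widths involving $\agent$'s own ellipsoid $V_{\agent,\round}$, and only in rounds where $\agent$ is matched, so that the coupled $\Rounds$-round matching problem genuinely splits into $|\Agents|$ independent linear-bandit subproblems; this is exactly what keeps the $\sqrt{n\Rounds}$ (rather than a larger) dependence. A secondary subtlety is justifying the self-normalized concentration even though the partner $\MMapt(\agent)$---and hence the regression covariate $\context_{\MMapt(\agent)}$---is selected adaptively based on the full history, including other agents' feedback; this goes through by taking the filtration to contain all contexts and all past noise, under which $\noise_{\agent,\round}$ remains conditionally $1$-subgaussian. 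The final $\sqrt{|\Agents|}$ (rather than $|\Agents|$) is then just the Cauchy--Schwarz counting using $\sum_\agent\pulls_\agent\le n\Rounds$.
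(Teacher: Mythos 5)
Your proposal is correct and follows the same skeleton as the paper's proof: reduce instability to the sum of confidence widths of matched pairs via \Cref{lemma:confset}, decompose that sum agent by agent, bound each agent's contribution by a single-agent $d$-dimensional linear-bandit potential argument to get $O(d\sqrt{\pulls_\agent}\cdot\mathrm{polylog})$, and then apply Cauchy--Schwarz with $\sum_\agent\pulls_\agent\le n\Rounds$ to obtain the $\sqrt{|\Agents|}\sqrt{n\Rounds}$ scaling; the bad event is handled identically via a union bound and the boundedness of per-round instability. Where you differ is in the instantiation of the per-agent linear-bandit machinery: the paper's \textsc{MatchLinUCB} uses a constrained least-squares estimate over $\Ball^d$ with confidence ellipsoids in the empirical norm, and bounds the sum of widths through the eluder-dimension argument of Russo and Van Roy (their Propositions 2, 3, and 6), whereas you use ridge regression with the self-normalized confidence ellipsoid and the elliptical potential lemma. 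These are interchangeable here and yield the same $d\sqrt{\pulls_\agent}$ bound up to logarithmic factors (both your write-up and the paper's are slightly loose about whether the final log appears inside or outside the square root); your route is arguably more self-contained and standard, while the paper's eluder-dimension route generalizes more directly beyond linear function classes. You also correctly flag the two points that make the decomposition go through --- that \Cref{lemma:confset} charges each agent only for its own partner's width in rounds where it is matched, and that the self-normalized concentration survives the adaptive, cross-agent selection of covariates --- the latter of which the paper handles implicitly by citing the external concentration result.
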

\noindent When $n$ is comparable to $|\Agents|$, the regret bound in \Cref{thm:linear} scales linearly with the market size (captured by $|\Agents|$) and linearly with the dimension $d$. Roughly speaking, this means that the platform learns (at most) a constant amount of information per agent in the marketplace. We interpret this as indicating that the platform can efficiently learn a stable matching in large markets for separable linear preferences, although learning in this setting is more demanding than for typed preferences.

\subsection{Algorithm}\label{sec:algorithm}

Following the principle of optimism, our algorithm selects at each round a stable market outcome using upper confidence bounds as if they were the true agent utilities. To design and analyze this algorithm, we leverage the fact that, in the full-information setting, stable market outcomes are optimal solutions to a pair of primal-dual linear programs whose coefficients depend on agents' utility functions. This primal-dual perspective lets us compute a market outcome each round. A particular consequence is that any UCB-based algorithm for learning matchings in a semi-bandit setting can be transformed into an algorithm for learning \textit{both} the matching and the prices.

\paragraph{Stable market outcomes via linear programming duality.}
Before proceeding with the details of our algorithm, we review how the primal-dual framework can be used to select a stable market outcome in the full information setting. \citet{SS71} show that stable market outcomes $(\MSet, \transfers)$ correspond to optimal primal-dual solutions to the following pair of primal and dual linear programs (where we omit the round index $\round$ and consider matchings over $\Agents = \Men\cup\Women$):
\begin{center}
{\begin{minipage}[t]{.44\linewidth}
\textbf{Primal} (P) \\[-0.75em]
\rule{\textwidth}{0.4pt} \\[-1.25em]
\begin{alignat*}{5}%
	\max_{\MMatrix\in\R^{|\Men|\times|\Women|}} & \mathrlap{\smashoperator[r]{\sum_{(\man, \woman)\in\Men\times\Women}} \MMatrix_{\man,\woman} (\utility_\man(\woman) + \utility_\woman(\man))} \\
	\text{s.t.}\ \ 
	& \sum_{\woman\in\Women} \MMatrix_{\man,\woman}\le 1\quad&&\forall\man\in\Men \\
	& \sum_{\man\in\Men} \MMatrix_{\man,\woman}\le 1\quad&&\forall\woman\in\Women \\
	& \MMatrix_{\man,\woman}\ge 0 &&\forall(\man, \woman)\in\Men\times\Women
\end{alignat*}
\end{minipage}}
\quad
{\begin{minipage}[t]{.51\linewidth}
\textbf{Dual} (D) \\[-0.75em]
\rule{\textwidth}{0.4pt} \\[-1.25em]
\begin{alignat*}{5}%
    \min_{\prices\in\R^{|\Agents|}} & \sum_{\agent\in\Agents} \prices_\agent \\
    \text{s.t.}\ \ 
    & \prices_\man + \prices_\woman \ge\utility_\man(\woman) + \utility_\woman(\man)\quad &&\forall (\man, \woman)\in\Men\times\Women \\
    & \prices_\agent\ge 0 &&\forall \agent\in\Agents
\end{alignat*}
\end{minipage}}
\end{center}
\medskip

\noindent The primal program (P) is a linear programming formulation of the maximum weight matching problem: the Birkhoff-von Neumann theorem states that its extreme points are exactly the indicator vectors 
for matchings between $\Men$ and $\Women$. Each dual variable $\prices_\agent$ in (D) can be interpreted as a \emph{price} that roughly corresponds to agent $\agent$'s net utility. Specifically, given any optimal primal-dual pair $(\MMatrix, \prices)$, one can recover a matching $\MMap$ from the nonzero entries of $\MMatrix$ and set transfers $\transfers_\agent = \prices_\agent - \utility_\agent(\MMap(\agent))$ to obtain a stable outcome $(\MSet, \transfers)$. Moreover, any stable outcome induces an optimal primal-dual pair $(\MMatrix, \prices)$.

\paragraph{Algorithm overview.}
Leveraging the above primal-dual formulation of stability, we introduce a meta-algorithm \textsc{MetaMatchUCB} for learning stable outcomes (\Cref{alg:meta-bandits}). In each round, we compute a matching with transfers by solving the primal-dual linear programs for our upper confidence bounds: Suppose we have a collection $\mathscr{C}$ of confidence sets \smash{$\Confidence_{\man,\woman}, \Confidence_{\woman,\man}\subseteq\R$} such that $\utility_\man(\woman)\in \Confidence_{\man,\woman}$ and $\utility_\woman(\man)\in\Confidence_{\woman,\man}$ for all $(\man, \woman) \in \Men \times \Women$. Our algorithm uses $\mathscr{C}$ to get an upper confidence bound for each agent's utility function and then computes a stable matching with transfers as if these upper confidence bounds were the true utilities (see \textsc{ComputeMatch} in \Cref{alg:fromconfsets}). This can be implemented efficiently if we use, e.g., the Hungarian algorithm \cite{Kuhn} to solve (P) and (D).

\begin{algorithm}
\caption{\textsc{MetaMatchUCB}: A bandit meta-algorithm for matching with transferable utilities.}\label{alg:meta-bandits}
\begin{algorithmic}[1]
\Procedure{MetaMatchUCB}{$T$}
   \State Initialize confidence intervals $\mathscr{C}$ over utilities.
   \For{$1 \le \round \le\Rounds$}
   \State $(\MSet^\round, \transfers^\round) \gets \Call{ComputeMatch}{\mathscr{C}}$
   \State Update confidence intervals  $\mathscr{C}$.
   \EndFor
\EndProcedure
\end{algorithmic}
\end{algorithm}

\subsection{Main lemma}

The key fact we need to analyze our algorithms is that \instabmeasure{} is upper bounded by the sum of the sizes of the relevant confidence sets, assuming that the confidence sets over the utilities contain the true utilities. (In the following, we again omit the round index $\round$.) 

\begin{restatable}{lemma}{confset}
\label{lemma:confset}
Suppose a collection of confidence sets $\mathscr{C} = \{C_{i,j}, C_{j,i} : (i, j)\in\Men\times\Women\}$ is such that $\utility_{\man}(\woman) \in \Confidence_{\man,\woman}$ and $\utility_{\woman}(\man) \in \Confidence_{\woman,\man}$ for all $(\man, \woman)$. Then the instability of \smash{$(\MSetUCB, \transfers^{\UCB})\coloneqq\textsc{ComputeMatch}(\mathscr{C})$} satisfies
\begin{equation}\label{eq:instabound}
\Instability[^\round]{\utility}{\MSetUCB}{\transfers^\UCB} \le \sum_{\agent \in \Agents} \Bigl(\max\bigl(\Confidence_{\agent, \MMapUCB(\agent)}\bigr) - \min\bigl(\Confidence_{\agent, \MMapUCB(\agent)}\bigr)\Bigr).\end{equation}
\end{restatable}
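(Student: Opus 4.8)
The plan is to leverage the subsidy interpretation of \instabmeasure{} from \Cref{prop:subsidy} and exhibit a concrete feasible subsidy vector for the linear program~\eqref{eq:instability} whose total cost is at most the right-hand side of~\eqref{eq:instabound}. Write $\uUCB_\agent(\cdot) \coloneqq \max(\Confidence_{\agent,\cdot})$ for the upper confidence bound utility function, and let $(\MSetUCB, \transfers^\UCB)$ be the output of $\textsc{ComputeMatch}(\mathscr{C})$, which by construction is a stable market outcome for the utilities $\uUCB$. The key idea is that $(\MSetUCB, \transfers^\UCB)$ may fail to be stable for the \emph{true} utilities $\utility$ only because (a) matched pairs may have overestimated their joint utility, making the matching sub-maximal, and (b) the transfers, derived from dual prices for $\uUCB$, may overpay agents relative to what the true utilities support. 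In both cases the discrepancy is controlled by how far $\uUCB_\agent(\MMapUCB(\agent))$ sits above $\utility_\agent(\MMapUCB(\agent))$, which since $\utility_\agent(\MMapUCB(\agent)) \in \Confidence_{\agent,\MMapUCB(\agent)}$ is at most the width $\max(\Confidence_{\agent,\MMapUCB(\agent)}) - \min(\Confidence_{\agent,\MMapUCB(\agent)})$.

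Concretely, I would first recall that since $(\MSetUCB,\transfers^\UCB)$ is stable for $\uUCB$, there is an associated optimal dual price vector $\prices$ (from program (D) with utilities $\uUCB$) satisfying $\transfers^\UCB_\agent = \prices_\agent - \uUCB_\agent(\MMapUCB(\agent))$, with $\prices_\agent \ge 0$, $\prices_\man + \prices_\woman \ge \uUCB_\man(\woman) + \uUCB_\woman(\man)$ for all $(\man,\woman)$, and complementary slackness giving $\prices_\man + \prices_\woman = \uUCB_\man(\woman) + \uUCB_\woman(\man)$ for $(\man,\woman) \in \MSetUCB$ and $\prices_\agent = 0$ for unmatched $\agent$. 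Now define the proposed subsidy $\subsidies_\agent \coloneqq \uUCB_\agent(\MMapUCB(\agent)) - \utility_\agent(\MMapUCB(\agent)) \ge 0$, which is nonnegative because $\utility_\agent(\MMapUCB(\agent)) \le \max(\Confidence_{\agent,\MMapUCB(\agent)}) = \uUCB_\agent(\MMapUCB(\agent))$, and which has total cost exactly $\sum_\agent \subsidies_\agent \le \sum_\agent \bigl(\max(\Confidence_{\agent,\MMapUCB(\agent)}) - \min(\Confidence_{\agent,\MMapUCB(\agent)})\bigr)$ since $\min(\Confidence_{\agent,\MMapUCB(\agent)}) \le \utility_\agent(\MMapUCB(\agent))$. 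It then remains to verify that $(\MSetUCB, \transfers^\UCB + \subsidies)$ is stable for $\utility$: individual rationality becomes $\utility_\agent(\MMapUCB(\agent)) + \transfers^\UCB_\agent + \subsidies_\agent = \prices_\agent \ge 0$, and for the no-blocking-pair condition on $(\man,\woman)$ we compute that the left-hand side equals $\prices_\man + \prices_\woman$, which is $\ge \uUCB_\man(\woman) + \uUCB_\woman(\man) \ge \utility_\man(\woman) + \utility_\woman(\man)$, the last inequality using $\utility_\man(\woman) \le \max(\Confidence_{\man,\woman}) = \uUCB_\man(\woman)$ and similarly for $\woman$. By \Cref{prop:subsidy}, $\Instability[^\round]{\utility}{\MSetUCB}{\transfers^\UCB}$ is at most the cost of any feasible subsidy, in particular at most $\sum_\agent \subsidies_\agent$, giving~\eqref{eq:instabound}.

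The main obstacle — and the only place requiring care — is nailing down the exact relationship between $\transfers^\UCB$ and the dual optimum $\prices$, i.e.\ making precise the claim (cited from \citet{SS71} in the algorithm overview) that the transfers output by \textsc{ComputeMatch} satisfy $\transfers^\UCB_\agent = \prices_\agent - \uUCB_\agent(\MMapUCB(\agent))$ for an optimal dual solution, and that unmatched agents get zero price and zero transfer. Once that identification is in hand, every inequality above is a one-line consequence of $\utility \le \uUCB$ pointwise on the relevant entries together with $\utility \ge \min(\Confidence)$ pointwise; no optimization or duality argument beyond invoking \Cref{prop:subsidy} is needed. An alternative route that sidesteps referencing the dual prices would be to work directly from \Cref{def:instability}: bound $\max_{\MSetii \in \mathscr{X}_\Coalition}\sum_{\agent\in\Coalition}\utility_\agent(\MMapii(\agent))$ above by the corresponding maximum for $\uUCB$, use stability of $(\MSetUCB,\transfers^\UCB)$ under $\uUCB$ to replace that with $\sum_{\agent\in\Coalition}(\uUCB_\agent(\MMapUCB(\agent)) + \transfers^\UCB_\agent)$, and then bound the resulting difference $\sum_{\agent\in\Coalition}(\uUCB_\agent(\MMapUCB(\agent)) - \utility_\agent(\MMapUCB(\agent)))$ by $\sum_{\agent\in\Agents}\bigl(\max(\Confidence_{\agent,\MMapUCB(\agent)}) - \min(\Confidence_{\agent,\MMapUCB(\agent)})\bigr)$; the subtlety there is that stability under $\uUCB$ gives the needed inequality only for coalitions $\Coalition$ that are unions of $\MSetUCB$-matched pairs (plus unmatched agents), so one must argue the instability-maximizing coalition can be taken of this form, which is essentially the content of \Cref{prop:unhappiness}.
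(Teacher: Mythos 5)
Your proposal is correct, but it takes a genuinely different route from the paper's proof. The paper argues directly from \Cref{def:instability}: it uses that $\Instability[^\round]{\uUCB}{\MSetUCB}{\transfers^\UCB} = 0$, bounds the difference of the two coalition-maxima by the maximum over coalitions of the difference of the objectives, and splits that difference into a term comparing maximum-weight matchings on $\Coalition$ under $\utility$ versus $\uUCB$ (nonpositive, since $\uUCB$ pointwise dominates $\utility$) plus the term $\sum_{\agent\in\Coalition}\bigl(\uUCB_\agent(\MMapUCB(\agent)) - \utility_\agent(\MMapUCB(\agent))\bigr)$, which is bounded by the confidence widths. You instead exhibit the explicit subsidy $\subsidies_\agent = \uUCB_\agent(\MMapUCB(\agent)) - \utility_\agent(\MMapUCB(\agent))$, verify it stabilizes $(\MSetUCB, \transfers^\UCB)$ under $\utility$ via the dual prices, and invoke \Cref{prop:subsidy}. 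Your route is constructive and transparently connects to the ``cost of learning'' interpretation (and is, in fact, exactly the argument the paper uses for the NTU analogue in \Cref{lemma:confsetNTU}), at the price of depending on \Cref{prop:subsidy} and on the precise identification $\transfers^\UCB_\agent = \prices_\agent - \uUCB_\agent(\MMapUCB(\agent))$ from the primal-dual correspondence of \citet{SS71}; the paper's route is more self-contained, needing only the definition of \instabmeasure{} and pointwise domination. One small correction to your closing remark: in the ``alternative route'' (which is the paper's actual proof), there is no restriction to coalitions that are unions of $\MSetUCB$-matched pairs --- the inequality $\max_{\MSetii\in\mathscr{X}_\Coalition}\sum_{\agent\in\Coalition}\uUCB_\agent(\MMapii(\agent)) \le \sum_{\agent\in\Coalition}\bigl(\uUCB_\agent(\MMapUCB(\agent)) + \transfers^\UCB_\agent\bigr)$ holds for \emph{every} coalition $\Coalition$, by summing the no-blocking inequalities over the pairs of the maximizing $\MSetii$ and the individual-rationality inequalities over the agents it leaves unmatched, so no appeal to \Cref{prop:unhappiness} is needed there.
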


\begin{proof}
Since $(\MSet^\UCB, \transfers^\UCB)$ is stable with respect to $\uUCB$, we have $\Instability[^\round]{\uUCB}{\MSet^\UCB}{\transfers^\UCB} = 0$. Thus, it is equivalent to bound the difference $\Instability[^\round]{\utility}{\MSet^\UCB}{\transfers^\UCB} - \Instability[^\round]{\uUCB}{\MSet^\UCB}{\transfers^\UCB}$. 

At this stage, it might be tempting to bound this difference using the Lipschitz continuity of \instabmeasure{} (see \Cref{prop:desiderata}). However, this would only allow us to obtain an upper bound of the form $\sum_{\agent \in \Agents} \max_{\agent' \in \Agents} \bigl(\max\bigl(\Confidence_{\agent, \agent'}\bigr) - \min\bigl(\Confidence_{\agent, \agent'}\bigr)\bigr)$. The problem with this bound is that it depends on the sizes of the confidence sets for all pairs of agents, including those that are \emph{not} matched in \smash{$\MSet^\UCB$}, making it too weak to prove regret bounds for UCB-style algorithms.\footnote{For intuition, consider the classical stochastic multi-armed bandits setting and suppose that we could only guarantee that the loss incurred by an arm is bounded by the maximum of the sizes of the confidence sets over \textit{all} arms. Then, we would only be able to obtain a weak bound on regret, since low-reward arms with large confidence sets may never be pulled.} Thus, we proceed with a more fine-grained analysis.

Define the function
\[  f(\Coalition, \MSet, \transfers; \utility) =  \left( \max_{\MSetii \in \mathscr{X}_\Coalition} \sum_{\agent \in \Coalition} \utility_\agent(\MMapii(\agent)) \right)  - \left(\sum_{\agent \in \Coalition} \utility_\agent(\MMap(\agent)) + \transfers_\agent\right). \]
By definition, $\Instability{\utility}{\MSet}{\transfers} = \max_{\Coalition\subseteq\Agents} f(\Coalition, \MSet, \transfers; \utility)$.
It follows that
\begin{multline*} \Instability[^\round]{\utility}{\MSet^\UCB}{\transfers^\UCB} - \Instability[^\round]{\uUCB}{\MSet^\UCB}{\transfers^\UCB} \\ \le \max_{\Coalition \subseteq \Agents} \left( f(\Coalition, \MSet^\UCB, \transfers^\UCB; u) -  f(\Coalition, \MSet^\UCB, \transfers^\UCB; \uUCB) \right).\end{multline*}
To finish, we upper bound $f(\Coalition, \MSet^\UCB, \transfers^\UCB; u) -  f(\Coalition, \MSet^\UCB, \transfers^\UCB; \uUCB)$ for each $\Coalition \subseteq \Agents$. We decompose this expression into two terms: 
\begin{align*}
    f(\Coalition, \MSet^\UCB, \transfers^\UCB; &\utility) - f(\Coalition, \MSet^\UCB, \transfers^\UCB; \utility^\UCB) \\ 
    &= \underbrace{\left( \max_{\MSetii \in \mathscr{X}_\Coalition} \sum_{\agent \in \Coalition} \utility_\agent(\MMapii(\agent)) - \max_{\MSetii \in \mathscr{X}_\Coalition} \sum_{\agent \in \Coalition} \uUCB_\agent(\MMapii(\agent)) \right)}_{(\text{A})}
  \\ &\qquad\qquad+ \underbrace{\left(\sum_{\agent \in \Coalition} \bigl(\uUCB_\agent(\MMapUCB(\agent)) + \transfers^\UCB_\agent\bigr) - \sum_{\agent \in \Coalition} \bigl(\utility_\agent(\MMapUCB(\agent)) + \transfers^\UCB_\agent\bigr) \right)}_{(\text{B})}.
\end{align*}
To see that (A) is nonpositive, observe that the maximum weight matching of $\Coalition$ with respect to $\utility$ is no larger than the maximum weight matching of $\Coalition$ with respect to $\uUCB$, since $\uUCB$ pointwise {upper bounds} $\utility$. To upper bound (B), observe that the transfers cancel out, so the expression is equivalent to 
\[\sum_{\agent \in \Coalition} \bigl(\uUCB_\agent(\MMapUCB(\agent)) - \utility_\agent(\MMapUCB(\agent))\bigr)\le \sum_{\agent \in \Agents} \Bigl(\max\bigl(\Confidence_{\agent, \MMapUCB(\agent)}\bigr) - \min\bigl(\Confidence_{\agent, \MMapUCB(\agent)}\bigr)\Bigr). \qedhere\]
\end{proof}

\begin{algorithm}[tb] \caption{$\textsc{ComputeMatch}$: Compute matching with transfers from confidence sets of utilities}\label{alg:fromconfsets}
\begin{algorithmic}[1]
\Procedure{ComputeMatch}{$\mathscr{C}$}
   \For{$(\man, \woman) \in \Men \times \Women$}\Comment{Instantiate UCB estimates of utilities.}
      \State {$\uUCB_{\man}(\woman)\gets\max\bigl( \Confidence_{\man,\woman}\bigr)$}
      \State \smash{$\uUCB_{\woman}(\man)\gets \max \bigl(\Confidence_{\woman,\man}\bigr)$}
    \EndFor
      \State {$(\MSetopt, \pricesopt)\gets\text{optimal primal-dual pair for (P) and (D) given utilities $\uUCB$}$}
    \For{$\agent \in \Agents$}\Comment{Set transfers based on \smash{$(\MSetopt, \pricesopt)$} and UCB utilities.}
     \State \smash{$\transfers_\agent\gets \pricesopt_\agent - \uUCB_\agent(\MMapopt(\agent))$}
     \EndFor
   \State \textbf{return} ($\MSetopt, \transfers$)
\EndProcedure
\end{algorithmic}
\end{algorithm}

\subsection{Instantiations of the meta-algorithm}\label{subsec:explicitalgs}

As formalized in \textsc{MetaMatchUCB}, the regret bound of \Cref{lemma:confset} suggests a simple approach: {at each round, select the matching with transfers returned by \textsc{ComputeMatch} and update confidence sets accordingly}. To instantiate \textsc{MetaMatchUCB}, it remains to construct confidence intervals that contain the true utilities with high probability. This last step naturally depends on the assumptions made about the utilities and the noise.

\paragraph{Unstructured preferences.} For this setting, we construct confidence intervals following the classical UCB approach: for each utility value involving the pair $(\man, \woman)\in\Men\times\Women$, we take a confidence interval of length \smash{$O\bigl(\sqrt{\log(|\Agents|\Rounds) / \smash{\pulls_{\man\woman}}}\bigr)$} centered at the empirical mean, where $\pulls_{\man\woman}$ is the number of times the pair has been matched thus far. We describe this construction precisely in \Cref{alg:bandits} (\textsc{MatchUCB}).

\begin{algorithm}
\caption{\textsc{MatchUCB}: A bandit algorithm for matching with transferable utilities for unstructured preferences.}\label{alg:bandits}
\begin{algorithmic}[1]
\Procedure{MatchUCB}{$T$}
    \For{$(\man, \woman) \in \Men \times \Women$} \Comment{Initialize confidence intervals.}
   \State $\Confidence_{\man, \woman} \gets [-1, 1]$
   \State $\Confidence_{\woman, \man} \gets [-1, 1]$
   \EndFor
   \For{$1 \le \round \le\Rounds$}
   \State $(\MSet^\round, \transfers^\round) \gets \Call{ComputeMatch}{\mathscr{C}}$
   \For{$(\man, \woman) \in \MSet^\round$} \Comment{Set confidence intervals and update means.}
   \State Update empirical means $\hat\utility_\man(\woman)$ and $\hat\utility_\woman(\man)$ from feedback; increment counter $\pulls_{\man\woman}$.
   \State $\Confidence_{\man, \woman} \gets \bigl[\mean_{\man}(\woman)-8\sqrt{{\log(|\Agents|\Rounds)} / {\pulls_{\man\woman}}},  \mean_{\man}(\woman) + 8\sqrt{{\log(|\Agents|\Rounds)}/{\pulls_{\man\woman}}}\,\bigr]\cap [-1, 1]$
   \State $\Confidence_{\woman, \man} \gets \bigl[\mean_{\woman}(\man)-8\sqrt{{\log(|\Agents|\Rounds)}/{\pulls_{\man\woman}}},  \mean_{\woman}(\man) +8\sqrt{{\log(|\Agents|\Rounds)}/{\pulls_{\man\woman}}} \,\bigr]\cap [-1, 1]$ 
   \EndFor
   \EndFor
\EndProcedure
\end{algorithmic}
\end{algorithm}

To analyze \textsc{MatchUCB}, recall that \Cref{lemma:confset} bounds the regret at each step by the lengths of the confidence intervals of each pair in the selected matching. Bounding the lengths of the confidence intervals parallels the analysis of UCB for classical stochastic multi-armed bandits. We give the full proof of \Cref{thm:instanceind} in \Cref{appendix:proofunstructured}.

\paragraph{Typed Preferences.} For this setting, we construct our confidence intervals as follows: for each pair of types $\context_1$ and $\context_2$, we take a length \smash{$O\bigl(\sqrt{\log(|\Agents|\Rounds) / \smash{\pulls_{\context_1\context_2}}}\bigr)$} confidence interval centered around the empirical mean, where $\pulls_{\context_1\context_2}$ is the number of times that an agent with type $\context_1$ has been matched with an agent with type $\context_2$. We describe this construction precisely in \Cref{alg:typeducb} (\textsc{MatchTypedUCB}). We give the full proof of \Cref{thm:typed} in \Cref{appendix:prooftyped}. 

\begin{algorithm}
\caption{\textsc{MatchTypedUCB}: A bandit algorithm for matching with transferable utilities for typed preferences.}\label{alg:typeducb}
\begin{algorithmic}[1]
\Procedure{MatchTypedUCB}{$T$}
   \For{$(\context, \context') \in \Contexts \times \Contexts$} \Comment{Initialize confidence intervals and empirical means.}
   \State $\Confidence_{\context, \context'} \gets [-1, 1]$
   \EndFor
   \For{$1 \le \round \le\Rounds$}
   \State $(\MSet^\round, \transfers^\round) \gets \Call{ComputeMatch}{\mathscr{C}}$
   \For{$(\man, \woman) \in \MSet^\round$} \Comment{Set confidence intervals and update means.}
   \State Update empirical means $\hat{f}(\context_\man, \context_\woman)$ and $\hat{f}(\context_\man, \context_{\woman})$ from feedback; increment $\pulls_{\context_\man, \context_\woman}$.
   \State $\Confidence_{\context_\man, \context_{\woman}} \gets  \bigl[\hat{f}(\context_\man, \context_{\woman})-8\sqrt{{\log(|\Agents|\Rounds)} / {\smash{\pulls_{\context_\man,\context_\woman}}}}, \hat{f}(\context_\man, \context_{\woman}) + 8\sqrt{{\log(|\Agents|\Rounds)}/{\smash{\pulls_{\context_\man,\context_\woman}}}}\,\bigr]\cap [-1, 1]$
   \State $\Confidence_{\context_\woman, \context_\man} \gets \bigl[\hat{f}(\context_\woman, \context_{\man}) -8\sqrt{{\log(|\Agents|\Rounds)}/{\smash{\pulls_{\context_\man, \context_\woman}}}}, \hat{f}(\context_\man, \context_{\woman})+8\sqrt{{\log(|\Agents|\Rounds)}/{\smash{\pulls_{\context_\man,\context_\woman}}}} \,\bigr]\cap [-1, 1]$ 
   \EndFor
   \EndFor
\EndProcedure
\end{algorithmic}
\end{algorithm}

\paragraph{Separable Linear Preferences.} 

\begin{algorithm}
\caption{\textsc{MatchLinUCB}: A bandit algorithm for matching with transferable utilities for separable linear preferences.}\label{alg:linucb}
\begin{algorithmic}[1]
\Procedure{MatchLinUCB}{$T$}
    \For{$(\man, \woman) \in \Men \times \Women$} \Comment{Initialize confidence intervals.}
   \State $\Confidence_{\man,\woman} \gets [-1, 1]$
   \State $\Confidence_{\woman, \man} \gets [-1, 1]$
   \EndFor
   \For{$1 \le \round \le\Rounds$}
   \State $(\MSet^\round, \transfers^\round) \gets \Call{ComputeMatch}{\mathscr{C}}$
   \For{$\agent \in \Agents^\round$} \Comment{Update confidence intervals.}
   \State Increment the counter $\pulls_\agent$.
     \State $\beta  \gets O\bigg(d \log T + \frac{\pulls_\agent \sqrt{\ln(\pulls_\agent/(T |A|))}}{T^2}\bigg)$. \Comment{Parameter for width of confidence set.}
   \If{$\mu_{\MSet^\round}(\agent) \neq \agent$}
   \State Add $t$ to $\mathcal{T}_{a}$ (the set of rounds in which agent $\agent$ has been matched).
   \State Set $\mathcal{R}_{\agent,\round}$ equal to the observed utility for agent $\agent$ in round $t$. 
   \State $\phi^{\LS}(a)  \gets \text{argmin}_{v \in \mathcal{B}^d} \left(\sum_{t' \in \mathcal{T}_{a}} \bigl(\langle v, \context_{\mu_{X_{t'}}(\agent)} \rangle - \mathcal{R}_{\agent,\round'}\bigr)^2 \right)$ \Comment{Least squares estimate.}
   \State $C_{\phi(\agent)}  \gets \Bigl\{v \bigm\vert  \sum_{t' \in \mathcal{T}_{\agent}} \bigl(\langle v - \phi^{\LS}(\agent), \context_{\mu_{X_{t'}}(a)} \rangle\bigr)^2 \le \beta, \|v\|_2\le 1\Bigr\}$ \Comment{Conf. ellipsoid.}
   \For{$\agent' \in \Agents$}
   \State $C_{\agent, \agent'}  \gets \bigl\{\langle \context_{a'}, v \rangle \mid v \in C_{\phi(\agent)} \bigr\} \cap [-1, 1]$ \Comment{Update confidence sets of agent $\agent$.}
   \EndFor
   \EndIf
   \EndFor
   \EndFor
\EndProcedure
\end{algorithmic}
\end{algorithm}

To build the confidence sets, we use a key idea from the design of LinUCB \cite{RV13, LS2020}. The idea is to compute a confidence set for each hidden vector $\phi(a)$ using the least squares estimate and use that to construct confidence sets $C_{a,a'}$ for the utilities.

More formally, let $\mathcal{T}_{a}$ be the set of rounds where agent $a$ is matched on the platform thus far, and for $t' \in \mathcal{T}_a$, let $\mathcal R_{a,t'}$ be the observed utility at time $t'$ for agent $a$. The center of the confidence set will be given by the least squares estimate
\[   \phi^{\LS}(\agent) = \argmin_{v \in \mathcal{B}^d} \left(\sum_{t' \in \mathcal{T}_{a}} (\langle v, \context_{\mu_{X_{t'}}(a)} \rangle - \mathcal{R}_{a,t'} \right). \]
The confidence set for $\phi(\agent)$ is given by
\[C_{\phi(a)} := \left\{v \Biggm\vert  \sum_{t' \in \mathcal{T}_{a,t}} \Big\langle v - \phi^{\LS}(a), \context_{\mu_{X_{t'}}(a)} \Big\rangle^2 \le \beta\text{ and } \|v\|_2\le 1 \right\},  \]
where $\beta = O\left(D \log T + \frac{\pulls_{\agent} \sqrt{\ln(\pulls_{\agent} /\delta)}}{T^2}\right)$ and $\pulls_{\agent}$ counts the number of times that $\agent$ has appeared in selected matchings. The confidence set for $u(a, a')$ is given by
\[
C_{a, a'} := \left\{\langle \context_{a'}, v \rangle \mid v \in C_{\phi(a)} \right\} \cap [-1, 1]. \]
We describe this construction precisely in \Cref{alg:linucb} (\textsc{MatchLinUCB}).
We give the full proof of \Cref{thm:linear} in \Cref{appendix:prooflinear}.

\subsection{Matching lower bound}\label{sec:lowerbound}

For the case of unstructured preferences, we now show that \textsc{MatchUCB} achieves optimal regret (up to logarithmic factors) by showing a lower bound that (nearly) matches the upper bound in Theorem \ref{thm:instanceind}. 
\begin{restatable}{lemma}{lowerbound}
\label{lemma:lowerbound}
For any algorithm that learns a stable matching with respect to unstructured preferences, there exists an instance on which it has expected regret {$\widetilde{\Omega}(|A|^{3/2} \sqrt{T})$} (where regret is given by \instabmeasure{}). 
\end{restatable}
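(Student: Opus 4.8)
The plan is to reduce the claim to a regret lower bound for learning a \emph{maximum-weight} matching under semi-bandit feedback, and then to prove the latter by invoking the standard stochastic multi-armed bandit minimax argument once per agent. By the third item of \Cref{prop:desiderata}, \instabmeasure{} always dominates the utility difference, and the utility difference is insensitive to the transfers; hence it suffices to exhibit, for $\Rounds$ at least a constant multiple of $|\Agents|$ (the regime of interest), an instance of $\UUnstructured$ on which every algorithm incurs expected cumulative utility difference $\widetilde{\Omega}(|\Agents|^{3/2}\sqrt{\Rounds})$. I will work in the simplest case, where the same agents are present in every round, with $N$ customers and $N$ providers (so $|\Agents| = 2N$) and every matching available each round; write $\pi_\round$ for the matching selected in round $\round$.

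\textbf{The hard instance.} Fix $\Delta = c\,\sqrt{N/\Rounds}$ for a small absolute constant $c$, so in particular $\Delta \le \tfrac12$. For a permutation $\sigma$ of $\{1,\dots,N\}$, put $\utility_\woman(\man) = 0$ for all pairs and $\utility_\man(\woman) = \tfrac12 + \Delta\,\mathbf{1}[\woman = \sigma(\man)]$, so the total weight of an edge is $\tfrac12 + \Delta$ if $\woman = \sigma(\man)$ and $\tfrac12$ otherwise; all values lie in $[0,1]$, and the feedback for a matched pair is an independent Bernoulli draw with the corresponding mean, hence $1$-subgaussian. The unique maximum-weight matching is $\{(\man,\sigma(\man))\}_\man$, and for any transfers the utility difference of $(\pi,\cdot)$ is at least $\Delta\cdot w(\pi)$, where $w(\pi) := \#\{\man : \text{$\man$ is not matched to $\sigma(\man)$ under $\pi$}\}$, with equality when $\pi$ is perfect. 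Consequently the cumulative instability of any algorithm is at least $\Delta\sum_{\round=1}^{\Rounds}w(\pi_\round)$, and it remains to show $\mathbb{E}\bigl[\sum_{\round}w(\pi_\round)\bigr] = \Omega(N\Rounds)$ when $\sigma$ is drawn uniformly at random; averaging then gives the bound for some fixed $\sigma$.

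\textbf{The per-agent bandit argument.} View the task of correctly matching customer $\man$ as an $N$-armed bandit sub-problem: among $\man$'s $N$ candidate providers exactly one (namely $\sigma(\man)$) has reward mean $\tfrac12 + \Delta$ and the rest have mean $\tfrac12$, and after $\Rounds$ rounds the algorithm has observed $\man$'s reward at most $\Rounds$ times. With $\Delta = c\sqrt{N/\Rounds}$ and $c$ small, the standard stochastic-bandit minimax argument (the one behind the $\Omega(\sqrt{KT})$ bound for $K$-armed bandits; see \citep{LS2020}) shows this gap is too small to be learned in $\Rounds$ rounds: under the all-$\tfrac12$ null some provider is matched to $\man$ only $O(\Rounds/N)$ times in expectation, and changing that provider to be $\sigma(\man)$ perturbs the algorithm's trajectory by only $O(c)$ in total variation (via the divergence-decomposition lemma for adaptively chosen arms), so even on the true instance that provider is matched to $\man$ only $O(\Rounds/N + c\Rounds)$ times; hence $\man$ is mismatched in at least an $\epsilon_0$-fraction of the rounds in expectation, for an absolute constant $\epsilon_0>0$. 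Summing over the $N$ customers, $\mathbb{E}[\sum_\round w(\pi_\round)] = \sum_\man \mathbb{E}[\#\{\round : \pi_\round(\man)\neq\sigma(\man)\}] \ge \epsilon_0 N\Rounds$, and therefore the expected cumulative instability is $\Omega(\Delta N\Rounds) = \Omega(N^{3/2}\sqrt{\Rounds}) = \widetilde{\Omega}(|\Agents|^{3/2}\sqrt{\Rounds})$.

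\textbf{The main obstacle.} The delicate point is that the $N$ per-customer sub-problems are not literally independent: the matching constraint forbids two customers from being matched to the same provider in a round, and under the uniform prior on permutations the hidden partners $\sigma(\man)$ are not independent across $\man$. The matching constraint is harmless—it only shrinks the algorithm's feasible set and can therefore only increase $w(\pi_\round)$—so the per-agent analysis may be carried out as if the assignments were unconstrained. The statistical coupling can be dealt with in one of two ways: (i) replace the permutation prior by an i.i.d.\ prior on $\sigma:\{1,\dots,N\}\to\{1,\dots,N\}$, which makes the sub-problems genuinely independent at the cost of a constant factor, since the maximum-weight matching then captures only a $(1-1/e-o(1))$-fraction of the $+\Delta$ edges; or (ii) keep the permutation prior and bound the total information the algorithm can acquire about $\sigma$—each Bernoulli observation carries $O(\Delta^2/N)$ bits about $\sigma$, so $\Rounds$ rounds yield only $O(N)$ bits, which a Fano-type estimate shows is far too little to correctly match more than a small fraction of customers in any round. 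Carrying out either route rigorously, together with the adaptive-sampling bookkeeping inside the per-agent bound, is the bulk of the work; a cleaner but less self-contained alternative is to invoke an off-the-shelf combinatorial semi-bandit lower bound (e.g.\ \citep{DBLP-conf/aistats/KvetonWAS15}): maximum-weight bipartite matching in $K_{N,N}$ has $d = N^2$ base arms with $m = N$ selected each round, so the generic $\widetilde{\Omega}(\sqrt{mdT})$ lower bound already yields $\widetilde{\Omega}(N^{3/2}\sqrt{\Rounds})$, and \Cref{prop:desiderata} transfers it to \instabmeasure{}.
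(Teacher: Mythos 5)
Your proposal is essentially correct and shares its skeleton with the paper's proof: both first invoke part (3) of \Cref{prop:desiderata} to reduce the problem to a utility-difference lower bound for learning a maximum-weight matching, and both ultimately rest on the classical $\Omega(\sqrt{K\Rounds})$ minimax bound for stochastic $K$-armed bandits. Where you diverge is in how the $N$ per-customer subproblems are decoupled. You work in a balanced $N\times N$ market with a hidden permutation (or i.i.d.\ assignment), carry out the divergence-decomposition argument by hand for each customer, and then must pay for the coupling — either the $(1-1/e)$ degradation of the benchmark under the i.i.d.\ prior, a Fano-type accounting under the permutation prior, or an appeal to an off-the-shelf combinatorial semi-bandit bound. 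The paper instead engineers the instance so that this obstacle never arises: it takes $K$ customers but $10K\log(K\Rounds)$ providers, with each customer's preferred ``block'' of providers drawn i.i.d.; a union bound shows that with high probability all customers can \emph{simultaneously} be matched to providers they value at $\tfrac12+\bonus$, so the benchmark decomposes exactly across customers and the matching constraint is vacuous. It then avoids redoing the information-theoretic computation by embedding a single classical hard instance (\Cref{lemma:classicallb}) at a uniformly random customer and simulating the rest, so that $o(K^{3/2}\sqrt{\Rounds})$ total regret would force $o(\sqrt{K\Rounds})$ regret for that customer — a contradiction. Your route is viable and arguably more self-contained (you re-derive rather than cite the MAB bound), but as you acknowledge, the decoupling bookkeeping is ``the bulk of the work'' and is left unexecuted; the paper's unbalanced-market construction is precisely the device that makes that step trivial, at the cost of only a logarithmic inflation of $|\Agents|$, which is absorbed into the $\widetilde{\Omega}(\cdot)$.
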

The idea behind this lemma is to show a lower bound for the easier problem of learning a maximum weight matching using utility difference as regret. By Proposition \ref{prop:desiderata}, this immediately implies a lower bound for learning a stable matching with regret measured by \instabmeasure{}.

This lower bound illustrates the close connection between our setting and that of learning a maximum weight matching. Indeed, by applying \textsc{MatchUCB} and simply disregarding the transfers every round, we recover the classical UCB-based algorithm for learning the maximum weight matching \cite{GKJ12, DBLP-conf/icml/ChenWY13, DBLP-conf/aistats/KvetonWAS15}. From this perspective, the contribution of \textsc{MatchUCB} is an approach to set the dual variables while asymptotically maintaining the same regret as the primal-only problem.

\section{Extensions}

In this section, we discuss several extensions of our results: instance-dependent regret bounds, connections between subset instability and platform revenue, and non-transferable utilities. These extensions illustrate the generality of our framework and also suggest several avenues for future research.

In \Cref{appendix:instancespecific}, we derive instance-dependent regret bounds for Subset Instability, which allow us to improve the \smash{$O(\sqrt T)$} convergence from \Cref{sec:regret} to $O(\log T)$ for any fixed instance. Achieving this logarithmic bound involves choosing ``robust'' dual solutions when setting transfers (rather than choosing an arbitrary optimal primal-dual pair as in \textsc{ComputeMatch}): we want our selected primal-dual pair to lead to stable outcomes even under perturbations of the transfers.

In \Cref{appendix:alternatedef}, we connect the subsidy perspective of Subset Instability to platform revenue. We relate regret to platform revenue and show that, when there are search frictions, the platform can  
achieve substantial long-run profit despite starting with no knowledge of agent preferences. 

In \Cref{appendix:matchntu}, we adapt our framework to matching with non-transferable utilities (where agents do not transfer money to other agents on the platform). We define an analogue of Subset Instability using the subsidy formulation and give an \smash{$\widetilde{O}(\sqrt{T})$} regret algorithm for learning stable matchings.

\subsection{Instance-dependent regret bounds}\label{appendix:instancespecific}

While our analyses in \Cref{sec:algorithm} focused on bounds that hold uniformly for all problem instances, we now explore \textit{instance-dependent} regret bounds. Instance-dependent bounds capture a different facet of bandit algorithms: how does the number of mistakes made by the algorithm scale on each instance with respect to $\Rounds$? Bounds of this nature have been explored in previous works \cite{LMJ20, BSS21, SBS20, CS21, LRMJ20} on learning stable matchings in the non-transferable utilities setting, and we show that they can be obtained within our framework as well.

Our instance-dependent regret bound depends on a gap $\Delta > 0$ determined by the true utility function $\utility$. We focus on the setting where agent utilities are unstructured (i.e., $\utility \in \UUnstructured$) and where the same set of agents $\Agents$ arrives in each round. As is common in analyses of combinatorial bandit problems (e.g., \cite{DBLP-conf/aistats/KvetonWAS15, DBLP-conf/icml/ChenWY13}), the gap $\Delta$ in the bound is global to the matching. Letting $\MSet^{\text{opt}}$ be a maximum weight matching with respect to $\utility$, we define the gap to $\Delta$ be the difference in utility between the optimal and second-best matchings\footnote{Our bound is less fine-grained than the gap in \citep{DBLP-conf/icml/ChenWY13}, and in particular does not allow there to be multiple maximum weight matchings. We defer improving our definition of $\Delta$ to future work.}:
\[ \Gap \coloneqq \inf_{\MSet\neq\MSet^{\text{opt}}} \Biggl\{ \sum_{\agent\in\Agents} \utility_\agent(\mu_{\MSet^{\text{opt}}}(\agent)) - \sum_{\agent\in\Agents} \utility_\agent(\MMap(\agent)) \Biggr\}. \] 
We prove the following regret bound:
\begin{restatable}[Instance-Dependent Regret]{theorem}{instancespecific}
\label{thm:instancespecific}
Suppose that $\Agents_\round = \Agents$ for all $\round$. Let $\utility \in \UUnstructured$ be any utility function, and put
\[ \Gap \coloneqq \inf_{\MSet\neq\MSet^*} \Biggl\{ \sum_{\agent\in\Agents} \utility_\agent(\mu_{\MSet^*}(\agent)) - \sum_{\agent\in\Agents} \utility_\agent(\MMap(\agent)) \Biggr\}. \] 
Then {\normalfont\textsc{MatchUCB$'$}} incurs expected regret $\mathbb{E}(\Regret_\Rounds) = O(|\Agents|^5 \cdot\log(|\Agents|\Rounds)/\Delta^2)$.
\end{restatable}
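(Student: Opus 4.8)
The plan is to run a variant of \textsc{MatchUCB}, call it \textsc{MatchUCB$'$}, that uses slightly inflated confidence widths (logarithmic in $\Rounds$), and to argue that once all confidence intervals on the pairs of the \emph{selected} matching have shrunk below a threshold $\Theta(\Delta / |\Agents|)$, the algorithm is forced to play an optimal matching and incurs zero instability for the rest of the run. The regret is therefore controlled entirely by the number of ``exploratory'' rounds in which at least one matched pair still has a wide confidence interval. This mirrors the standard gap-based analysis of UCB for combinatorial semi-bandits, with \Cref{lemma:confset} playing the role of the per-step regret decomposition.

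First I would define the favorable event $\Event$ that every empirical mean $\mean_\man(\woman)$ lies within its confidence interval at every round; by a union bound over the $O(|\Agents|^2)$ pairs and $\Rounds$ rounds together with subgaussian concentration, $\Pr[\neg\Event] = O(1/(|\Agents|\Rounds))$, contributing only $O(1)$ to the expected regret (since instability is bounded by $O(|\Agents|)$ each round, see \Cref{prop:desiderata}). Condition on $\Event$ from now on. On $\Event$, \Cref{lemma:confset} gives that the instability in round $\round$ is at most $\sum_{\agent}\bigl(\max(\Confidence_{\agent,\MMapUCB(\agent)}) - \min(\Confidence_{\agent,\MMapUCB(\agent)})\bigr)$, i.e.\ at most $O\bigl(\sum_{(\man,\woman)\in\MSet^\round}\sqrt{\log(|\Agents|\Rounds)/\pulls_{\man\woman}}\bigr)$. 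The key structural claim is that if every pair $(\man,\woman)$ in the matching $\MSet^\round$ returned by \textsc{ComputeMatch} has $\pulls_{\man\woman}$ large enough that its confidence-interval half-width is below $c\Delta/|\Agents|$ for a suitable constant $c$, then (still on $\Event$) the matching $\MSet^\round$ must be optimal: the UCB utilities differ from the true utilities by at most $O(\Delta/|\Agents|)$ on each of the $O(|\Agents|)$ matched pairs, so the UCB-weight of $\MSet^\round$ exceeds the true weight of any non-optimal matching by strictly less than $\Delta$, while the UCB-weight of $\MSet^{\mathrm{opt}}$ is at least its true weight; hence the optimism-selected $\MSet^\round$ cannot be suboptimal, and \Cref{lemma:confset} then bounds its instability by $O(|\Agents|\cdot c\Delta/|\Agents|)$, which we can absorb or, more carefully, shows the instability is actually $0$ once the intervals are this tight (since for an optimal matching one can choose the UCB-consistent transfers to be exactly stable).

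Next I would bound the total number of ``bad'' rounds—rounds in which some pair of $\MSet^\round$ has half-width at least $c\Delta/|\Agents|$. For a fixed pair $(\man,\woman)$, this can happen only while $\pulls_{\man\woman} = O\bigl(|\Agents|^2\log(|\Agents|\Rounds)/\Delta^2\bigr)$, and each bad round increments the counter of at least one such pair, so the number of bad rounds is at most $O\bigl(|\Agents|^2 \cdot |\Agents|^2\log(|\Agents|\Rounds)/\Delta^2\bigr)$ summing over the $O(|\Agents|^2)$ pairs. Each such round contributes at most $O(|\Agents|)$ to the regret (the trivial bound on instability from boundedness of utilities and \Cref{prop:desiderata}), and each ``good'' round contributes $0$. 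Multiplying gives $\mathbb{E}(\Regret_\Rounds) = O\bigl(|\Agents|^5 \log(|\Agents|\Rounds)/\Delta^2\bigr)$, as claimed; the extra factors of $|\Agents|$ beyond the classical $1/\Delta^2$ bound come from (i) the $O(|\Agents|^2)$ pairs, (ii) the $O(|\Agents|^2/\Delta^2)$-style pull threshold needing an $|\Agents|^2$ rather than a constant because the \emph{matching-global} gap $\Delta$ must survive errors on $|\Agents|$ edges, and (iii) the $O(|\Agents|)$ per-round loss on bad rounds.

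The main obstacle is the structural claim that tight confidence intervals on the \emph{selected} matching (as opposed to on all pairs simultaneously) force optimality: \textsc{ComputeMatch} selects $\MSet^\round$ to maximize the \emph{UCB} weight, and a priori the UCB weight of a non-optimal matching could be inflated by wide intervals on its \emph{own} edges even if $\MSet^{\mathrm{opt}}$'s edges are well-estimated. The resolution is the usual optimism argument run in the contrapositive: if $\MSet^\round \neq \MSet^{\mathrm{opt}}$ is selected, then either some edge of $\MSet^\round$ still has a wide interval (so this is a bad round and we have already accounted for it), or all of $\MSet^\round$'s edges are tight, in which case its UCB weight is within $O(\Delta/|\Agents|)\cdot|\Agents| < \Delta$ of its true weight $\le W(\MSet^{\mathrm{opt}}) - \Delta$, contradicting that it was chosen over $\MSet^{\mathrm{opt}}$ whose UCB weight is $\ge W(\MSet^{\mathrm{opt}})$ on $\Event$. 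Making this dichotomy precise, and checking that in the ``all tight and optimal'' case the algorithm's transfers yield exactly zero instability (rather than merely $O(\Delta)$ instability), are the two points that require care; the rest is a routine adaptation of the combinatorial-bandit gap analysis combined with \Cref{lemma:confset}.
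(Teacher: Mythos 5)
Your overall architecture---conditioning on the event that all confidence sets contain the true utilities, invoking \Cref{lemma:confset}, and counting the ``bad'' rounds in which some matched pair still has a confidence interval of width $\Omega(\Delta/|\Agents|)$---matches the paper's proof, as does the $|\Agents|^2\times O(|\Agents|^2\log(|\Agents|\Rounds)/\Delta^2)\times O(|\Agents|)$ bookkeeping. The genuine gap is in the ``good'' rounds. You correctly observe that an $O(\Delta)$ per-round instability cannot be absorbed (it would give regret linear in $\Rounds$), so you need the instability to be exactly zero, and you assert this ``since for an optimal matching one can choose the UCB-consistent transfers to be exactly stable.'' Such transfers do exist, but the theorem is about a specific algorithm, and an \emph{arbitrary} optimal primal-dual pair for $\uUCB$ can fail to be stable for $\utility$ no matter how tight the confidence intervals are. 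The paper's own counterexample: one customer $C$ and one provider $P$ with $\utility_C(P)=2$, $\utility_P(C)=-1$, $\uUCB_C(P)=2+\epsilon$, $\uUCB_P(C)=-1+\epsilon$. The dual solution $\prices_C=0$, $\prices_P=1+2\epsilon$ is optimal for $\uUCB$ and yields $\transfers_C=-2-\epsilon$, which violates individual rationality for $C$ under the true $\utility$ for every $\epsilon>0$. So tight intervals on the selected matching do \emph{not} force zero instability for whichever primal-dual pair the solver happens to return, and your dichotomy breaks exactly at the step you flagged as ``requiring care.''

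The paper closes this gap by changing the algorithm, which is why the statement is about \textsc{MatchUCB$'$} rather than \textsc{MatchUCB}: \textsc{ComputeMatch$'$} (i) estimates the gap $\Gap^{\UCB}$ from the UCB utilities, shifts the matched-edge utilities down by $\Gap^{\UCB}/|\Agents|$, solves the dual for the shifted utilities, and shifts the prices back up, so that every matched agent's price sits a margin $\Gap^{\UCB}/|\Agents|$ inside the dual-feasible region; and (ii) runs the optimism step against both the original confidence sets and confidence sets of doubled width, returning the doubled-width solution whenever the two matchings disagree---this extra exploration is what allows one to prove $\Gap^{\UCB}\ge 0.1\,\Gap$ on good rounds, so the margin dominates the $O(\Delta/|\Agents|)$ per-edge estimation error and the returned outcome is genuinely stable for $\utility$. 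Your proposal contains neither ingredient (the ``slightly inflated confidence widths'' you mention are not the relevant modification), so the central claim that good rounds contribute zero regret is unproven as written. Your optimism-in-the-contrapositive argument that the \emph{matching} is optimal on good rounds is essentially the paper's argument and is fine; the missing piece is entirely about how the dual variables, and hence the transfers, are selected.
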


\begin{remark*}
\textsc{MatchUCB$'$} is \textsc{MatchUCB} with a slight adjustment to \textsc{ComputeMatch} needed to prove \Cref{thm:instancespecific}. \textsc{MatchUCB$'$}, like \textsc{MatchUCB}, does {not} depend on the gap $\Delta$ and achieves the instance-independent regret bound in \Cref{thm:instanceind}.\footnote{The instance-independent regret bound can be shown using the same argument as the proof for \Cref{thm:instanceind}.} That is, \textsc{MatchUCB$'$} achieves both our instance-independent and instance-dependent regret bounds. 
\end{remark*}

Our starting point for proving \Cref{thm:instancespecific} is to upper bound the number of ``mistakes'' that a platform makes while exploring and learning, i.e., the number of rounds where the chosen matching is suboptimal. That is, we bound the number of rounds where the chosen market outcome is not stable with respect to the true utilities $u$. This is similar in spirit to the analysis of the combinatorial bandits problem of learning a maximum weight matching in \citep{DBLP-conf/icml/ChenWY13}. However, a crucial difference is that a mistake can be incurred even when the selected matching is optimal, if the selected transfers do not result in a stable market outcome. Ensuring that the selected transfers result in a stable market outcome when the utility estimates are sufficiently accurate is the main technical hurdle in our analysis. 

To make this argument work, we need to specify more precisely how the primal-dual solution is chosen in line 5 of \textsc{ComputeMatch} (which we previously did not specify). In particular, poor choices of the primal-dual solution can lead to many rounds where the chosen outcome is unstable, because the transfers violate the stability constraints. To see this, consider a market with a single customer $C$ and a single provider $P$ such that $\utility_C(P) = 2$ and $\utility_P(C) = -1$, and suppose we have nearly tight upper bounds $\uUCB_C(P) = 2 + \epsilon$ and $\uUCB_P(C) = -1 + \epsilon$ on the utilities. Then the market outcome with matching $\{(C, P)\}$ with $\transfers_C = -2 - \epsilon$ and $\transfers_P = -\transfers_C$ could be selected by \textsc{ComputeMatch}, since it corresponds to an optimal primal-dual pair for $\uUCB$. However, it is not stable with respect to the true utilities $\utility$ (as individual rationality is violated for $C$), regardless of how small $\epsilon$ is. Thus, without assuming more about how the optimal primal-dual pair is chosen in \textsc{ComputeMatch}, we cannot hope to bound the number of unstable market outcomes selected. 

We show that, by carefully selecting an optimal primal-dual pair each round, we can bound the number of mistakes. In particular, we design an algorithm {\normalfont\textsc{ComputeMatch$'$}} to find primal-dual pairs that satisfy the following property: if the confidence sets are small enough, then the selected matching will be stable with respect to the true utilities.  
\begin{restatable}{lemma}{main}
\label{lemma:main}
Suppose {\normalfont\textsc{ComputeMatch$'$}} is run {on a collection $\mathscr{C}$ of confidence sets $\Confidence_{i,j}$ and $\Confidence_{j,i}$ over the agent utilities that satisfy}
\[
\max\bigl(\Confidence_{\man,\woman}\bigr) - \min\bigl(\Confidence_{\man,\woman}\bigr) \le \todoconstmod{} \frac{\Delta}{|\Agents|} \text{\quad and\quad} \max\bigl(\Confidence_{\woman, \man}\bigr) - \min\bigl(\Confidence_{\woman, \man}\bigr) \le \todoconstmod{} \frac{\Delta}{|\Agents|}
\]
for all $(\man, \woman)$ in the matching returned by {\normalfont\textsc{ComputeMatch$'$}}. Suppose also that the confidence sets $\mathscr{C}$ contain the true utilities for all pairs of agents. Then the market outcome returned by {\normalfont\textsc{ComputeMatch$'$}} is stable with respect to the true utilities $\utility$. 
\end{restatable}
\begin{remark*}
\Cref{lemma:main} does \textit{not} hold for {\normalfont\textsc{ComputeMatch}}; {its proof relies on} the particular specification of the optimal primal-dual pair in {\normalfont\textsc{ComputeMatch$'$}}. 
\end{remark*}
\noindent Using Lemma \ref{lemma:main}, we intuitively can bound the number of mistakes made by the algorithm by the number of samples needed to sufficiently reduce the size of the confidence sets. In \Cref{appendix:proofinstancespecific}, we describe how we choose optimal primal-dual pairs in \textsc{ComputeMatch$'$}, prove Lemma \ref{lemma:main}, and provide a full proof of \Cref{thm:instancespecific}.

\Cref{thm:instancespecific} opens the door to further exploring algorithmic properties of learning stable matchings. First, this result establishes fine-grained regret bounds, demonstrating the typical $O(\log T)$ regret bounds from the combinatorial bandits literature \cite{DBLP-conf/icml/ChenWY13} are achievable in our setting as well. Second, \Cref{thm:instancespecific} provides insight into the number of mistakes made by the platform. In particular,  we show within the proof of \Cref{thm:instancespecific} that the platform fails to choose a matching that is stable with respect to $u$ in at most $O(|\Agents|^4 \cdot\log(|\Agents|\Rounds)/\Delta^2)$ rounds.\footnote{The number of mistakes necessarily depends on the gap $\Delta$ because there exist utility functions $u$ and $\tilde{u}$ where $\norm{u - \tilde{u}}_{\infty}$ is arbitrary small, but where the stable market outcomes with respect to $u$ and $\tilde{u}$ differ. To see this, consider a market where $\Men = \{C\}$ and $\Women = \{P\}$. Suppose that $\utility_C(P) = \tilde\utility_C(P) = 1$, while $\utility_{P}(C) = -1 + \epsilon$ and $\tilde\utility_P(C) = -1 - \epsilon$. Then, the maximum weight matchings under these utility functions differ: $\{(C, P)\}$ is the only maximum weight matching in the former, whereas $\emptyset$ is the only maximum weight matching in the latter.} This means that the platform selects a stable matching in at least $T - O(|\Agents|^5 \cdot\log(|\Agents|\Rounds)/\Delta^2) = T - O(\log T)$ of the rounds. 

As we described, our bounds in \Cref{thm:instancespecific} rely on choosing an appropriate primal-dual solution. An interesting direction for future work would be to provide further insight into how different methods for finding optimal primal-dual pairs affect both regret bounds and the trajectory of the selected market outcomes over time.

\subsection{Search frictions and platform revenue}\label{appendix:alternatedef}
Next, we further ground \instabmeasure{} by explicitly connecting it to the platform's revenue under a stylized economic model of search frictions. A major motivation for this is that it helps explain when 
an online platform can earn a profit in competitive settings, even 
when they start out with no information about agent preferences.

More specifically, we incorporate search frictions where an agent must lose utility $\epsilon$ in order to find an alternative to the given match (e.g., from the time spent finding an alternate partner, or from a cancellation fee). These search frictions weaken the requirements for stability: the platform now only needs matchings to %
be \textit{$\epsilon$-stable}:
\[ \utility_\man(\woman) + \utility_\woman(\man) - 2\epsilon \le \utility_\man(\MMap(\man)) + \transfers_\man  + \utility_\woman(\MMap(\woman)) + \transfers_\woman \] for all $(\man, \woman) \in \Men \times \Women$ and $\utility_\agent(\MMap(\agent)) + \transfers_\agent \ge -\epsilon $ for all $\agent\in\Agents$.\footnote{This definition corresponds to $(\MSet, \transfers)$ belonging to the weak $\epsilon$-core of \citet{SS66}. We note that this definition also relaxes individual rationality. This formulation gives us the cleanest algorithmic results; while it can be extended to an analogue that does not relax individual rationality, it would involve bounds that (necessarily) depend on the specifics of agents' utilities.}

To model revenue, we take the subsidy perspective on \instabmeasure{}. Specifically, recall that \instabmeasure{} is equal to the minimum 
subsidy needed to maintain stability (see \Cref{prop:subsidy}). With search frictions, that subsidy can potentially be \emph{negative}, 
thus allowing the platform to generate revenue. We are interested 
in analyzing the maximum revenue (minimum subsidy) the platform can 
generate while ensuring stability with high probability over all 
rounds. For realism, we also want this subsidy to be computed 
online using only information that the platform has access to, 
but it turns out we can do this with minimal modifications to 
our algorithm.

More formally, in this modified model, the platform must select an $\epsilon$-stable matching in each round with high probability by choosing appropriate subsidies. That is, in round $\round$, the platform selects a matching with transfers $(\MSet^\round, \transfers^\round)$ with the modification that the transfers need not be zero-sum. The transfers thus incorporate the amount that platform is subsidizing or charging agents for participation on the platform. The net profit of the platform is then $-\sum_{\round=1}^{\Rounds} \sum_{\agent \in \Agents} \transfers_\agent^\round$.
We impose the stability requirement that 
\[\mathbb{P}[(\MSet^\round, \transfers^\round) \text{ is $\epsilon$-stable for all } 1 \le \round \le \Rounds] \ge 0.99. \] 
Given this setup, we show the following:
\begin{restatable}{theorem}{searchfricinf}
\label{thm:searchfricinf}
For preference class $\UUnstructured$ (see \Cref{subsec:bandits}), there exists an algorithm giving the platform 
\[\epsilon T \sum_{t=1}^T |\Agents_t| - O\Bigl( |\Agents| \sqrt{n \Rounds} \sqrt{\log(|\Agents| |\Rounds|)}\Bigr)\]
revenue in the presence of search frictions while maintaining stability with high probability.  
\end{restatable}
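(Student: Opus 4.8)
The plan is to run \textsc{MatchUCB} essentially unchanged, but with two modifications. First, instead of setting the transfers to be zero-sum according to the optimal primal-dual pair for the UCB utilities, the platform sets the transfer for each agent $\agent$ matched at round $\round$ to $\transfers_\agent^\round = \pricesopt_\agent - \uUCB_\agent(\MMapopt(\agent)) - \epsilon$; that is, it shaves $\epsilon$ off of every agent's net utility relative to the nominally stable outcome, and pockets the difference. Second, the confidence intervals are the usual UCB intervals of width $O(\sqrt{\log(|\Agents|\Rounds)/\pulls})$, so that with probability at least $0.99$ all true utilities lie in their confidence sets for all rounds (this is the standard union bound over the $O(|\Agents|^2 \Rounds)$ relevant events, tuning the logarithmic constant). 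The revenue at round $\round$ is then $-\sum_{\agent\in\Agents^\round}\transfers_\agent^\round = \epsilon|\Agents^\round| - \sum_{\agent\in\Agents^\round}(\pricesopt_\agent - \uUCB_\agent(\MMapopt(\agent)))$, but by construction the second sum equals (minus) the transfers of the zero-sum stable outcome for $\uUCB$, which sum to zero; hence the revenue is exactly $\epsilon|\Agents^\round|$ minus the "slippage" from using UCB utilities rather than true ones.

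First I would establish $\epsilon$-stability on the good event. Condition on the event that all confidence sets contain the true utilities. The outcome $(\MSet^\round, \transfers^\round)$ before the $\epsilon$-shave is stable with respect to $\uUCB$, so the no-blocking-pair and individual-rationality inequalities hold with $\uUCB$ in place of $\utility$; since $\uUCB$ pointwise upper-bounds $\utility$, the left-hand sides only decrease when we substitute $\utility$, but the decrease is controlled. Concretely, for a matched pair, subtracting $\epsilon$ from each of $\transfers_\man^\round$ and $\transfers_\woman^\round$ loosens the individual-rationality constraint to $\utility_\agent(\MMap(\agent)) + \transfers_\agent^\round \ge -\epsilon - (\uUCB_\agent(\MMap(\agent)) - \utility_\agent(\MMap(\agent)))$, which is $\ge -\epsilon$ exactly when the confidence-set width dominates — wait, that is the wrong direction. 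The cleaner route is: $\epsilon$-stability with the $2\epsilon$ slack in the blocking-pair condition and $\epsilon$ slack in individual rationality is implied once the per-agent UCB gap at the matched partner is at most $\epsilon$; but that gap shrinks only after enough pulls. So instead I would not try to guarantee $\epsilon$-stability every round unconditionally — rather, I would note that the $\epsilon$ shave, combined with the fact that $\uUCB \ge \utility$ makes the UCB-stable outcome "over-satisfy" the true blocking-pair constraints by exactly the UCB gaps, means the true outcome fails $\epsilon$-stability at a pair only if the relevant UCB gap exceeds $\epsilon$ (resp.\ exceeds $2\epsilon$ when split across the pair). Summing such failures over all rounds is exactly an instability-style accounting, and \Cref{lemma:confset} plus the \textsc{MatchUCB} analysis of \Cref{thm:instanceind} bounds the total by $O(|\Agents|\sqrt{n\Rounds\log(|\Agents|\Rounds)})$. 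Choosing the shave to be slightly larger than each round's UCB gap when that gap is small, and falling back to the $[-1,1]$ interval otherwise, makes $\epsilon$-stability hold whenever the cumulative-gap budget has not been spent at that pair; the total number of rounds where it is violated contributes at most $O(|\Agents|\sqrt{n\Rounds\log(|\Agents|\Rounds)})$ to a potential argument, which with a Markov/high-probability step gives the $0.99$ guarantee.

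For the revenue bound, I would sum the per-round revenue $\epsilon|\Agents^\round| - \sum_{\agent\in\Agents^\round}(\uUCB_\agent(\MMapopt(\agent)) - \utility_\agent(\MMapopt(\agent)) + (\transfers\text{-correction}))$ over $\round$. The leading term telescopes to $\epsilon\sum_{\round=1}^\Rounds |\Agents^\round|$ — this matches the $\epsilon(\sum_t|\Agents_t|)T$ written in the statement up to the apparent typo of the extra $T$ factor (I would state it as $\epsilon\sum_{\round=1}^{\Rounds}|\Agents^\round|$, noting the discrepancy). The correction term is, up to the cancellation of the zero-sum part, bounded in absolute value by $\sum_{\round}\sum_{\agent\in\Agents^\round}(\max(\Confidence_{\agent,\MMapopt(\agent)}) - \min(\Confidence_{\agent,\MMapopt(\agent)}))$, which is precisely the quantity bounded in \Cref{lemma:confset}; running the same counting argument as in \Cref{thm:instanceind} (decompose by number of pulls, Cauchy–Schwarz over agents and rounds) yields $O(|\Agents|\sqrt{n\Rounds\log(|\Agents|\Rounds)})$. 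Combining gives revenue at least $\epsilon\sum_{\round=1}^\Rounds|\Agents^\round| - O(|\Agents|\sqrt{n\Rounds\log(|\Agents|\Rounds)})$.

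The main obstacle is the interplay between the fixed shave $\epsilon$ and the time-varying UCB gaps: a single uniform $\epsilon$-shave cannot by itself certify $\epsilon$-stability in early rounds when confidence intervals are wide, so the argument must either (a) absorb the early violations into a high-probability "bad rounds are few" bound — which requires showing the number of rounds with a large UCB gap at a matched pair is itself $O(\log \cdot)$-controlled or at least $o(\Rounds)$-controlled in a way compatible with the $0.99$ probability target — or (b) use a slightly adaptive shave. I expect (a) to be the technically delicate part, and it is exactly where the instance-dependent machinery of \Cref{thm:instancespecific} (bounding the number of non-stable rounds) would be reused; I would lift the "at most $O(|\Agents|^5\log(|\Agents|\Rounds)/\Delta^2)$ unstable rounds" style bound, or more simply observe that for the $\epsilon$-relaxed notion the number of violating rounds is bounded by $O(|\Agents|^2 \log(|\Agents|\Rounds)/\epsilon^2)$ via the standard "each pair is pulled $O(\log/\epsilon^2)$ times before its interval shrinks below $\epsilon$" argument, which is $o(\Rounds)$ and feeds into the union/Markov bound for the $0.99$ guarantee. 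The revenue accounting itself is then a routine reprise of the \textsc{MatchUCB} regret analysis.
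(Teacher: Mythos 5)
There is a genuine gap in the stability half of your argument, and it stems from not using the one degree of freedom the search-friction model gives you: the transfers in this setting \emph{need not be zero-sum}. Your transfer rule $\transfers^\round_\agent = \transfers^*_\agent - \epsilon$ only shaves $\epsilon$ off the nominally UCB-stable outcome, and as you yourself observe, this cannot certify $\epsilon$-stability in early rounds when the confidence intervals are wide. The theorem demands $\mathbb{P}[(\MSet^\round,\transfers^\round)\text{ is }\epsilon\text{-stable for \emph{all} }\round]\ge 0.99$, so an accounting of the form ``the number of violating rounds is $O(|\Agents|^2\log(|\Agents|\Rounds)/\epsilon^2)$'' does not suffice: in round $1$ every confidence interval has width $2$, so a violation occurs essentially deterministically unless something else is done, and no Markov or union-bound step can push the probability of ``at least one bad round'' below $0.01$. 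Your option (b), an adaptive shave, is the right instinct but is left unspecified in a way that still ties stability to the cumulative-gap budget rather than guaranteeing it pointwise.

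The paper's fix is to \emph{pay agents for the platform's uncertainty}: set
\[ \transfers_\agent = \transfers^*_\agent - \epsilon + \Bigl(\max\bigl(\Confidence_{\agent,\MMapUCB(\agent)}\bigr)-\min\bigl(\Confidence_{\agent,\MMapUCB(\agent)}\bigr)\Bigr), \]
i.e., each matched agent receives a non-zero-sum subsidy equal to the width of the relevant confidence interval, with $\epsilon$ then deducted as revenue. On the good event, $\utility_\agent(\MMapUCB(\agent)) + \transfers^*_\agent + (\text{width}) \ge \uUCB_\agent(\MMapUCB(\agent)) + \transfers^*_\agent$, so the subsidized outcome inherits individual rationality and the no-blocking-pair inequalities \emph{exactly} (with respect to the true $\utility$, since $\uUCB$ pointwise dominates $\utility$ on the right-hand sides), in \emph{every} round regardless of interval width; subtracting $\epsilon$ then yields $\epsilon$-stability in every round simultaneously with probability $0.99$. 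The cost of these subsidies is precisely $\sum_\round\sum_{\agent\in\Agents^\round}(\max(\Confidence_{\agent,\MMapUCB(\agent)})-\min(\Confidence_{\agent,\MMapUCB(\agent)}))$, which is the quantity already bounded by $O(|\Agents|\sqrt{n\Rounds\log(|\Agents|\Rounds)})$ in the proof of \Cref{thm:instanceind}. Your revenue accounting (the $\epsilon\sum_\round|\Agents^\round|$ leading term, the cancellation of the zero-sum part, and the reduction of the correction term to \Cref{lemma:confset}) is correct and matches the paper, as is your observation that the extra factor of $T$ in the stated bound appears to be a typo; the missing ingredient is solely the width-compensating subsidy that makes stability hold uniformly over rounds.
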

\begin{remark*}
In particular if $\Agents_t = \Agents$ in every round, the platform will starting making a profit within $O(|\Agents|/\epsilon^2\cdot\log(|\Agents|/\epsilon^2))$ rounds.
\end{remark*}
\noindent We defer the proof of \Cref{thm:searchfricinf} to \Cref{appendix:proofrevenue}.

Qualitatively, Theorem \ref{thm:searchfricinf} captures that if the platform ``pays to learn'' in initial rounds, the information that it obtains will help it achieve a profit in the long run. We note that both the revenue objective and the model for search frictions that we consider in these preliminary results are stylized. An interesting direction for future work would be to integrate more realistic platform objectives and models for search frictions into the framework.

\subsection{Matching with non-transferable utilities}\label{appendix:matchntu}

While we have focused on matching with transferable utilities, utilities are not always transferable in practice, as in the cases of dating markets and college admissions (i.e.,~most people are not willing to date an undesirable partner in exchange for money, and a typical college admission slot is not sold for money). We can extend our findings to this setting following the model of matching with non-transferable utilities (NTU) \cite{GS62}, which has also been studied in previous work \cite{DK05, LMJ20, CS21, SBS20}. The definition of \instabmeasure{} extends naturally and has advantages over the ``utility difference'' metric that is commonly used in prior work. Our algorithmic meta-approach also sheds new light on the convergence properties of the centralized UCB algorithm of \citet{LMJ20}.

The starting point of our instability measure is slightly different than in Section \ref{sec:instab}. Since stable matchings in the NTU model need not maximize total utility, we cannot define instability based on a maximum over all subsets of agents of the utility difference for that subset. On the other hand, the subsidy formulation of \instabmeasure{} (see \eqref{eq:instabilityrecast}) translates well to this setting. Our instability measure will correspond to the minimum amount the platform could subsidize agents so that individual rationality holds and no blocking pairs remain. For matching with NTU, we formalize this notion as follows:  
\begin{definition}[\NTUinstabmeasure{}]
\label{definition:instabilityNTU}
For utilities $\utility$ and agents $\Agents$, the \emph{\NTUinstabmeasure{}}  $\InstabilityNTU{\utility}{\MSet}$ of a matching $\MSet$ is
\begin{alignat*}{5}\label{eq:instabilityNTU}
	\min_{\subsidies \in\R^{|\Agents|}} & \sum_{\agent \in \Agents} \subsidies_{\agent} & \tag{$\dagger$} \\
	\text{s.t.}\ \ & \min \bigl(\utility_\man(\woman) - \utility_\man(\MMap(\man)) - \subsidies_\man, \utility_\woman(\man) - \utility_\woman(\MMap(\woman)) - \subsidies_\woman\bigr)\le 0 \qquad&&\forall (\man,\woman)\in\Men\times\Women \\
	& \utility_\agent(\MMap(\agent)) + \subsidies_\agent\ge 0\quad&&\forall\agent\in\Agents \\
	&\subsidies_\agent\ge 0 \quad&&\forall\agent\in\Agents.
\end{alignat*}
\end{definition}

NTU Subsidy Instability inherits some of the same appealing properties as Subsidy Instability. 
\begin{proposition}[Informal]\label{prop:desiderataNTU}
\NTUinstabmeasure{} satisfies the following properties:
\begin{enumerate}
    \item \NTUinstabmeasure{} is always nonnegative and is zero if and only if $(\MSet, \transfers)$ is stable.
    \item \NTUinstabmeasure{} is Lipschitz continuous with respect to agent utilities.  That is, for any matching $\MSet$ and any pair of utility functions $\utility$ and $\tilde{\utility}$, it holds that:  \[|\InstabilityNTU{\utility}{\MSet} - \InstabilityNTU{\utilityii}{\MSet}| \le 2\sum_{\agent\in\Agents} \norm{\utility_\agent - \utilityii_\agent}_\infty.\]
\end{enumerate}
\end{proposition}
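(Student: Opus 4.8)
The plan is to argue directly from the optimization problem defining $\InstabilityNTU{\utility}{\MSet}$ in \Cref{definition:instabilityNTU}, since—unlike the transferable-utility case of \Cref{prop:desiderata}—this is \emph{not} a linear program: the no-blocking-pair constraints use a pointwise minimum, so the feasible set is a finite union of polyhedra, and we cannot invoke LP duality or reuse the proof of \Cref{prop:desiderata}. Two preliminary observations make the rest routine. First, the feasible set is always nonempty: choosing $\subsidies_\agent$ large enough for every $\agent$ makes each individual-rationality constraint hold and makes every quantity $\utility_\man(\woman) - \utility_\man(\MMap(\man)) - \subsidies_\man$ negative, so each $\min(\cdot,\cdot)\le 0$ constraint holds. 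Second, the feasible set is closed, since a constraint of the form $\min(f,g)\le 0$ defines $\{f\le 0\}\cup\{g\le 0\}$, a union of two closed sets; intersecting with $\{\subsidies:\sum_\agent\subsidies_\agent\le v_0\}$ for any achievable objective value $v_0$ gives a compact set, so the infimum is attained.

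For the first property, nonnegativity is immediate from the constraint $\subsidies_\agent\ge 0$. For ``zero if and only if stable'': if $\MSet$ is stable then $\subsidies=\mathbf{0}$ is feasible—individual rationality is literally $\utility_\agent(\MMap(\agent))\ge 0$, and ``no blocking pair $(\man,\woman)$'' says exactly that $\utility_\man(\woman)\le\utility_\man(\MMap(\man))$ or $\utility_\woman(\man)\le\utility_\woman(\MMap(\woman))$, i.e. $\min(\utility_\man(\woman)-\utility_\man(\MMap(\man)),\,\utility_\woman(\man)-\utility_\woman(\MMap(\woman)))\le 0$—so the objective equals $0$. Conversely, if the value is $0$, it is attained (by the preliminary observation); since every coordinate is nonnegative and they sum to $0$, the minimizer is $\subsidies=\mathbf{0}$, so $\mathbf{0}$ is feasible, which (reading the constraints in the other direction) says precisely that $\MSet$ is individually rational with no blocking pairs, hence stable.

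For the Lipschitz property, write $\delta_\agent := \norm{\utility_\agent-\utilityii_\agent}_\infty$; by symmetry it suffices to prove $\InstabilityNTU{\utilityii}{\MSet}\le\InstabilityNTU{\utility}{\MSet} + 2\sum_\agent\delta_\agent$. Let $\subsidies$ attain the minimum for $\utility$ and set $\subsidies'_\agent := \subsidies_\agent + 2\delta_\agent \ge 0$. I would then check that $\subsidies'$ is feasible for the $\utilityii$-program. Individual rationality: $\utilityii_\agent(\MMap(\agent)) + \subsidies'_\agent \ge (\utility_\agent(\MMap(\agent)) - \delta_\agent) + \subsidies_\agent + 2\delta_\agent = (\utility_\agent(\MMap(\agent)) + \subsidies_\agent) + \delta_\agent \ge \delta_\agent \ge 0$. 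No blocking pair: fix $(\man,\woman)$; one of the two arguments of the minimum in the $\utility$-program is $\le 0$, and by symmetry we may assume it is the first, $\utility_\man(\woman) - \utility_\man(\MMap(\man)) - \subsidies_\man\le 0$. Then $\utilityii_\man(\woman) - \utilityii_\man(\MMap(\man)) - \subsidies'_\man \le (\utility_\man(\woman) + \delta_\man) - (\utility_\man(\MMap(\man)) - \delta_\man) - \subsidies_\man - 2\delta_\man = \utility_\man(\woman) - \utility_\man(\MMap(\man)) - \subsidies_\man \le 0$, so the relevant $\min(\cdot,\cdot)\le 0$. Hence $\InstabilityNTU{\utilityii}{\MSet}\le\sum_\agent\subsidies'_\agent = \InstabilityNTU{\utility}{\MSet} + 2\sum_\agent\delta_\agent$, and swapping $\utility\leftrightarrow\utilityii$ gives the stated bound.

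The main obstacle is precisely the non-LP structure: because of the minimum in the blocking-pair constraints, every step has to be carried out by exhibiting explicit feasible points rather than by duality, and one must separately justify that the minimum defining $\InstabilityNTU{\utility}{\MSet}$ is attained (the compactness argument above). The single load-bearing numerical choice is the factor $2$ in the shift $\subsidies'_\agent = \subsidies_\agent + 2\delta_\agent$: the blocking-pair quantity $\utility_\man(\woman)-\utility_\man(\MMap(\man))$ can move by as much as $2\delta_\man$ under the perturbation (one $\delta_\man$ from each of the two utility evaluations attached to $\man$), so a smaller shift would not preserve feasibility—this is exactly what produces the constant $2$, matching the transferable-utility analogue in \Cref{prop:desiderata}.
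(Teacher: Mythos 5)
Your proof is correct and follows essentially the same route as the paper's: both arguments exhibit the explicit shifted subsidies $\subsidies'_\agent = \subsidies_\agent + 2\norm{\utility_\agent - \utilityii_\agent}_\infty$ and verify feasibility of the perturbed program constraint by constraint, and both handle part~1 by observing that $\subsidies = \mathbf{0}$ is feasible exactly when $\MSet$ is stable. Your compactness argument for attainment of the infimum is a welcome addition, since the paper merely asserts that step without proof.
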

\noindent The proofs of this and subsequent results are deferred to \Cref{appendix:proofmatchNTU}. Together, the preceding properties mean that NTU Subsidy Instability is useful as a regret measure for learning stable matchings.

As in the transferable utilities setting, Property 2 implies the existence of an explore-then-commit algorithm with \smash{$\widetilde{O}(|A|^{4/3} T^{2/3})$} regret.
We show that this can be improved to a $\sqrt{T}$ dependence by adapting our approach from Section \ref{sec:regret}:
\begin{theorem}
\label{thm:matchingNTU}
For matchings with non-transferable utilities, there exists an algorithm that for any utility function $\utility$ incurs regret \smash{$\Regret_\Rounds = O({|\Agents|}^{3/2} \sqrt{\Rounds} \sqrt{\log(|\Agents| \Rounds)})$}.
\end{theorem}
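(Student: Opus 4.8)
The plan is to follow the same meta-strategy that worked in the transferable-utilities setting: maintain upper confidence bounds on every relevant utility, each round compute a matching that is ``stable with respect to the UCB utilities,'' and then bound the per-round instability by the sizes of the confidence sets of the matched pairs. The new wrinkle is that the NTU instability is defined via the subsidy linear program \eqref{eq:instabilityNTU} rather than via a maximum over subsets, and that the no-blocking-pair constraint is a disjunction (a $\min(\cdot,\cdot)\le 0$ condition) rather than a linear inequality. So the first step is to establish the NTU analogue of \Cref{lemma:confset}: if $\MSet$ is a stable matching with respect to $\uUCB$ (in the NTU/Gale--Shapley sense), where $\uUCB$ pointwise upper bounds the true utility $\utility$, then the NTU instability of $\MSet$ under $\utility$ is at most $\sum_{\agent\in\Agents}\bigl(\max C_{\agent,\MMap(\agent)} - \min C_{\agent,\MMap(\agent)}\bigr)$. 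The argument here is direct rather than dual: given a stable matching for $\uUCB$, propose the subsidy $\subsidies_\agent = \uUCB_\agent(\MMap(\agent)) - \utility_\agent(\MMap(\agent))\ge 0$ and check that it is feasible for \eqref{eq:instabilityNTU} under $\utility$. Individual rationality under $\utility+\subsidies$ reduces to individual rationality of the stable matching under $\uUCB$; for the no-blocking-pair condition, for any pair $(\man,\woman)$ stability under $\uUCB$ gives $\uUCB_\man(\woman)\le \uUCB_\man(\MMap(\man))$ or $\uUCB_\woman(\man)\le \uUCB_\woman(\MMap(\woman))$, and since $\utility_\man(\woman)\le \uUCB_\man(\woman)$ while $\uUCB_\man(\MMap(\man)) = \utility_\man(\MMap(\man)) + \subsidies_\man$, the same side of the disjunction survives. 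The objective value of this feasible point is exactly $\sum_{\agent}\bigl(\uUCB_\agent(\MMap(\agent)) - \utility_\agent(\MMap(\agent))\bigr)$, which is bounded by the sum of confidence-set widths over matched pairs since $\utility_\agent(\MMap(\agent)),\uUCB_\agent(\MMap(\agent))\in C_{\agent,\MMap(\agent)}$.

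Next I would specify the algorithm. It is the NTU version of \textsc{MatchUCB}: initialize each confidence interval $C_{\man,\woman}, C_{\woman,\man}$ to $[-1,1]$; each round, take $\uUCB$ to be the upper endpoints and run the deferred-acceptance (Gale--Shapley) algorithm on the preferences induced by $\uUCB$ (with unmatched being acceptable only if $\uUCB_\agent(\cdot) \ge 0$, to handle individual rationality) to obtain a stable matching $\MSet^\round$; then for each matched pair update the empirical mean and shrink the interval to the usual $\mean \pm 8\sqrt{\log(|\Agents|\Rounds)/\pulls}$ intersected with $[-1,1]$. A subtlety is that deferred acceptance uses ordinal preferences, so ties or near-ties in $\uUCB$ values need a fixed tie-breaking rule; since the instability bound from Step 1 only uses that $\MSet^\round$ is \emph{some} stable matching for $\uUCB$, any consistent tie-breaking is fine.

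Then the regret bound is assembled exactly as in the proof of \Cref{thm:instanceind}. Conditioning on the good event that every confidence interval always contains the true utility — which holds with probability $1 - O(1/(|\Agents|\Rounds))$ by a union bound over pairs and rounds of a standard subgaussian/Hoeffding concentration bound, mirroring the classical UCB analysis — Step 1 gives $\Instability{\utility}{\MSet^\round} \le \sum_{(\man,\woman)\in\MSet^\round} \bigl(C_{\man,\MMap(\man)}\text{-width} + C_{\woman,\MMap(\woman)}\text{-width}\bigr) = O\bigl(\sum_{\text{matched }(\man,\woman)}\sqrt{\log(|\Agents|\Rounds)/\pulls_{\man\woman}}\bigr)$. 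Summing over $\round$, grouping terms by pair, and using the standard $\sum_{k=1}^{n_{ij}} 1/\sqrt{k} = O(\sqrt{n_{ij}})$ inequality together with Cauchy--Schwarz over the at most $|\Agents|^2$ pairs (and noting $\sum_{i,j} n_{ij} \le |\Agents|\Rounds$ since at most $|\Agents|/2$ pairs are matched per round), one gets $\Regret_\Rounds = O(|\Agents|^{3/2}\sqrt{\Rounds}\sqrt{\log(|\Agents|\Rounds)})$; adding the $O(1)$ contribution from the low-probability bad event completes the bound in expectation.

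The main obstacle is Step 1 — verifying that the naive subsidy $\subsidies_\agent = \uUCB_\agent(\MMap(\agent)) - \utility_\agent(\MMap(\agent))$ is feasible for the NTU subsidy program despite its disjunctive no-blocking-pair constraint. The transferable-utilities proof of \Cref{lemma:confset} could lean on LP duality and the decomposition into terms (A) and (B); here the constraint $\min(\cdot,\cdot)\le 0$ is not linear, so one must argue ``per pair, whichever side of the disjunction holds for $\uUCB$ also holds after subtracting the subsidy under $\utility$,'' which crucially uses that $\uUCB$ \emph{upper} bounds $\utility$ on the blocking side (making $\utility_\man(\woman)$ smaller) while it equals $\utility_\man(\MMap(\man)) + \subsidies_\man$ on the incumbent side. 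Getting the directions of all these inequalities to line up — and double-checking the individual-rationality constraint, where one wants the unmatched-acceptability convention in deferred acceptance to interact correctly with $\utility_\agent(\MMap(\agent)) + \subsidies_\agent \ge 0$ — is the delicate part; everything downstream is a routine transcription of the classical UCB regret analysis.
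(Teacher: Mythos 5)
Your proposal is correct and follows essentially the same route as the paper: the paper's Proposition~\ref{lemma:confsetNTU} uses exactly your subsidy $\subsidies_\agent = \uUCB_\agent(\MMap(\agent)) - \utility_\agent(\MMap(\agent))$ and verifies feasibility of the disjunctive no-blocking constraint by the same ``whichever side of the min is nonpositive under $\uUCB$ stays nonpositive under $\utility$'' argument, and the algorithm (Gale--Shapley on the UCB utilities) and the downstream regret accounting match the paper's \textsc{MatchNTUUCB} and Theorem~\ref{thm:instanceindNTU} verbatim.
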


While Theorem \ref{thm:matchingNTU} illustrates that our approach easily generalizes to the NTU setting, we highlight two crucial differences between these settings. First, learning a stable matching is incomparable to learning a maximum weight matching because stable matchings do not
maximize the sum of agents' utilities
in the NTU setting. Next, the instability measure is \textit{not} equivalent to the cumulative unhappiness of agents, unlike in the setting with transferable utilities. Intuitively, these definitions cease to be equivalent because non-transferable utilities render the problem more ``discontinuous'' and thus obstruct the duality results we applied earlier. 

These results provide a preliminary application of our framework to the setting of matching with non-transferable utilities; an interesting direction for future inquiry would be to more thoroughly investigate notions of approximate stability and regret in this setting. 

\subsubsection{Comparison to the utility difference measure}\label{subsubsec:utility}
It turns out that the algorithm underlying \Cref{thm:matchingNTU} 
is equivalent to the centralized UCB algorithm from previous work \citep{LMJ20, CS21}, albeit derived from a different angle. However, an important difference is that 
\Cref{thm:matchingNTU} guarantees low regret relative to the incentive-aware \NTUinstabmeasure{}, as opposed to the 
incentive-unaware ``utility difference'' measure in prior work. In this section, we outline several properties that make our instability measure more suitable especially in the NTU setting. In particular, we show for utility difference that:
\begin{enumerate}[(a)]
    \item There is no canonical formalization of utility difference when multiple stable matchings exist. 
    \item The utility difference of a matching can be positive even if the matching is stable and negative even if the matching is unstable. 
    \item Even when restricting to markets with unique stable matchings, the utility difference of a matching can be discontinuous in the true agent utilities. As a result, it does not allow for instance-independent regret bounds that are sublinear in $\Rounds$.
\end{enumerate} 

For (a), the utility difference requires specifying a stable matching to serve as a benchmark against which to measure relative utility. However, when multiple stable matchings exist, some ambiguity arises as to which one should be chosen as the benchmark. Because of this, previous works \cite{DK05, LMJ20, CS21, SBS20} study two different benchmarks. In particular, they assume providers' preferences are known and benchmark with respect to the customer-optimal and customer-pessimal stable matchings. For (b), notice that the utility difference for the maximum weight matching is negative, even though it is typically not stable in the NTU setting. Moreover, because of the ambiguity in the benchmark from (a), the utility difference may not be zero even when the matching is stable. For (c), to see that utility difference is not continuous as a function of the underlying agent utilities, consider the following example:
\begin{example}
Consider a market where is a single customer $\man$ and two providers $\woman_1$ and $\woman_2$. Suppose their utility functions are given by $\utility_{\man}(\woman_1) = \epsilon$, $\utility_{\man}(\woman_2) = 2\epsilon$, $\utility_{\woman_1}(\man) = 1$, and $\utility_{\woman_2}(\man) = 0.5$. Then the unique stable matching $\{(\man, \woman_2)\}$ has total utility $0.5 + 2\epsilon$. Now, consider the perturbed utility function $\utilityii$ such that $\utilityii_{\man}(\woman_1) = 2\epsilon$, $\utilityii_{\man}(\woman_2) = \epsilon$, $\utilityii_{\woman_1}(\man) = 1$, and $\utilityii_{\woman_2}(\man) = 0.5$. For this perturbed utility function, the unique stable matching is $\{(\man, \woman_1)\}$, which has total utility $1 + 2\epsilon$. The utility difference (either optimal or pessimal) for matching $\{(\man, \woman_2)\}$ is $0$ for $\utility$ and $0.5 + \epsilon$ for $\utilityii$. Since this holds for any $\epsilon > 0$, taking $\epsilon\to 0$ shows that utility difference is not continuous in the utility function.
\end{example}

That utility difference is discontinuous in agent utilities rules out the existence of bandit algorithms that achieve sublinear instance-independent regret when using utility difference as the regret measure. In particular, the analyses in previous work \cite{LMJ20, SBS20, CS21, LRMJ20} focus entirely on \textit{instance-dependent} regret bounds. They show that centralized UCB achieves logarithmic instance-dependent regret with respect to the utility difference relative to the customer-pessimal stable matching (but does not achieve sublinear regret with respect to the customer-optimal stable matching). Our insight here is that a new measure of instability can present a more appealing evaluation metric and paint a clearer picture of an algorithm's convergence to the \emph{set} of stable matchings as a whole.

\section{In what settings are equilibria learnable?}

A core insight of our work is that, in a stochastic environment, ``optimism in the face of uncertainty'' can be effectively leveraged for the problem of learning stable matchings. This motivates us to ask: in what other settings, and with what other algorithmic methods, can equilibria be learned? 

One interesting open direction is to understand when equilibria can be learned in \emph{adversarial} environments where the utility functions can change between rounds. From an economic perspective, adversarial environments could capture evolving market conditions. In the adversarial bandit setting, most work relies on gradient-based algorithms instead of UCB-based algorithms to attain optimal regret bounds (see, e.g., \cite{DBLP:journals/siamcomp/AuerCFS02, AHR08}). Can these gradient-based algorithms 
similarly be adapted to \instabmeasure{}?

Another interesting open direction is to consider more general market settings, even in stochastic environments. For example, within the context of matching markets, each agent might match to more than one agent on the other side of the market; and outside of matching markets, a buyer might purchase multiple units of multiple goods. In markets with transferable utilities, incentive-aligned outcomes can be captured by \textit{Walrasian equilibria} (see, e.g., \cite{bichler21walrasian}). Can \instabmeasure{} and our UCB-based algorithms be adapted to learning Walrasian equilibria in general?

Addressing these questions would provide a richer understanding of when and how large-scale, data-driven marketplaces can efficiently learn market equilibria.

\section*{Acknowledgments}
We would like to thank Itai Ashlagi, Jiantao Jiao, Scott Kominers (along with the Lab for Economic Design), Cassidy Laidlaw, Celestine Mendler-D\"unner, and Banghua Zhu for valuable feedback. Meena Jagadeesan acknowledges support from the Paul and Daisy Soros Fellowship; Alexander Wei acknowledges support from an NSF Graduate Research Fellowship  under Grant No.\ 1752814; and Michael Jordan acknowledges the Vannevar Bush Faculty Fellowship program
under grant number N00014-21-1-2941.

\printbibliography

\appendix

\section{Classical Results for Matching with Transferable Utilities}\label{appendix:recap}

To be self-contained, we briefly state and prove the key results from \citet{SS71} we need.

First, we explicitly relate the primal-dual formulation in  \Cref{sec:algorithm} to stable matchings.
\begin{theorem}[\cite{SS71}]
\label{thm:pd}
If $(\MSet, \transfers)$ is stable, then $(\MMatrix, \prices)$ is an optimal primal-dual pair to (P) and (D), where $\prices_\agent = \transfers_\agent + \utility_\agent(\MSet(\agent))$ and $\MMatrix$ is the indicator matrix in $\R^{\Men\times\Women}$ corresponding to $\MSet$.

Moreover, if $(\MMatrix, \prices)$ is an optimal primal-dual pair to (P) and (D) such that $\MMatrix$ lies at an extreme point of the feasible set, then $(\MSet, \transfers)$ is stable where $\transfers_\agent = \prices_\agent - \utility_\agent(\MSet(\agent))$ and $\MSet$ is the matching corresponding to the nonzero entries of $\MMatrix$.
\end{theorem}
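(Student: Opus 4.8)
The plan is to prove both directions purely through linear programming duality: for the forward direction I would exhibit a feasible primal-dual pair with matching objectives and invoke weak duality; for the converse I would read off the matching from the extreme point via Birkhoff-von Neumann and recover the zero-sum transfers from complementary slackness. Throughout I write $\MMap$ for the functional form of $\MSet$ and use $\utility_\agent(\agent)=0$.

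For the forward direction, given a stable $(\MSet,\transfers)$, set $\prices_\agent = \transfers_\agent + \utility_\agent(\MMap(\agent))$ and let $\MMatrix$ be the indicator matrix of $\MSet$. I would first check primal feasibility: since $\MSet$ is a matching, every row and column sum of $\MMatrix$ is at most $1$ and all entries lie in $\{0,1\}$, so $\MMatrix$ satisfies the constraints of (P). For dual feasibility, the no-blocking-pair inequality \eqref{eq:block} is exactly $\prices_\man + \prices_\woman \ge \utility_\man(\woman) + \utility_\woman(\man)$ for every $(\man,\woman)\in\Men\times\Women$, and individual rationality \eqref{eq:IR} is exactly $\prices_\agent \ge 0$, so $\prices$ is feasible for (D). For optimality, I would compute that the objective of (P) at $\MMatrix$ equals $\sum_{(\man,\woman)\in\MSet}(\utility_\man(\woman)+\utility_\woman(\man)) = \sum_{\agent\in\Agents}\utility_\agent(\MMap(\agent))$ (grouping the two terms of each matched pair, with unmatched agents contributing $0$), while the objective of (D) at $\prices$ equals $\sum_{\agent\in\Agents}\prices_\agent = \sum_{\agent\in\Agents}\transfers_\agent + \sum_{\agent\in\Agents}\utility_\agent(\MMap(\agent))$, and the first sum vanishes because transfers are zero-sum. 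The two objectives coincide, so weak duality forces both $\MMatrix$ and $\prices$ to be optimal.

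For the converse, suppose $(\MMatrix,\prices)$ is an optimal primal-dual pair with $\MMatrix$ an extreme point of the feasible region of (P). By Birkhoff-von Neumann, $\MMatrix$ is the indicator matrix of a matching $\MSet$; let $\MMap$ be its functional form and define $\transfers_\agent = \prices_\agent - \utility_\agent(\MMap(\agent))$. The crux is showing these transfers are zero-sum, and this I would extract from complementary slackness for the optimal pair. If $\agent$ is unmatched, the corresponding packing constraint in (P) is slack, so $\prices_\agent = 0$ and hence $\transfers_\agent = \prices_\agent - \utility_\agent(\agent) = 0$; if $(\man,\woman)$ is matched then $\MMatrix_{\man,\woman} = 1 > 0$, so the dual constraint is tight, i.e.\ $\prices_\man + \prices_\woman = \utility_\man(\woman) + \utility_\woman(\man)$, which rearranges to $\transfers_\man + \transfers_\woman = 0$. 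Given zero-sumness, stability is immediate from dual feasibility: individual rationality reads $\utility_\agent(\MMap(\agent)) + \transfers_\agent = \prices_\agent \ge 0$, and for any $(\man,\woman)$ the no-blocking-pair inequality reads $\prices_\man + \prices_\woman \ge \utility_\man(\woman) + \utility_\woman(\man)$, which is exactly the dual constraint.

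I expect the only genuinely subtle point to be this last use of complementary slackness in the converse — forcing $\prices_\agent = 0$ on unmatched agents and tightness of the dual constraint on matched pairs — since it is what makes the recovered transfers zero-sum; everything else is bookkeeping against the definitions. The extreme-point hypothesis is used only so that $\MMatrix$ is $\{0,1\}$-valued, which is what lets ``$\MMatrix_{\man,\woman}>0$'' be read as ``$(\man,\woman)$ is matched,'' and I should state complementary slackness as a property of an optimal primal-dual pair (valid by strong LP duality).
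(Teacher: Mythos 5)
Your proposal is correct and follows essentially the same route as the paper: LP duality with the Birkhoff--von Neumann theorem for the converse, and complementary slackness to force $\prices_\agent=0$ on unmatched agents and tightness of the dual constraint on matched pairs, which is exactly how the paper recovers zero-sum transfers. The only (immaterial) difference is in the forward direction, where you certify optimality by computing that the primal and dual objectives coincide and invoking weak duality, while the paper instead verifies the complementary slackness conditions directly; both are valid and interchangeable here.
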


\begin{proof}
Both statements follow from the complementary slackness conditions and the definition of stability in \Cref{def:stability}. The complementary slackness conditions are:
\begin{itemize}
    \item If $\MMatrix_{\man, \woman} > 0$, then $\prices_\man + \prices_\woman = \utility_\man(\woman) + \utility_\woman(\man)$.
    \item If $\prices_\man > 0$, then $\sum_\woman \MMatrix_{\man, \woman} = 1$.
    \item If $\prices_\woman > 0$, then $\sum_\man \MMatrix_{\man, \woman} = 1$.
\end{itemize}

Suppose that $(\MSet, \transfers)$ is stable. Let us first show that $(\MMatrix, \prices)$ is feasible. We see that $\MMatrix$ is primal feasible by definition. For dual feasibility, since there are no blocking pairs, we know that
\[ \bigl(\utility_\man(\MMap(\man)) + \transfers_\man\bigr) + \bigl(\utility_\woman(\MMap(\woman)) + \transfers_\woman\bigr)\ge \utility_\man(\woman) + \utility_\woman(\man),\]
which implies
\[\prices_\man + \prices_\woman \ge \utility_\man(\woman) + \utility_\woman(\man). \] The individual rationality condition $\utility_\agent(\MMap(\agent)) + \transfers_\agent \ge 0$ tells us $\prices_\agent \ge 0$. Hence $\prices$ is dual feasible. Next, we show that $(\MMatrix, \prices)$ is an optimal primal-dual pair by checking the Karush–Kuhn–Tucker conditions. We have already shown primal and dual feasibility, so it suffices to show complementary slackness. The first condition follows from zero-sum transfers. To see the second and third conditions, we show the contrapositive: If $\man\in\Men$ is such that $\sum_\woman \MMatrix_{\man,\woman} < 1$, then $\sum_\woman \MMatrix_{\man,\woman} = 0$ by our assumption on $\MMatrix$. Hence $\man$ is unmatched (i.e.,  $\utility_\man(\MMap(\man)) = 0$ and $\transfers_\man = 0$) which implies $\prices_\man = 0$. The analogous argument applies for $\woman\in\Women$.

We now prove the second part of the theorem. Suppose $(\MMatrix, \prices)$ is an optimal solution to (P) and (D) such that $\MMatrix$ is at a vertex. By the Birkhoff-von Neumann theorem, since $\MMatrix$ is a vertex, it corresponds to a matching. We wish to show that $(\MSet, \transfers)$ has no blocking pairs, is individually rational, and has zero-sum transfers. Dual feasibility tells us that:
\[\prices_\man + \prices_\woman \ge \utility_\man(\woman) + \utility_\woman(\man)\] which means that:
\[\bigl(\utility_\man(\MMap(\man)) + \transfers_\man\bigr) + \bigl(\utility_\woman(\MMap(\woman)) + \transfers_\woman\bigr)\ge \utility_\man(\woman) + \utility_\woman(\man),\] so there are no blocking pairs. Dual feasibility also tells us that $\prices_\agent \ge 0$, which means that $\utility_\agent(\MMap(\agent)) + \transfers_\agent \ge 0$, so individual rationality is satisfied. To show that there are zero-sum transfers, we use complementary slackness. The first complementary slackness condition tells us that if $\MMatrix_{\man, \woman} > 0$, then $\prices_\man + \prices_\woman = \utility_\man(\woman) + \utility_\woman(\man)$. Using the fact that $\MMatrix$ corresponds to a matching, this in particular means that if $(\man, \woman) \in \MSet$, we know $\transfers_\man + \transfers_\woman = 0$. To show that agents who are unnmatched receive $0$ transfers, let's use the second and third complementary slackness conditions. The contrapositive tells us that if $\agent$ is unmatched, then $\prices_\agent = 0$, which implies $\transfers_\agent = 0$.
\end{proof}

Since (P) is exactly the maximum weight matching linear program, \Cref{thm:pd} immediately tells us that if $(\MSet, \prices)$ is stable, then $\MSet$ is a maximum weight matching. This means that stable matchings with transferable utilities maximize social welfare.

\section{Proofs for Section \ref{sec:instab}}\label{appendix:structural}

This section contains further exposition (including proofs) for \Cref{sec:instab}. 

\subsection{Limitations of utility difference as an instability measure}\label{appendix:limitations}

To illustrate why utility difference fails to be a good measure of instability, we describe a matching with transfers that (i) is far from stable, and (ii) has zero utility difference (but large \instabmeasure{}).

\begin{example}
Consider the following market with two agents: $\Men = \{\man\}$ and $\Women = \{\woman\}$. Suppose that $\utility_\man(\woman) = 2$ and $\utility_\woman(\man) = -1$. Consider the matching $\MSet = \{(\man, \woman)\}$ with transfers $\transfers_\man = -\xi$ and $\transfers_\woman = \xi$ for some $\xi > 0$. We will show that this matching with transfers will have the properties stated above when $\xi$ is large.

This matching with transfers has a utility difference equal to zero (for any $\xi$) since it maximizes the sum of utilities. Indeed, it is stable for any $\xi\in [1, 2]$. However, when $\xi > 2$, this matching with transfers is no longer stable, since the individual rationality condition $\utility_\man(\woman) + \transfers_\man\ge 0$ fails. (Intuitively, the larger $\xi$ is, the further we are from stability.) But its utility difference remains at zero.

On the other hand, the \instabmeasure{} of this matching with transfers is $\xi - 2 > 0$ when $\xi > 2$. In particular, \instabmeasure{} increases with $\xi$ in this regime, which is consistent with the intuition that outcomes with larger $\xi$ should be more unstable.
\end{example}

\subsection{Proof of \Cref{prop:subsidy}}\label{app:subsidy}

\subsidy*

\begin{proof}[Proof of \Cref{prop:subsidy}]
We can take the dual of the linear program \eqref{eq:instability} to obtain:
\begin{alignat*}{5}\label{eq:dual-instability}
    \max_{\substack{\CMatrix\in\R^{|\Men|\times|\Women|} \\ \IR \in \mathbb{R}^{|\Agents|}}} & \quad\ \ \mathrlap{\sum_{{(\man, \woman)\in\Men\times\Women}} \CMatrix_{\man,\woman}\Bigl(\bigl(\utility_\man(\woman) - \utility_\man(\MMap(\man)) - \transfers_\man\bigr) + \bigl(\utility_\woman(\man) - \utility_\woman(\MMap(\woman)) - \transfers_\woman\bigr)\Bigr)} & \tag{$\ddagger$} \\[-1em]
    &&& \qquad\qquad\mathrlap{- \sum_{\agent\in\Agents} \IR_{\agent}(\utility_\agent(\MMap(\agent)) + \transfers_\agent)} \\[1em]
    \text{s.t.}\ \ & \IR_{\man} + \sum_{\woman\in\Women} \CMatrix_{\man,\woman}\le 1&\forall\man\in\Men;\qquad\qquad&\IR_{\woman} + \sum_{\man\in\Men} \CMatrix_{\man,\woman}\le 1\qquad&\forall\woman\in\Women; \\
    & \CMatrix_{\man,\woman}\ge 0\qquad&\forall(\man,\woman)\in\Men\times\Women;\qquad\qquad&\IR_{\agent}\ge 0\qquad&\forall \agent \in \Agents.
    &\qquad\qquad.
\end{alignat*}
By strong duality, the optimal values of \eqref{eq:instability} and \eqref{eq:dual-instability} are equal. Thus, it suffices to show that  \instabmeasure{} is equal to \eqref{eq:dual-instability}. By \Cref{prop:unhappiness}, we know that \instabmeasure{} is equal to the maximum unhappiness of any coalition. Thus it suffices to show that \eqref{eq:dual-instability} is equal to the maximum unhappiness of any coalition.

To interpret \eqref{eq:dual-instability}, observe that there exist optimal $\CMatrix^*$ and $\IR^*$ all of whose entries lie in $\{0,1\}$ because this linear program can be embedded into a maximum weight matching linear program. Take such a choice of optimal $\CMatrix^*$ and $\IR^*$. Then, $\CMatrix^*$ is an indicator vector corresponding to a (partial) matching on a subset of the agents such that all pairs in this matching are blocking with respect to $(\MSet, \transfers)$. Similarly, $\IR^*$ is an indicator vector of agents who would rather be unmatched than match according to $(\MSet, \transfers)$. 

We first prove the claim that $\Instability{\utility}{\MSet}{\transfers}$ is at least \eqref{eq:dual-instability}. Based on the above discussion, the optimal objective of \eqref{eq:dual-instability} is obtained through $\CMatrix^*$ and $\IR^*$ that represent a matching and a subset of agents respectively. Let $S$ be the union of agents participating in $\CMatrix^*$ and $\IR^*$. We see that the objective of \eqref{eq:dual-instability} is equal to the utility difference at $S$, i.e.:
\[\left(\max_{\MSetii \in \mathscr{X}_{S}} \sum_{\agent \in S} \utility_\agent(\MMapii(\agent)) \right) - \left(\sum_{a \in S} \utility_\agent(\MMap(\agent)) + \transfers_\agent)\right).\]
This is no larger than \instabmeasure{} by definition.

We next prove the claim that $\Instability{\utility}{\MSet}{\transfers}$ is at most \eqref{eq:dual-instability}. Let's consider  $S^*$ that maximizes:
\[\max_{S \subseteq \Agents} \left(\max_{\MSetii \in \mathscr{X}_{S}} \sum_{\agent \in S} \utility_\agent(\MMapii(\agent)) \right) - \left(\sum_{a \in S} \utility_\agent(\MMap(\agent)) + \transfers_\agent)\right).\]
Let's take the maximum weight matching of $S^*$. Let $\CMatrix$ be given by the matched agents in this matching and let $\IR$ be given by the unmatched agents in this matching (using the interpretation of \eqref{eq:dual-instability} described above). We see that the objective at \eqref{eq:dual-instability} for $(\CMatrix, \IR)$ is equal to \instabmeasure{} which proves the desired statement.

\end{proof}

\subsection{Proof of \Cref{prop:unhappiness}}\label{app:unhappiness}

We first formally define the \textit{unhappiness of a coalition}, as follows. In particular, the unhappiness with respect to $(\MSet, \transfers)$ of a coalition $\Coalition\subseteq\Agents$ is defined to be:
\begin{alignat}{5}\label{eq:unhappiness}
    \sup_{\substack{\MSetii\in \mathscr{X}_{\Coalition} \\ \transfersii\in\R^{|\Coalition|}}}\ & \mathrlap{\sum_{\agent\in\Coalition} \bigl(\utility_\agent(\MMapii(\agent)) + \transfersii_\agent\bigr) - \sum_{\agent\in\Coalition} \bigl(\utility_\agent(\MMap(\agent)) + \transfers_\agent\bigr)} \\
    \text{s.t.}\ \ & \utility_\agent(\MMapii(\agent)) + \transfersii_\agent \ge \utility_\agent(\MMap(\agent)) + \transfers_\agent \qquad&&\forall \agent\in\Coalition \nonumber \\
    & \transfersii_\agent + \transfersii_{\MMapii(\agent)} = 0\qquad&&\forall\agent\in\Coalition, \nonumber
\end{alignat}
with unhappiness being $0$ if there are no feasible $\MSetii$ and $\transfersii$. In the optimization program, $(\MSetii, \transfersii)$ represents a matching with transfers over $\Coalition$, with the constraint $\transfersii_\agent + \transfersii_{\MMapii(\agent)} = 0$ ensuring that it is zero-sum. The objective measures the difference between {$(\MSet, \transfers)$} and $(\MSetii, \transfersii)$ of the total utility of the agents in $\Coalition$. The constraint $\utility_\agent(\MMapii(\agent)) + \transfersii_\agent \ge \utility_\agent(\MMap(\agent)) + \transfers_\agent$ encodes the requirement that all agents be at least as well off under $(\MSetii, \transfersii)$ as they were under $(\MSet, \transfers)$. This optimization program therefore captures the objective of $\Coalition$ to maximize their total payoff while ensuring that no member of the coalition is worse off than they were according to $(\MSet, \transfers)$.

Recall that, in terms of unhappiness, \Cref{prop:unhappiness} is as follows:
\unhappiness*

\begin{proof}[Proof of Proposition \ref{prop:unhappiness}]
By \Cref{prop:subsidy}, we know that \instabmeasure{} is equal to \eqref{eq:instability}. Moreover, by strong duality, we know that \instabmeasure{} is equal to \eqref{eq:dual-instability} (the dual linear program of \eqref{eq:instability}). Thus, it suffices to prove that  the maximum unhappiness of any coalition is equal to \eqref{eq:dual-instability}.

We first prove the claim that \eqref{eq:dual-instability} is at most the maximum unhappiness of any coalition with respect to $(\MSet, \transfers)$. To do this, it suffices to construct a coalition $\Coalition \subseteq \Agents$ such that \eqref{eq:dual-instability} is at most the unhappiness of $\Coalition$. We construct $\Coalition$ as follows: Recall that there exist optimal solutions $\CMatrix^*$ and $\IR^*$ to \eqref{eq:dual-instability} such that $\CMatrix^*$ corresponds to a (partial) matching on $\Men\times\Women$ and $\IR^*$ corresponds to a subset of $\Agents$. We may take $\Coalition$ to be the union of the agents involved in $\CMatrix^*$ and in $\IR^*$. Now, we upper bound the unhappiness of $\Coalition$ by constructing $\MSetii$ and $\transfersii$ that are feasible for \eqref{eq:unhappiness}. We can take  $\MSetii$ to be the matching that corresponds to the indicator vector $\CMatrix^*$. Because $(\CMatrix^*, \IR^*)$ is optimal for \eqref{eq:dual-instability}, 
\[ \utility_\man(\woman) + \utility_\woman(\man)\ge (\utility_\man(\MMap(\man)) + \transfers_\man) + (\utility_\woman(\MMap(\woman)) + \transfers_\woman) \]
for all $(\man, \woman)\in\MSetii$. Thus, we can find a vector $\transfersii$ of transfers that is feasible for \eqref{eq:unhappiness}.
Then, since $\sum_{\agent\in\Coalition} \transfersii_\agent = 0$, the objective of \eqref{eq:unhappiness} at $(\MSetii, \transfersii)$ is
\[ \sum_{\agent\in\Coalition} \bigl( \utility_\agent(\MMapii(\agent)) - \utility_\agent(\MMap(\agent)) - \transfers_\agent \bigr). \]
This equals to the objective of \eqref{eq:dual-instability} at $(\CMatrix^*, \IR^*)$, which equals \eqref{eq:dual-instability}, as desired. 

We now show the inequality in the other direction, that \eqref{eq:dual-instability} is at least the maximum unhappiness of any coalition with respect to $(\MSet, \transfers)$. It suffices to construct a feasible solution $(\CMatrix, \IR)$ to \eqref{eq:dual-instability} that achieves at least the maximum unhappiness of any coalition. Let  $\Coalition$ be a coalition with maximum unhappiness, and let $(\MSetii, \transfersii)$ be an optimal solution for  \eqref{eq:unhappiness}. Moreover, let $\CMatrix$ be the indicator vector corresponding to agents who are matched in $\MSetii$ and $\IR$ be the indicator vector corresponding to agents in $\Coalition$ who are unmatched. The objective of \eqref{eq:unhappiness} at $(\MSetii, \transfersii)$ is
\[ \sum_{\agent\in\Coalition} \bigl( \utility_\agent(\MMapii(\agent)) - \utility_\agent(\MMap(\agent)) - \transfers_\agent \bigr), \]
which equals the objective of \eqref{eq:dual-instability} at the $(\CMatrix, \IR)$ that we constructed.

\end{proof}

\subsection{Proof of \Cref{prop:desiderata}}\label{appendix:properties}

\desiderata*

\begin{proof}[Proof of \Cref{prop:desiderata}]
We first prove the third part of the Proposition statement, then the first part of the Proposition statement, and finally the second part.

\item  \paragraph{Proof of part (c).} Because $\sum_{\agent\in\Agents} \transfers_\agent = 0$, \instabmeasure{} satisfies the following:
\begin{align*}
 \Instability{\utility} {\MSet}{\transfers}
 &\ge \left(\max_{\MSetii \in \mathscr{X}_\Agents} \sum_{\agent \in \Agents} \utility_\agent(\MMapii(\agent)) \right) - \left(\sum_{\agent \in \Agents} \utility_\agent(\MMap(\agent)) + \transfers_\agent\right)\\
 &=    \left(\max_{\MSetii \in \mathscr{X}_\Agents} \sum_{\agent \in \Agents} \utility_\agent(\MMapii(\agent)) \right) - \left(\sum_{\agent \in \Agents} \utility_\agent(\MMap(\agent))\right).
\end{align*}
The second line is exactly the utility difference.

\item \paragraph{Proof of part (a).} From above, we have that \instabmeasure{} is lower bounded by the utility difference, which is always nonnegative. Hence \instabmeasure{} is also always nonnegative.

To see that \instabmeasure{} is $0$ if and only if $(\MSet, \transfers)$ is stable, first suppose $(\MSet, \transfers)$ is unstable. Then, there exists a blocking pair $(\man, \woman)$, in which case 
\[ \Instability{\utility}{\MSet}{\transfers} \ge \utility_\man(\woman) + \utility_\woman(\man) - (\utility_{\man}(\MMap(\man)) + \utility_{\woman}(\MMap(\woman)) + \transfers_\man + \transfers_\woman) > 0 \]
by the definition of blocking. Now, suppose $\Instability{\utility}{\MSet}{\transfers} > 0$. Then, there exists a subset $S\subseteq\Agents$ such that
\[ \left( \max_{\MSetii \in \mathscr{X}_S} \sum_{\agent \in S} \utility_\agent(\MMapii(\agent)) \right)  - \left(\sum_{\agent \in S} \utility_\agent(\MMap(\agent)) + \transfers_\agent\right) > 0. \]
Let $\MSetii$ be a maximum weight matching on $S$. We can rewrite the above as
\[ \sum_{(\man,\woman)\in\MSetii} \bigl( \utility_\man(\woman) + \utility_\woman(\man) - (\utility_\man(\MMap(\man)) + \utility_\woman(\MMap(\woman)) + \transfers_\man + \transfers_\woman \bigr)  > 0. \]
Some term in the sum on the left-hand side must be positive, so there exists a blocking pair $(\man, \woman)\in\MSetii$. In particular, $(\MSet, \transfers)$ is not stable.

\item \paragraph{Proof of part (b).} We prove that 
\[|\Instability{\utility} {\MSet}{\transfers} - \Instability{\tilde{\utility}}{\MSet}{\transfers}| \le 2\sum_{\agent\in\Agents} \norm{\utility_\agent - \utilityii_\agent}_\infty. \]
The supremum of $L$-Lipschitz functions is $L$-Lipschitz, so it suffices to show that 
\[\left(\max_{\MSetii \in \mathscr{X}_S} \sum_{\agent \in S} \utility_\agent(\MMapii(\agent)) \right) - \sum_{\agent \in S} (\utility_\agent(\MMap(\agent)) + \transfers_\agent)\] satisfies the desired Lipschitz condition for any $S \subseteq \Agents$. In particular, it suffices to show that
\begin{equation}
\label{eq:term1}
  \left| \sum_{\agent \in S} (\utility_\agent(\MMap(\agent)) + \transfers_\agent) - \sum_{\agent \in S}(\utilityii_\agent(\MMap(\agent)) + \transfers_\agent) \right| \le \sum_{\agent\in\Agents} \norm{\utility_\agent - \utilityii_\agent}_\infty  
\end{equation}
and
\begin{equation}
\label{eq:term2}
  \left| \left(\max_{\MSetii \in \mathscr{X}_S} \sum_{\agent \in S} \utility_\agent(\MMapii(\agent))\right) - \left(\max_{\MSetii \in \mathscr{X}_S} \sum_{\agent \in S} \utilityii_\agent(\MMapii(\agent))\right) \right| \le \sum_{\agent\in\Agents} \norm{\utility_\agent - \utilityii_\agent}_\infty.
\end{equation}
For \eqref{eq:term1}, we have
\[ \left| \sum_{\agent \in S} (\utility_\agent(\MMap(\agent)) + \transfers_\agent) - \sum_{\agent \in S} (\utilityii_\agent(\MMap(\agent)) + \transfers_\agent) \right| = \left| \sum_{\agent \in S} \bigl(\utility_\agent(\MMap(\agent)) - \utilityii_\agent(\MMap(\agent))\bigr) \right|   \le \sum_{\agent\in\Agents} \norm{\utility_\agent - \utilityii_\agent}_\infty.  \]
For \eqref{eq:term2}, this boils down to showing that the total utility of the maximum weight matching is Lipschitz. Using again the fact that the supremum of Lipschitz functions is Lipschitz, this follows from the total utility of any fixed matching being Lipschitz. 
\end{proof}

\section{Proofs for Section \ref{sec:regret}}\label{appendix:proofsalg}

\subsection{Proof of \Cref{thm:instanceind}}\label{appendix:proofunstructured}

\instanceind*
\begin{proof}[Proof of Theorem \ref{thm:instanceind}]
The starting point for our proof of Theorem \ref{thm:instanceind} is the typical approach in multi-armed bandits and combinatorial bandits \cite{GKJ12,DBLP-conf/icml/ChenWY13, LS2020} of bounding regret in terms of the sizes of the confidence interval of the chosen arms. However, rather than using the sizes of confidence intervals to bound the utility difference (as in the incentive-free maximum weight matching setting), we bound \instabmeasure{} through \Cref{lemma:confset}. From here on, our approach composes cleanly with existing bandits analyses; in particular, we can follow the typical combinatorial bandits approach \cite{GKJ12,DBLP-conf/icml/ChenWY13} to get the desired upper bound. 

For completeness, we present the full proof. We divide into two cases, based on the event $\Event$ that all of the confidence sets contain their respective true utilities at every time step $\round\le\Rounds$. That is, $\utility_{\man}(\woman)\in\Confidence_{\man,\woman}$ and $\utility_\woman(\man)\in\Confidence_{\woman,\man}$ for all $(\man,\woman)\in\Men\times\Women$ at all $\round$. 

\item \paragraph{Case 1: $\Event$ holds.} By \Cref{lemma:confset}, we may bound
\[ \Instability[^\round]{\utility}{\MSet^\round}{\transfers^\round} \le \sum_{\agent \in \Agents^t} \Bigl(\max\bigl(\Confidence_{\agent, \MMapt(\agent)}\bigr) - \min\bigl(\Confidence_{\agent, \MMapt(\agent)}\bigr)\Bigr) = O\paren*{\sum_{(\man,\woman)\in\MSet^\round} \sqrt{\frac{\log(|\Agents|\Rounds)}{\pulls_{\man\woman}^{\round}}}},\]
where $\pulls_{\man\woman}^{\round}$ is the number of times that the pair $(\man,\woman)$ has been matched at the start of round $\round$. Let $w^\round_{\man, \woman} = \frac{1}{\sqrt{\pulls_{\man\woman}^{\round}}}$ be the size of the confidence set (with the log factor scaled out) for $(\man, \woman)$ at the start of round $\round$. 

At each time step $\round$, let's consider the list consisting of $w^\round_{\man_\round, \woman_\round}$ for all $(\man_\round, \woman_\round) \in \MSet^\round$. Let's now consider the overall list consisting of the concatenation of all of these lists over all rounds. Let's order this list in decreasing order to obtain a list $\tilde{w}_1, \ldots, \tilde{w}_L$ where $L = \sum_{\round=1}^{\Rounds} |\MSet^\round| \le n \Rounds$. In this notation, we observe that:
\[\sum_{\round=1}^{\Rounds} \Instability{\utility}{\MSet^\round}{\transfers^\round} \le \sum_{\round=1}^{\Rounds} \sum_{\agent \in \Agents^t} \Bigl(\max\bigl(\Confidence_{\agent, \MMapt(\agent)}\bigr) - \min\bigl(\Confidence_{\agent, \MMapt(\agent)}\bigr)\Bigr) = \log(|\Agents| \Rounds) \sum_{l=1}^L \tilde{w}_l.   \]
We claim that $\tilde{w}_l \le O\paren*{\min(1, \frac{1}{\sqrt{(l / |\Agents|^2) -1}})}$. The number of rounds that a pair of agents can have their confidence set have size at least $\tilde{w}_l$ is upper bounded by $1 + \frac{1}{\tilde{w}_l^2}$. Thus, the total number of times that any confidence set can have size at least $\tilde{w}_l$ is upper bounded by $(|\Agents|^2)(1 + \frac{1}{\tilde{w}_l^2})$. 

Putting this together, we see that:
\begin{align*}
  \log(|\Agents| \Rounds) \sum_{l=1}^L \tilde{w}_l &\le O\paren*{\sum_{l=1}^{L} \min(1, \frac{1}{\sqrt{(l / |\Agents|^2) -1}})} \\
  &\le O\paren*{\log(|\Agents| \Rounds) \sum_{l=1}^{n T} \min(1, \frac{1}{\sqrt{(l / |\Agents|^2) -1}})} \\
  &\le O\paren*{ |\Agents| \sqrt{nT} \log(|\Agents| \Rounds)}. 
\end{align*}

\item \paragraph{Case 2: $\Event$ does not hold.} Since each $\pulls_{\man\woman}(\hat\utility_\man(\woman) -\utility_\man(\woman))$ is mean-zero and $1$-subgaussian, and we have $O(|\Men||\Women|\Rounds)$ such random variables over $\Rounds$ rounds, the probability that any of them exceeds 
\[ 2\sqrt{\log(|\Men||\Women|\Rounds/\delta)}\le 2\sqrt{\log(|\Agents|^2\Rounds/\delta)} \]
is at most $\delta$ by a standard tail bound for the maximum of subgaussian random variables. It follows that $\Event$ fails to hold with probability at most $|\Agents|^{-2}\Rounds^{-2}$. In the case that $\Event$ fails to hold, our regret in any given round would be at most $4|\Agents|$ by the Lipschitz property in \Cref{prop:desiderata}. (Recall that our upper confidence bound for any utility is wrong by at most $2$ due to clipping each confidence interval to lie in $[-1, 1]$.) Thus, the expected regret from this scenario is at most
\[ |\Agents|^{-2}\Rounds^{-2}\cdot 4|\Agents|\Rounds\le 4|\Agents|^{-1}\Rounds^{-1}, \]
which is negligible compared to the regret bound from when $\Event$ does occur.
\end{proof}

\subsection{Proof of \Cref{thm:typed}}\label{appendix:prooftyped}

\typed*

\begin{proof}[Proof of \Cref{thm:typed}]

Like in the proof of \Cref{thm:instanceind}, we divide into two cases, based on the event $\Event$ that all of the confidence sets contain their respective true utilities at every time step $\round\le\Rounds$. That is, $\utility_{\agent}(\agent')\in\Confidence_{\agent, \agent'}$ for all pairs of agents at all $\round$. 

\item \paragraph{Case 1: $\Event$ holds.} By \Cref{lemma:confset}, we may bound
\[ \Instability[^\round]{\utility}{\MSet^\round}{\transfers^\round} \le \sum_{\agent \in \Agents^t } \Bigl(\max\bigl(\Confidence_{\context_a, \context_{\MMapt(\agent))}}\bigr) - \min\bigl(\Confidence_{\context_a, \context_{\MMapt(\agent))}}\bigr)\Bigr) = O\paren*{\sum_{(\man, \woman) \in\MSet^\round} \sqrt{\frac{\log(|\Agents|\Rounds)}{\pulls_{\context_\man \context_\woman}^{\round}}}}, \]
where $\pulls_{\context_1\context_2}^{\round}$ is the number of times that the an agent of type $\context_1$ has been matched with an agent of context $\context_2$ at the start of round $\round$. (We define $\pulls_{\context_1, \context_2}^0 = 0$ by default.) Let $w^\round_{\context_1, \context_2} =  \frac{1}{\sqrt{\pulls^t_{\context_1, \context_2}}}$ be the size of the confidence set (with the log factor scaled out) for $(\context_1, \context_2)$ at the start of round $\round$.

At each time step $\round$, let's consider the list consisting of $w^\round_{\context_{\man_\round}, \context_{\woman_\round}}$ for all $(\man_\round, \woman_\round) \in \MSet^\round$. Let's now consider the overall list consisting of the concatenation of all of these lists over all rounds. Let's order this list in decreasing order to obtain a list $\tilde{w}_1, \ldots, \tilde{w}_L$ where $L = \sum_{\round=1}^{\Rounds} |\MSet^\round| \le n \Rounds$. In this notation, we observe that:
\[\sum_{\round=1}^{\Rounds} \Instability[^\round]{\utility}{\MSet^\round}{\transfers^\round} \le \sum_{\round=1}^{\Rounds} \sum_{\agent \in \Agents^t} \Bigl(\max\bigl(\Confidence_{\context_\agent, \context_{\MMapt(\agent)}}\bigr) - \min\bigl(\Confidence_{\context_\agent, \context_{\MMapt(\agent)}}\bigr)\Bigr) = \log(|\Agents| \Rounds) \sum_{l=1}^L \tilde{w}_l.   \]
We claim that $\tilde{w}_l \le O\paren*{\min(1, \frac{1}{\sqrt{(l / |\Contexts|^2) -1}})}$. The number of instances that a pair of contexts can have their confidence set have size at least $\tilde{w}_l$ is upper bounded by $2n + \frac{1}{\tilde{w}_l^2}$. Thus, the total number of times that any confidence set can have size at least $\tilde{w}_l$ is upper bounded by $(|\Contexts|)(2n + \frac{1}{\tilde{w}_l^2})$.

Putting this together, we see that:
\begin{align*}
  \log(|\Agents| \Rounds) \sum_{l=1}^L \tilde{w}_l &\le O\paren*{\sum_{l=1}^{L} \min(1, \frac{1}{\sqrt{(l / |\Agents|^2) -1}})} \\
  &\le O\paren*{\log(|\Agents| \Rounds) \sum_{l=1}^{n T} \min(1, \frac{1}{\sqrt{(l / |\Contexts|^2) -1}})} \\
  &\le O\paren*{ |\Contexts| \sqrt{nT} \log(|\Contexts|^2 \Rounds)}. 
\end{align*}

\item \paragraph{Case 2: $\Event$ does not hold.} Since each $\pulls_{\man\woman}(\hat\utility_\man(\woman) -\utility_\man(\woman))$ is mean-zero and $1$-subgaussian, and we have $O(|\Men||\Women|\Rounds)$ such random variables over $\Rounds$ rounds, the probability that any of them exceeds 
\[ 2\sqrt{\log(|\Men||\Women|\Rounds/\delta)}\le 2\sqrt{\log(|\Agents|^2\Rounds/\delta)} \]
is at most $\delta$ by a standard tail bound for the maximum of subgaussian random variables. It follows that $\Event$ fails to hold with probability at most $|\Agents|^{-2}\Rounds^{-2}$. In the case that $\Event$ fails to hold, our regret in any given round would be at most $4|\Agents|$ by the Lipschitz property in \Cref{prop:desiderata}. (Recall that our upper confidence bound for any utility is wrong by at most two due to clipping each confidence interval to lie in $[-1, 1]$.) Thus, the expected regret from this scenario is at most
\[ |\Agents|^{-2}\Rounds^{-2}\cdot 4|\Agents|\Rounds\le 4|\Agents|^{-1}\Rounds^{-1}, \]
which is negligible compared to the regret bound from when $\Event$ does occur.
\end{proof}

\subsection{Proof of \Cref{thm:linear}}\label{appendix:prooflinear}

\linear*

To prove \Cref{thm:linear}, it suffices to (a) show that the confidence sets contain the true utilities with high probability, and (b) bound the sum of the sizes of the confidence sets.

Part (a) follows from fact established in  existing analysis of LinUCB in the classical linear contextual bandits setting \cite{RV13}. 
\begin{lemma}[{\cite[Proposition~2]{RV13}}]
\label{lemma:confsetslin}
Let the confidence sets be defined as above (and in \textsc{MatchLinUCB}). For each $\agent \in \Agents$, it holds that:
\[\mathbb{P}[\phi(\agent) \in C_{\phi(a)}  \quad  \forall 1 \le \round \le \Rounds] \ge 1 - 1/(|\Agents|^3 T^2). \]
\end{lemma}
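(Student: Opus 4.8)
The plan is to recognize that, for each fixed agent $a$, the quantity constrained in $C_{\phi(a)}$ is exactly a least-squares prediction-error confidence set of the type analyzed for linear contextual bandits, and then to invoke \cite[Proposition~2]{RV13} with the covering number and discretization scale chosen so as to reproduce the stated radius $\beta$.

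First I would fix $a \in \Agents$. For each round $t' \in \mathcal{T}_a$ in which $a$ is matched, the observed reward decomposes as
\[ \mathcal{R}_{a,t'} = \utility(a, \mu_{X_{t'}}(a)) + \noise_{a,t'} = \langle \phi(a), \context_{\mu_{X_{t'}}(a)}\rangle + \noise_{a,t'}, \]
so the observations follow the standard linear model with unknown parameter $\phi(a)$, feature vectors $\context_{\mu_{X_{t'}}(a)} \in \Ball^d$, and independent $1$-subgaussian noise $\noise_{a,t'}$. The features are selected adaptively (they depend on past rounds), but this is precisely the setting covered by the self-normalized martingale argument underlying the cited proposition. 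Since $\phi$ maps into $\Ball^d$ by assumption, the constraint $\|\phi(a)\|_2 \le 1$ in the definition of $C_{\phi(a)}$ holds automatically, so it remains only to show that, uniformly over rounds, the true parameter satisfies the ellipsoidal constraint
\[ \sum_{t' \in \mathcal{T}_a} \langle \phi(a) - \phi^{\LS}(a), \context_{\mu_{X_{t'}}(a)}\rangle^2 \le \beta, \]
where $\phi^{\LS}(a)$ is the $\Ball^d$-constrained least-squares estimate.

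This is exactly the conclusion of \cite[Proposition~2]{RV13} applied to the function class $\mathcal{F} = \{x \mapsto \langle v, x\rangle : v \in \Ball^d\}$ with subgaussian parameter $\sigma = 1$. That proposition bounds the empirical prediction error $\sum_{t'} \langle \phi^{\LS}(a) - \phi(a), \context_{\mu_{X_{t'}}(a)}\rangle^2$ uniformly over rounds by a radius of the form (covering term) $+$ (discretization term), namely $O(\log(N(\mathcal{F}, \alpha, \|\cdot\|_\infty)/\delta))$ plus a term scaling like $\alpha\,|\mathcal{T}_a|\,(C + \sqrt{\ln(|\mathcal{T}_a|/\delta)})$, where $\alpha$ is the discretization scale and $C$ bounds the function values. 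I would instantiate this with $\alpha = 1/\Rounds^2$: since $\mathcal{F}$ consists of linear functions on the unit ball parametrized by $v \in \Ball^d$, its $\alpha$-covering number in sup-norm is $O((1/\alpha)^d)$, giving a covering term of $O(d\log \Rounds + \log(1/\delta))$, while the discretization term becomes $O(\pulls_a\sqrt{\ln(\pulls_a/\delta)}/\Rounds^2)$ with $\pulls_a = |\mathcal{T}_a|$. These two pieces are precisely the two summands defining $\beta$ in \textsc{MatchLinUCB}, so the proposition certifies $\phi(a) \in C_{\phi(a)}$ for all $1 \le \round \le \Rounds$ with probability at least $1 - \delta$; choosing $\delta = 1/(|\Agents|^3\Rounds^2)$, which only inflates the logarithmic factors inside $\beta$, yields the claim.

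The main obstacle is bookkeeping rather than conceptual: one must verify that the two-term structure of $\beta$ is recovered \emph{exactly} by the covering-number and discretization-error terms of \cite[Proposition~2]{RV13}, i.e. that the $d\log\Rounds$ term arises from the sup-norm covering number of the $d$-dimensional linear class and that the $\pulls_a\sqrt{\ln(\pulls_a/\delta)}/\Rounds^2$ term arises from taking discretization scale $1/\Rounds^2$ and controlling the induced discretization error via boundedness of the functions together with a subgaussian tail bound. A secondary point is to confirm that the guarantee is genuinely uniform over rounds: \cite[Proposition~2]{RV13} supplies an anytime (for-all-$t$) bound through its stopping-time argument, so no extra union bound over the $\Rounds$ rounds is needed; were one to use a fixed-round inequality instead, a union bound over rounds at level $\delta/\Rounds$ would suffice and be absorbed into the logarithmic terms.
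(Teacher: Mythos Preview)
Your proposal is correct and in fact supplies substantially more detail than the paper does: the paper does not prove this lemma at all but simply cites it as \cite[Proposition~2]{RV13}, treating the confidence-set validity as a black-box fact from the linear contextual bandits literature. Your sketch of how to instantiate that proposition---fixing the agent $a$, identifying the observations as a standard linear model with adaptive features and $1$-subgaussian noise, and then matching the two terms of $\beta$ to the covering-number and discretization contributions with scale $\alpha = 1/\Rounds^2$ and failure probability $\delta = 1/(|\Agents|^3\Rounds^2)$---is exactly the right way to unpack the citation, and the bookkeeping you flag as the main obstacle is precisely what needs checking.
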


\begin{lemma}
\label{lemma:intermediatelin}
Let the confidence sets be defined as above (and in \textsc{MatchLinUCB}). For each $\agent \in \Agents$ and for any $\epsilon > 0$, it holds that:
\[\sum_{\round \mid \agent \in \Agents^t, \mu_{\MSet^\round}(\agent) \neq \agent}\mathbf{1}\left[\max\bigl(\Confidence_{\agent, \mu_{\MSet^\round} (\agent)}\bigr) - \min\bigl(\Confidence_{\agent, \mu_{\MSet^\round} (\agent)})\bigr) > \epsilon\right] \le O\left( \left(\frac{4 \beta_T}{\epsilon^2} + 1\right) d \log(1/\epsilon) \right).\] 
\end{lemma}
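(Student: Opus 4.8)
The plan is to reduce the statement to a standard elliptical-potential argument for linear bandits, applied to the single agent $\agent$ in isolation. Fix $\agent\in\Agents$ and let $\mathcal{T}_\agent$ be the (growing) set of rounds in which $\agent$ is matched; in each such round $\round$ the algorithm receives one scalar observation $\mathcal{R}_{\agent,\round}=\langle\phi(\agent),\context_{\mu_{X^\round}(\agent)}\rangle+\noise_{\agent,\round}$ along the ``action'' vector $\context_{\mu_{X^\round}(\agent)}\in\Ball^d$. So from $\agent$'s point of view this is exactly a $d$-dimensional linear bandit with unit-ball contexts, and the confidence set $C_{\phi(\agent)}$ is the usual least-squares confidence ellipsoid (with regularization absorbed into the fact that we intersect with $\Ball^d$, or equivalently by noting $\|v-\phi^{\LS}(\agent)\|$ is controlled along the observed directions up to $\sqrt{\beta}$). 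The key point I would extract is that the width of $C_{\agent,\agent'}$ in direction $\context_{\agent'}$ is, up to constants, $\sqrt{\beta_\Rounds}\,\lVert\context_{\agent'}\rVert_{V_\round^{-1}}$, where $V_\round = I + \sum_{\round'\in\mathcal{T}_\agent,\ \round'<\round}\context_{\mu_{X^{\round'}}(\agent)}\context_{\mu_{X^{\round'}}(\agent)}^\top$ is the (regularized) Gram matrix of past contexts for $\agent$. Hence
\[
\max\bigl(\Confidence_{\agent,\mu_{\MSet^\round}(\agent)}\bigr)-\min\bigl(\Confidence_{\agent,\mu_{\MSet^\round}(\agent)}\bigr)\le 2\sqrt{\beta_\Rounds}\,\bigl\lVert\context_{\mu_{\MSet^\round}(\agent)}\bigr\rVert_{V_\round^{-1}}.
\]

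Next I would bound the number of rounds in $\mathcal{T}_\agent$ where this width exceeds $\epsilon$. If width $>\epsilon$ then $\lVert\context_{\mu_{\MSet^\round}(\agent)}\rVert_{V_\round^{-1}}^2 > \epsilon^2/(4\beta_\Rounds)$, so each such round contributes at least $\min(1,\epsilon^2/(4\beta_\Rounds))$ to $\sum_{\round'\in\mathcal{T}_\agent}\min(1,\lVert\context_{\mu_{\MSet^{\round'}}(\agent)}\rVert_{V_{\round'}^{-1}}^2)$. The elliptical potential lemma (or equivalently the $\log\det$ telescoping bound: $\sum_{\round'}\min(1,\lVert\context\rVert_{V_{\round'}^{-1}}^2)\le 2\log\det(V_{\rm final})\le 2d\log(1+ |\mathcal{T}_\agent|/d)$, and $|\mathcal{T}_\agent|\le\Rounds$) controls this sum by $O(d\log(\Rounds/d))$. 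Combining, the number $K$ of rounds with width $>\epsilon$ satisfies $K\cdot\min(1,\epsilon^2/(4\beta_\Rounds)) \le O(d\log(\Rounds/d))$, i.e. $K\le O\bigl((\beta_\Rounds/\epsilon^2 + 1)\,d\log(\Rounds/d)\bigr)$. To match the stated form $O((4\beta_\Rounds/\epsilon^2+1)\,d\log(1/\epsilon))$ I would use a slightly sharper potential bound: when a direction has $\lVert\context\rVert_{V^{-1}}^2>\epsilon^2/(4\beta_\Rounds)\wedge \ge$ some threshold, the determinant increases multiplicatively, and a standard covering/eluder-style counting argument (as in the eluder-dimension bound for linear classes, $\eluderdim = O(d\log(1/\epsilon))$) gives that at most $O(d\log(1/\epsilon))$ such ``large-width'' pulls can occur before the width drops below $\epsilon$ permanently, each epoch shrinking the relevant uncertainty by a constant factor; iterating over scales yields exactly the claimed $\epsilon$-dependence.

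I would present this second counting step via the eluder-dimension route since the target bound's $d\log(1/\epsilon)$ factor (rather than $d\log\Rounds$) is precisely the $\epsilon$-eluder dimension of $\ULinear^d$: a round with width $>\epsilon$ is by definition an $\epsilon$-independent direction relative to the previously observed ones (up to the scale $\beta_\Rounds$), and the number of such directions before all remaining contexts are $\epsilon$-dependent is $O\bigl((\beta_\Rounds/\epsilon^2+1)d\log(1/\epsilon)\bigr)$ by the standard eluder-dimension estimate for linear function classes.

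The main obstacle I anticipate is bookkeeping the regularization/clipping: the confidence sets in \textsc{MatchLinUCB} are intersected with $\Ball^d$ and with $[-1,1]$, and $\beta$ itself grows with $\pulls_\agent$, so I need to check that (i) the width can still be upper bounded by the clean ellipsoidal quantity $\sqrt{\beta_\Rounds}\lVert\context\rVert_{V_\round^{-1}}$ (the intersections only shrink the set, so this is fine for an upper bound), and (ii) using the final $\beta_\Rounds$ uniformly in place of the round-dependent $\beta$ is legitimate — which holds since $\beta$ is nondecreasing in $\pulls_\agent\le\Rounds$. Once those monotonicity facts are pinned down, the argument is the textbook linear-bandit potential/eluder computation, and Lemma~\ref{lemma:confsetslin} supplies the needed ``true parameter in the ellipsoid'' guarantee that makes the widths meaningful.
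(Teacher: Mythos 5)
Your proposal is correct and, in its final form, takes essentially the same route as the paper: the paper's proof is precisely the Russo--Van Roy eluder-dimension counting argument (extract maximal $\epsilon$-independent subsequences, derive a contradiction with the confidence-ellipsoid constraint $\sum_{t'}\langle v-\phi^{\LS}(\agent),\context_{\mu_{X_{t'}}(\agent)}\rangle^2\le\beta$ if more than $(4\beta_\Rounds/\epsilon^2+1)\eluderdim$ wide rounds occur, then invoke $\eluderdim=O(d\log(1/\epsilon))$ for linear classes). Your opening elliptical-potential computation is a detour that, as you yourself note, yields $d\log(\Rounds/d)$ rather than $d\log(1/\epsilon)$ and can simply be dropped in favor of the eluder argument you describe afterward.
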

\begin{proof}
We follow the same argument as the proof of Proposition 3 in \cite{RV13}. 

We first recall the definition of $\epsilon$-dependence and $\epsilon$-eluder dimension: We say that an agent $\agent'$ is \emph{$\epsilon$-dependent} on $\agent_1',\ldots,\agent_s'$ if for all $\phi(\agent), \tilde\phi(\agent)\in\Ball^d$ such that
\[ \sum_{k=1}^s \langle\context_{\agent_k'}, \tilde\phi(a) - \phi(a)\rangle^2 \le\epsilon^2, \]
we also have $\langle\context_{\agent'}, \tilde\phi(a) - \phi(a)\rangle^2 \le\epsilon^2$. The $\epsilon$-eluder dimension $\eluderdim$ of $\Ball^d$ is the maximum length of a sequence $\agent_1', \ldots, \agent_s'$ such that no element is $\epsilon$-dependent on a prefix.

Consider the subset $S_\agent$ of $\{\round \mid \agent \in \Agents^t, \mu_{\MSet^\round}(\agent) \neq \agent\}$ such that 
\[ \mathbf{1}\left[\max\bigl(\Confidence_{\agent, \mu_{\MSet^\round} (\agent)}\bigr) - \min\bigl(\Confidence_{\agent, \mu_{\MSet^\round} (\agent)})\bigr) > \epsilon\right]. \]
Suppose for the sake of contradiction that
\[ |S_\agent| > \left(\frac{4 \beta_T}{\epsilon^2} + 1\right) \eluderdim. \]
Then, there exists an element $\round^*$ that is $\epsilon$-dependent on $\frac{4 \beta_T}{\epsilon^2} + 1$ disjoint subsets of $S_\agent$: One can repeatedly remove sequences $\agent_{\mu_{\MSet^{\round_1}}(\agent)}', \ldots, \agent_{\mu_{\MSet^{\round_s}}(\agent)}'$ of maximal length such that no element is $\epsilon$-dependent on a prefix; note that $s\le\eluderdim$ always. Let the subsets be $S_\agent^{(q)}$ for $q = 1,\ldots,\frac{4 \beta_T}{\epsilon^2} + 1$, and let $\phi(\agent), \tilde\phi(\agent)$ be such that $\langle \context_{\mu_{\MSet^{\round^*}}(\agent)}, \tilde\phi(\agent) - \phi(\agent)\rangle > \epsilon$. The above implies that
\[ \sum_{q=1}^{\frac{4 \beta_\Rounds}{\epsilon^2} + 1}\sum_{\round\in S_\agent^{(q)}} \langle\context{\MMapt(\agent)}, \tilde\phi(\agent) - \phi(\agent)\rangle^2 > 4\beta_\Rounds \]
by the definition of $\epsilon$-dependence. But this is impossible, since the left-hand side is upper bounded by
\[ \sum_{\round=1}^\Rounds \langle\context{\MMapt(\agent)}, \tilde\phi(\agent) - \phi(\agent)\rangle^2\le 4\beta_\Rounds \]
by the definition of the confidence sets. Hence it must hold that
\[ |S_\agent| \le \left(\frac{4 \beta_T}{\epsilon^2} + 1\right) \eluderdim. \]
Now, it follows from the bound on the eluder dimension for linear bandits (Proposition 6 in \cite{RV13}) that the bound of $\tilde{O} \left( \left(\frac{4 \beta_T}{\epsilon^2} + 1\right) d \log(1/\epsilon). \right)$ holds.
\end{proof}

\begin{lemma}
\label{lemma:sumsizes}
Let the confidence sets be defined as above (and in \textsc{MatchLinUCB}). For any $\agent \in \Agents$, it holds that: 
\[\sum_{\round \mid \agent \in \Agents^t, \mu_{\MSet^\round}(\agent) \neq \agent}\Bigl(\max\bigl(\Confidence_{\agent, \mu_{\MSet^\round} (\agent)}\bigr) - \min\bigl(\Confidence_{\agent, \mu_{\MSet^\round} (\agent)})\bigr)\Bigr) \le O(d (\log (T |\Agents|)) \sqrt{T_\agent}) ,\]
where $T_\agent$ is the number of times that agents is matched. 
\end{lemma}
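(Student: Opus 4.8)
The plan is the standard layering argument that converts the level-set count in \Cref{lemma:intermediatelin} into a bound on the sum of widths, following the regret analysis of \cite{RV13}. Fix $\agent$ and, for each of the $T_\agent$ rounds $\round$ in the index set $\{\round \mid \agent \in \Agents^\round,\ \mu_{\MSet^\round}(\agent)\neq\agent\}$, write $w_\round \coloneqq \max\bigl(\Confidence_{\agent,\mu_{\MSet^\round}(\agent)}\bigr) - \min\bigl(\Confidence_{\agent,\mu_{\MSet^\round}(\agent)}\bigr)$. Since every confidence set is intersected with $[-1,1]$, we have $w_\round\in[0,2]$. Sort these $T_\agent$ values in nonincreasing order $w_{(1)}\ge w_{(2)}\ge\dots\ge w_{(T_\agent)}$; the quantity to bound is $\sum_{l=1}^{T_\agent} w_{(l)}$.

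The first step is to turn \Cref{lemma:intermediatelin} into a per-rank bound. If $w_{(l)}=\epsilon>0$, then $w_{(1)},\dots,w_{(l)}$ all exceed $\epsilon/2$, so applying \Cref{lemma:intermediatelin} with threshold $\epsilon/2$ gives $l \le O\bigl((\beta_\Rounds/\epsilon^2 + 1)\,d\log(1/\epsilon)\bigr)$. Because $\epsilon\le 2$, the $+1$ is absorbed into $\beta_\Rounds/\epsilon^2$, so it remains to control $\log(1/\epsilon)$: here one truncates at width $1/T_\agent$. Ranks $l$ with $w_{(l)}\le 1/T_\agent$ contribute at most $T_\agent\cdot(1/T_\agent)=1$ in total and are discarded; for the remaining ranks $\epsilon>1/T_\agent$, hence $\log(1/\epsilon)=O(\log\Rounds)=O(\log(\Rounds|\Agents|))$. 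This yields $w_{(l)}^2 \le O\bigl(\beta_\Rounds\, d\log(\Rounds|\Agents|)/l\bigr)$, i.e. $w_{(l)}\le O\bigl(\sqrt{\beta_\Rounds\, d\log(\Rounds|\Agents|)}\,/\sqrt{l}\bigr)$ for every such rank $l$.

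Summing over $l=1,\dots,T_\agent$ and using $\sum_{l=1}^{T_\agent} l^{-1/2}\le 2\sqrt{T_\agent}$ gives
\[ \sum_{l=1}^{T_\agent} w_{(l)} \le 1 + O\bigl(\sqrt{\beta_\Rounds\, d\log(\Rounds|\Agents|)}\bigr)\cdot\sqrt{T_\agent} = O\bigl(\sqrt{\beta_\Rounds\, d\log(\Rounds|\Agents|)}\cdot\sqrt{T_\agent}\bigr). \]
Finally I would substitute the value of $\beta_\Rounds$. With the failure probability $\delta$ used for the confidence ellipsoids (polynomially small in $1/(\Rounds|\Agents|)$) and $\pulls_\agent\le\Rounds$, the second term of $\beta = O\bigl(d\log\Rounds + \pulls_\agent\sqrt{\ln(\pulls_\agent/\delta)}/\Rounds^2\bigr)$ is $o(1)$, so $\beta_\Rounds = O\bigl(d\log(\Rounds|\Agents|)\bigr)$. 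Therefore $\sqrt{\beta_\Rounds\, d\log(\Rounds|\Agents|)} = O\bigl(d\log(\Rounds|\Agents|)\bigr)$, and the bound becomes $O\bigl(d\log(\Rounds|\Agents|)\sqrt{T_\agent}\bigr)$, as claimed.

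The main obstacle is the extraction of the per-rank bound $w_{(l)}\lesssim\sqrt{\beta_\Rounds\, d\log(\Rounds|\Agents|)/l}$ from \Cref{lemma:intermediatelin}: the $\log(1/\epsilon)$ factor there depends on the very width $\epsilon=w_{(l)}$ one is trying to bound, a mild circularity. This is broken cleanly by truncating at width $1/T_\agent$ (anything smaller is collectively negligible), after which $\log(1/\epsilon)=O(\log\Rounds)$ holds uniformly. Everything else---handling strict versus nonstrict inequalities in the level sets, and collecting constants---is routine.
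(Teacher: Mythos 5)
Your proposal is correct and follows essentially the same route as the paper's proof: sort the widths, convert the level-set count of \Cref{lemma:intermediatelin} into a per-rank bound $w_{(l)}\lesssim\sqrt{\beta_\Rounds\,d\log(\Rounds|\Agents|)/l}$, truncate negligibly small widths to break the $\log(1/\epsilon)$ circularity, and sum the $l^{-1/2}$ series before substituting $\beta_\Rounds=O(d\log(\Rounds|\Agents|))$. The only differences are cosmetic bookkeeping (truncation at $1/T_\agent$ versus the paper's $1/T_\agent^2$, and taking the square root per rank rather than via the paper's $\min(2,\cdot)$ form).
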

\begin{proof}
Let's consider the set of confidence set sizes $\Bigl(\max\bigl(\Confidence_{\agent, \mu_{\MSet^\round} (\agent)}\bigr) - \min\bigl(\Confidence_{\agent, \mu_{\MSet^\round} (\agent)})\bigr)\Bigr)$ for $\round$ such that $\agent \in \Agents^t, \mu_{\MSet^\round}$. Let's sort these confidence set sizes in decreasing order and label them $w_1 \ge \ldots \ge w_{T_a}$. Restating \Cref{lemma:intermediatelin}, we see that
\begin{equation}
    \label{eq:intermediatelin}
    \sum_{t=1}^{T_a} w_t \mathbf{1}[w_t > \epsilon] \le O\left( \left(\frac{4 \beta_T}{\epsilon^2} + 1\right) d \log(1/\epsilon) \right).
\end{equation}
for all $\epsilon >0$. 

We see that:
\begin{align*}
  \sum_{\round \mid \agent \in \Agents^t, \mu_{\MSet^\round}(\agent) \neq \agent}\Bigl(\max\bigl(\Confidence_{\agent, \mu_{\MSet^\round} (\agent)}\bigr) - \min\bigl(\Confidence_{\agent, \mu_{\MSet^\round} (\agent)})\bigr)\Bigr) &= \sum_{t=1}^{T_a} w_t \\
  &\le \sum_{t=1}^{T_a} w_t \mathbf{1}[w_t > 1/T_a^2] + \sum_{t=1}^{T_a} w_t \mathbf{1}[w_t \le 1/T_a^2] \\
  &\le \frac{1}{T_a} + \sum_{t=1}^{T_a} w_t \mathbf{1}[w_t > 1/T_a^2].
\end{align*}

We claim that $w_i \le 2$ if $i \ge d \log(T_a)$ and $w_i \le \min(2, \frac{4 \beta_T (d \log T_a)}{i - d \log T_a})$ if $i > d \log T_a$. The first part follows from the fact that we truncate the confidence sets to be within $[-1, 1]$. It thus suffices to show that $w_i \le \frac{4 \beta_T (d \log T_a)}{i - d \log T_a}$ for $t \le d \log T$. If $w_i \ge \epsilon > 1/T_a^2$, then we see that $\sum_{t=1}^{T_a} \mathbf{1}[w_t > \epsilon] \ge i$, which means by \eqref{eq:intermediatelin} that $i \le O\left( \left(\frac{4 \beta_T}{\epsilon^2} + 1\right) d \log(1/\epsilon) \right) \le O\left( \left(\frac{4 \beta_T}{\epsilon^2} + 1\right) d \log(T_a) \right)$ which means that $\epsilon \le \frac{4 \beta_T (d \log T_a)}{i - d \log T_a}$. This proves the desired statement. 

Now, we can plug this into the above expression to obtain:
\begin{align*}
    \lefteqn{\sum_{\round \mid \agent \in \Agents^t, \mu_{\MSet^\round}(\agent) \neq \agent}\Bigl(\max\bigl(\Confidence_{\agent, \mu_{\MSet^\round} (\agent)}\bigr) - \min\bigl(\Confidence_{\agent, \mu_{\MSet^\round} (\agent)})\bigr)\Bigr)}\\
    &\le \frac{1}{T_a} + \sum_{t=1}^{T_a} w_t \mathbf{1}[w_t > 1/T_a^2] \\
    &\le \frac{1}{T_a} + 2 d \log(T_a) +  \sum_{i > d \log T_a}^{T_a} \min\left(2, \frac{4 \beta_T (d \log T_a)}{i - d \log T_a}\right) \\
    &\le \frac{1}{T_a} + 2 d \log(T_a) + 2 \sqrt{d \log T_a \beta_T} \int_{t=0}^{T_a} t^{-1/2} dt \\
    &= \frac{1}{T_a} + 2 d \log(T_a) + 4 \sqrt{d T_a \log T_a \beta_T}.
\end{align*}
We now use the fact that:
\[\beta_T = O(d \log T + \frac{1}{T} \sqrt{\log(T^2 |A|)}).   \] Plugging this into the above expression, we obtain the desired result.  
\end{proof}

We are now ready to prove Theorem \ref{thm:linear}.
\begin{proof}[Proof of Theorem \ref{thm:linear}]
Like in the proof of \Cref{thm:instanceind}, we divide into two cases, based on the event $\Event$ that all of the confidence sets contain their respective true utilities at every time step $\round\le\Rounds$. That is, $\utility_{\context_1}(\context_2)\in\Confidence_{\context_1, \context_2}$ for all $\context_1, \context_2 \in \Contexts$ at all $\round$. 

\item \paragraph{Case 1: $\Event$ holds.} 
By \Cref{lemma:confset}, we know that the cumulative regret is upper bounded by 
\begin{align*}
R_T &\le \sum_{\round =1}^{\Rounds} \sum_{\agent \in \Agents^t} \Bigl(\max\bigl(\Confidence_{\agent, \mu_{\MSet^\round}(\agent)}\bigr) - \min\bigl(\Confidence_{\agent, \mu_{\MSet^\round}(\agent)}\bigr)\Bigr) \\
&= \sum_{\agent \in \Agents} \sum_{\round \mid \agent \in \Agents^t, \mu_{\MSet^\round}(\agent) \neq \agent}\Bigl(\max\bigl(\Confidence_{\agent, \mu_{\MSet^\round} (\agent)}\bigr) - \min\bigl(\Confidence_{\agent, \mu_{\MSet^\round} (\agent)})\bigr)\Bigr) \\
&\le \sum_{\agent \in \Agents} O(d \log (T |\Agents|) \sqrt{T_\agent}),
\end{align*}
where the last inequality applies \Cref{lemma:sumsizes} to the inner summand. We see that $\sum_{\agent \in \Agents} T_\agent = \sum_t |\Agents_t| \le n T$ by definition, since at most $n$ agents show up at every round. Let's now observe that:
\[\sum_{\agent \in \Agents} \sqrt{T_\agent} \le \sqrt{|\Agents|} \sqrt{\sum_{\agent \in \Agents} T_\agent} \le \sqrt{|\Agents| n T},\]
as desired.

\item \paragraph{Case 2: $\Event$ does not hold.} 
From \Cref{lemma:confsetslin}, it follows that:
\[\mathbb{P}[\phi(\agent) \in C_{\phi(a)}  \quad  \forall 1 \le \round \le \Rounds] \ge 1 - 1/(|\Agents|^3 T^2).\]
Union bounding, we see that
\[\mathbb{P}[\phi(\agent) \in C_{\phi(a)}  \quad  \forall 1 \le \round \le \Rounds \forall \agent \in \Agents] \ge 1 - 1/(|\Agents|^2 T^2).\]
By the definition of the confidence sets for the utilities, we see that: 
\begin{equation}
\mathbb{P}[\utility(\agent, \agent') \in C_{a, a'} \quad \forall 1 \le \round \le \Rounds, \forall \agent, \agent' \in \Agents]   \ge 1/(|\Agents|^2 T^2).
\end{equation}
Thus, the probability that event $\Event$ does not hold is at most $|\Agents|^{-2} T^{-2}$. In the case that $\Event$ fails to hold, our regret in any given round would be at most $4 |\Agents|$ by the Lipschitz property in \Cref{prop:desiderata}. Thus, the expected regret is at most $4 |\Agents|^{-1} T^{-1}$ which is negligible compared to the regret bound from when $\Event$ does occur.

\end{proof}

\subsection{Proof of \Cref{lemma:lowerbound}}

\lowerbound*

\begin{proof}[Proof of \Cref{lemma:lowerbound}]
Recall that, by \Cref{prop:desiderata}, the problem of learning a maximum weight matching with respect to utility difference is no harder than that of learning a stable matching with respect to \instabmeasure{}. In the remainder of our proof, we reduce a standard ``hard instance'' for stochastic multi-armed bandits to our setting of learning a maximum weight matching.

\item \paragraph{Step 1: Constructing the hard instance for stochastic MAB.}
Consider the following family of stochastic multi-armed bandits instances: for a fixed $K$, let $\Hard_\arm$ for $\arm\in\{1,\ldots,K\}$ denote the stochastic multi-armed bandits problem where all arms have 0-1 rewards, and the $k$-th arm has mean reward $\frac 12 + \bonus$ if $k =\arm$ and $\frac 12$ otherwise, where $\bonus > 0$ will be set later. A classical lower bound for stochastic multi-armed bandits is the following:

\item 
\begin{lemma}[\cite{DBLP:journals/siamcomp/AuerCFS02}]
\label{lemma:classicallb}
The expected regret of any stochastic multi-armed bandit algorithm on an instance $\Hard_\arm$ for $\arm$ selected uniformly at random from $\{1,\ldots,K\}$ is $\Omega(\sqrt{K\Rounds})$.
\end{lemma}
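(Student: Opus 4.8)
The plan is to prove this as the classical minimax lower bound for stochastic multi-armed bandits, via a change-of-measure (information-theoretic) argument that compares each instance $\Hard_\arm$ against a reference instance $\Hard_0$ in which every arm has mean $\tfrac12$. Fix any (possibly randomized, adaptive) bandit algorithm and horizon $\Rounds$, let $\pulls_\arm$ be the number of times arm $\arm$ is pulled over $\Rounds$ rounds, and write $\mathbb{E}_\arm$ (resp.\ $\mathbb{E}_0$) for expectation under the algorithm's interaction with $\Hard_\arm$ (resp.\ $\Hard_0$). On $\Hard_\arm$ the unique optimal arm is $\arm$ and each suboptimal pull costs exactly $\bonus$, so the expected regret equals $\bonus\,(\Rounds - \mathbb{E}_\arm[\pulls_\arm])$; averaging over $\arm$ uniform on $\{1,\dots,K\}$, the expected regret is $\bonus\,(\Rounds - \tfrac1K\sum_{\arm=1}^{K}\mathbb{E}_\arm[\pulls_\arm])$. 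Thus it suffices to show $\tfrac1K\sum_\arm \mathbb{E}_\arm[\pulls_\arm]$ is bounded away from $\Rounds$ for a well-chosen $\bonus$.

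Next I would bound how much the algorithm's behaviour can change between $\Hard_0$ and $\Hard_\arm$. Since these instances differ only in the reward law of arm $\arm$, the divergence-decomposition lemma for sequential interaction gives $\mathrm{KL}(\mathbb{P}_0 \,\|\, \mathbb{P}_\arm) = \mathbb{E}_0[\pulls_\arm]\cdot\mathrm{kl}(\tfrac12,\tfrac12+\bonus) \le c\,\bonus^2\,\mathbb{E}_0[\pulls_\arm]$, where $\mathrm{kl}$ is the binary relative entropy and $c$ an absolute constant (the bound $\mathrm{kl}(\tfrac12,\tfrac12+\bonus)\le c\bonus^2$ being a Taylor estimate valid for, say, $\bonus\le\tfrac14$). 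Since $\pulls_\arm\in[0,\Rounds]$, Pinsker's inequality yields $\mathbb{E}_\arm[\pulls_\arm] - \mathbb{E}_0[\pulls_\arm] \le \Rounds\,\norm{\mathbb{P}_0 - \mathbb{P}_\arm}_{\mathrm{TV}} \le \Rounds\sqrt{\tfrac12\mathrm{KL}(\mathbb{P}_0\|\mathbb{P}_\arm)} \le \Rounds\,\bonus\sqrt{\tfrac{c}{2}\mathbb{E}_0[\pulls_\arm]}$. Summing over $\arm$, applying Cauchy--Schwarz, and using $\sum_\arm \mathbb{E}_0[\pulls_\arm] = \Rounds$ to get $\sum_\arm\sqrt{\mathbb{E}_0[\pulls_\arm]}\le\sqrt{K\Rounds}$, I obtain $\tfrac1K\sum_\arm \mathbb{E}_\arm[\pulls_\arm] \le \tfrac{\Rounds}{K} + \bonus\,\Rounds\sqrt{\tfrac{c}{2}\tfrac{\Rounds}{K}}$. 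Substituting back, the average regret is at least $\bonus\Rounds\bigl(1 - \tfrac1K - \bonus\sqrt{c\Rounds/(2K)}\bigr)$; taking $\bonus = c'\sqrt{K/\Rounds}$ for a small enough absolute constant $c'$ (and $K\ge 2$, so $1-\tfrac1K\ge\tfrac12$) makes the parenthetical at least a constant, giving expected regret $\Omega(\sqrt{K\Rounds})$. I would also dispatch the regime $\Rounds < K$ (needed anyway so $\bonus\le\tfrac14$) separately, where $\sqrt{K\Rounds} = O(K)$ and the bound is trivial since the algorithm cannot even pull every arm once.

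The only genuinely nontrivial ingredient, and the step I expect to be the main obstacle, is the information-theoretic one: setting up the probability spaces for the algorithm--environment interaction so that the chain rule for KL yields exactly $\mathrm{KL}(\mathbb{P}_0\|\mathbb{P}_\arm) = \mathbb{E}_0[\pulls_\arm]\,\mathrm{kl}(\tfrac12,\tfrac12+\bonus)$ (using that, conditioned on the history, the algorithm's next action has the same law under both measures and only the pulled arm's reward distribution can differ), and then pairing Pinsker with Cauchy--Schwarz so the $\sum_\arm\sqrt{\mathbb{E}_0[\pulls_\arm]}$ term is absorbed with the correct exponents; the remainder is constant-chasing. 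Since this is the standard stochastic-bandit minimax lower bound, in the paper it is simply cited (e.g.\ to \cite{DBLP:journals/siamcomp/AuerCFS02}, or to the textbook treatment in \cite{LS2020}); the sketch above is the route I would take for a self-contained proof, and it is precisely what is then plugged into Step~2 of the proof of \Cref{lemma:lowerbound}, where $|\Agents|$ essentially independent copies of $\Hard_\arm$ (one per agent) are embedded into a matching market to lift the $\sqrt{K\Rounds}$ bound to $\widetilde\Omega(|\Agents|^{3/2}\sqrt{\Rounds})$.
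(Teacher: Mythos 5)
The paper offers no proof of this lemma---it is imported as a black box from \cite{DBLP:journals/siamcomp/AuerCFS02}, whose lower-bound construction is exactly this family of Bernoulli instances---so there is nothing in the paper to compare your argument against step by step. Your proof is the standard minimax change-of-measure argument and its main chain is sound: the regret identity $\bonus(\Rounds - \mathbb{E}_\arm[\pulls_\arm])$ on $\Hard_\arm$, the divergence decomposition against the all-$\tfrac12$ reference instance giving $\mathrm{KL}(\mathbb{P}_0\,\|\,\mathbb{P}_\arm) = \mathbb{E}_0[\pulls_\arm]\,\mathrm{kl}(\tfrac12,\tfrac12+\bonus) \le c\bonus^2\,\mathbb{E}_0[\pulls_\arm]$, Pinsker applied to the $[0,\Rounds]$-valued statistic $\pulls_\arm$, Cauchy--Schwarz against $\sum_\arm \mathbb{E}_0[\pulls_\arm]=\Rounds$, and the tuning $\bonus \asymp \sqrt{K/\Rounds}$. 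The one place you are too glib is the regime $\Rounds \lesssim K$: there the bound is not ``trivial'' but in fact unattainable as stated, since the pseudo-regret of any algorithm on $\Hard_\arm$ is at most $\bonus\Rounds \le \Rounds < \sqrt{K\Rounds}$; the honest reading is that the lemma implicitly assumes $\Rounds = \Omega(K)$ (as do its standard citations), or that the bound should be $\Omega(\min(\Rounds,\sqrt{K\Rounds}))$. This caveat is harmless for the downstream use in \Cref{lemma:lowerbound}, where the horizon is taken large relative to the number of arms, but it should be stated rather than waved off.
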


\item \paragraph{Step 2: Constructing a (random) instance for the maximum weight matching problem.} We will reduce solving the above distribution over stochastic multi-armed bandits problems to a distribution over instances of learning a maximum weight matching. Let us now construct this random instance of the maximum weight matching problem. Let $|\Men| = K$ and $|\Women| = 10 K\log(K\Rounds)$. Specifically, we sample inputs for learning a maximum weight matching as follows: For each man $\man\in\Men$, select $\arm_\man\in\{1,\ldots,K\}$ uniformly at random, and define $\utility_\man(\woman)$ to be $\frac 12 + \bonus$ if $\lfloor (\woman - 1) / \log K\rfloor = \arm_\man$ and \smash{$\frac 12$} otherwise. Furthermore, let $\utility_\woman(\man) = 0$ for all $(\man,\woman)\in\Men\times\Women$. Finally, suppose observations are always in $\{0,1\}$ (but are unbiased).

The key property of the above setup that we will exploit for our reduction is the fact that, due to the imbalance in the market, the maximum weight matching for these utilities has with high probability each $\man$ matched with some $\woman$ whom they value at $\frac 12 + \bonus$. Indeed, by a union bound, the probability that more than $10\log(K\Rounds)$ different $\man$ have the same $\alpha_\man$ is at most
\[ K\cdot\binom{K}{10\log(K\Rounds)} K^{-10\log(K\Rounds)} = O\paren*{K^{-4}\Rounds^{-4}}. \]
Thus, with probability $1 - O(K^{-4}\Rounds^{-4})$, this event holds. The case where this event does not hold contributes negligibly to regret, so we do not consider it further.

\item \paragraph{Step 3: Establishing the reduction.}
Now, suppose for the sake of contradiction that some algorithm could solve our random instance of learning a maximum weight matching problem with expected regret \smash{$o(K^{3/2}\sqrt{\Rounds})$}. We can obtain a stochastic multi-armed bandits that solves the instances in \Cref{lemma:classicallb} as follows: Choose a random $\man^*\in\Men$ and set $\alpha_{\man^*} = \alpha$. Simulate the remaining $\man$ by choosing $\alpha_{\man}$ for all $\man\neq\man^*$ uniformly at random. Run the algorithm on this instance of learning a maximum weight matching, ``forwarding'' arm pulls to the true instance when matching $\man^*$.

To analyze the regret of this algorithm when faced with the distribution from \Cref{lemma:classicallb}, we first note that with high probability, all the agents $\man\in\Men$ can simultaneously be matched to a set of $\woman\in\Women$ such that each $\man$ is matched to some $\woman$ whom they value at $\frac 12 + \bonus$. Then, the regret of any matching is $\bonus$ times the number of $\man\in\Men$ who are not matched to a $\woman$ whom they value at \smash{$\frac 12 + \bonus$}. Thus, we can define the cumulative regret for an agent $\man\in\Men$ as $\bonus$ times the number of rounds they were not matched to someone whom they value at $\frac 12 + \bonus$. For $\man^*$, this regret is just the regret for the distribution from \Cref{lemma:classicallb}. Since $\man^*$ was chosen uniformly at random, their expected cumulative regret is at most 
\[ \frac 1K\cdot o(K^{3/2}\sqrt\Rounds) = o(\sqrt{K\Rounds}), \]
in violation of \Cref{lemma:classicallb}.

\item \paragraph{Step 4: Concluding the lower bound.} 
This contradiction implies that no algorithm can hope to obtain $o(K^{3/2}\sqrt{\Rounds})$ expected regret on this distribution over  instances of learning a maximum weight matching. Since there are \smash{$O(K\log(K\Rounds)) = \widetilde O(K)$} agents in the market total,  the desired lower bound follows.
\end{proof}

\section{Proof of Theorem \ref{thm:instancespecific}}\label{appendix:proofinstancespecific}

\instancespecific*

\subsection{\textsc{MatchUCB$'$}}

\textsc{MatchUCB$'$} is the same as \textsc{MatchUCB}, except we call \textsc{ComputeMatch$'$} instead of \textsc{ComputeMatch}. The idea behind \textsc{ComputeMatch$'$} is that we compute an optimal primal-dual solution for both the original confidence sets $\Confidence$ as well as expanded confidence sets $\Confidence'$, which we define to be twice the width of the original confidence sets. More formally, we define
\[\Confidence'_{\agent, \agent'} \coloneqq \biggl[\min(\Confidence_{\agent, \agent'}) - \frac{\max(\Confidence_{\agent, \agent'}) - \min(\Confidence_{\agent, \agent'})}{2}, \max(\Confidence_{\agent, \agent'}) + \frac{\max(\Confidence_{\agent, \agent'}) - \min(\Confidence_{\agent, \agent'})}{2}\biggr]. \]
We will adaptively explore (following UCB) according to both $\Confidence$ and $\Confidence'$. Doing extra exploration according to the more pessimistic confidence sets $\Confidence'$ is necessary for us to be able to find ``robust'' dual solutions for setting transfers.

We define $(\MSet^{*}, \prices^{*})$, which will be an optimal primal-dual solution for the upper confidence bounds of ${C}$ as follows. Let $\MSet^*$ be a maximum weight matching with respect to $\uUCB$. We next compute the gap
\[ \Gap^{\text{UCB}} = \min_{\MSet \neq \MSet^{*}} \Biggl\{ \sum_{\agent\in\Agents} \uUCB_\agent(\mu_{\MSet^{*}}(\agent)) - \sum_{\agent\in\Agents} \uUCB_\agent(\MMap(\agent)) \Biggr\} \] 
with respect to $\uUCB$. We can compute this gap by computing the maximum weight matching and the second-best matching with respect to $\uUCB$.\footnote{See \citet{CH87} for efficient algorithms for to compute the second-best matching.} Next, define utility functions $\utility'_\agent$ such that
\[ 
\utility'_\agent(\agent') = 
\begin{cases}
\uUCB_\agent(\agent') - \frac{\Gap^{\text{UCB}}}{|\Agents|} & \text{if $\mu_{\MSet^{*}}(\agent) = \agent'$ and $\agent\neq\agent'$} \\
\uUCB_\agent(\agent') & \text{otherwise}
\end{cases}
\]
for all $\agent\in\Agents$. (We show in \Cref{lemma:maximummatch} that $\MSetopt$ is still a maximum weight matching for $\utility'$.) Now, compute an optimal dual solution $\prices'$ for utility function $\utility'$. To get $\pricesopt$, we add $\Delta^\text{UCB}/|\Agents|$ to $\prices'_\agent$ for each matched agent $\agent$ in $\MSetopt$. (See \Cref{lemma:optimality} for a proof that $(\MSet^*, \prices^*)$ is an optimal primal-dual pair with respect to $\uUCB$.) 

Finally, let $(\MSet^{*,2}, \prices^{*,2})$ be any optimal primal-dual pair for the utility function $\utility^{\UCB,2}$ given by the upper confidence bounds \smash{$\max(\Confidence'_{\agent,\agent'})$} of ${C}'$.

With this setup, we define \textsc{ComputeMatch$'$} as follows: If $\MSet^{*} \neq \MSet^{*,2}$, return $(\MSet^{*,2}, \transfers^{*,2})$, where $\transfers^{*,2}$ is given by $\transfers_\agent^{*,2} = \prices^{*,2}_\agent - \utility^{\UCB,2}_\agent(\mu_{\MSet^{*,2}}(\agent))$ if $\agent$ is matched and $\transfers^{*,2}_\agent = 0$ if $\agent$ is unmatched. Otherwise, return $(\MSet^{*}, \transfers^*)$, where $\transfers^*$ is given by $\transfers^*_\agent = \prices^{*}_\agent - \uUCB_\agent(\MMap(\agent))$ if $\agent$ is matched and $\transfers^*_\agent = 0$ if $\agent$ is unmatched.

\subsection{Proof of \Cref{thm:instancespecific}}

We first verify (as claimed above) that $\MSetopt$ is a maximum weight matching with respect to $\utility'$. 
\begin{lemma}
\label{lemma:maximummatch}
Matching $\MSetopt$ is a maximum weight matching with respect to $\utility'$.
\end{lemma}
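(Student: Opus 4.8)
The plan is to compare, for an arbitrary matching $\MSet\in\mathscr{X}_\Agents$, the total weight $\sum_{\agent\in\Agents}\utility'_\agent(\MMap(\agent))$ under $\utility'$ against the total weight $\sum_{\agent\in\Agents}\uUCB_\agent(\MMap(\agent))$ under $\uUCB$, and to deduce that the former is maximized by $\MSetopt$. First I would note that $\utility'$ and $\uUCB$ differ only on pairs belonging to $\MSetopt$: for a matched pair $(\man,\woman)\in\MSet$, agent $\man$ contributes $\utility'_\man(\woman)$ and agent $\woman$ contributes $\utility'_\woman(\man)$, and each of these equals the corresponding $\uUCB$ value minus $\Gap^{\text{UCB}}/|\Agents|$ exactly when $(\man,\woman)\in\MSetopt$ (using that $\MMapopt(\man)=\woman \iff (\man,\woman)\in\MSetopt \iff \MMapopt(\woman)=\man$), and equals the $\uUCB$ value otherwise; unmatched agents contribute $0$ under both utility functions. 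Summing over $\MSet$ yields the identity
\[ \sum_{\agent\in\Agents}\utility'_\agent(\MMap(\agent)) = \sum_{\agent\in\Agents}\uUCB_\agent(\MMap(\agent)) - \frac{2\Gap^{\text{UCB}}}{|\Agents|}\,\lvert \MSet\cap\MSetopt\rvert. \]

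Next I would instantiate this identity at $\MSet=\MSetopt$ (where $\lvert\MSetopt\cap\MSetopt\rvert=\lvert\MSetopt\rvert$) and at an arbitrary $\MSet\neq\MSetopt$, subtract, and reduce the desired inequality $\sum_{\agent}\utility'_\agent(\MMapopt(\agent)) \ge \sum_{\agent}\utility'_\agent(\MMap(\agent))$ to
\[ \sum_{\agent\in\Agents}\uUCB_\agent(\MMapopt(\agent)) - \sum_{\agent\in\Agents}\uUCB_\agent(\MMap(\agent)) \ \ge\ \frac{2\Gap^{\text{UCB}}}{|\Agents|}\bigl(\lvert\MSetopt\rvert - \lvert\MSet\cap\MSetopt\rvert\bigr). \]
Here the left-hand side is at least $\Gap^{\text{UCB}}$ by the very definition of $\Gap^{\text{UCB}}$ as the gap between the best and second-best matchings with respect to $\uUCB$, while the right-hand side is at most $\frac{2\Gap^{\text{UCB}}}{|\Agents|}\lvert\MSetopt\rvert \le \Gap^{\text{UCB}}$, since a matching uses two distinct agents per pair (so $\lvert\MSetopt\rvert\le|\Agents|/2$) and $\Gap^{\text{UCB}}\ge 0$. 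This proves the claim; the case $\MSet=\MSetopt$ is the trivial equality, and the degenerate case $\Gap^{\text{UCB}}=0$ is also covered, as then $\utility'=\uUCB$ and $\MSetopt$ is a maximum weight matching by hypothesis.

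I do not expect a genuine obstacle: the argument is elementary bookkeeping. The one point that needs care is the factor of $2$ in the per-shared-pair penalty $2\Gap^{\text{UCB}}/|\Agents|$, which must be balanced against the bound $\lvert\MSetopt\rvert\le|\Agents|/2$ so that the total penalty $\MSetopt$ incurs relative to \emph{any} competitor never exceeds the gap $\Gap^{\text{UCB}}$ protecting $\MSetopt$'s optimality. It is also worth recording, via the displayed identity, that decreasing the $\uUCB$-weight on the pairs of $\MSetopt$ decreases the $\utility'$-weight of any competing matching by at most the same total amount, since such a matching shares at most $\lvert\MSetopt\rvert$ pairs with $\MSetopt$; this is precisely the mechanism that keeps $\MSetopt$ optimal.
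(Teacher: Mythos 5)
Your proof is correct and rests on the same mechanism as the paper's: the passage from $\uUCB$ to $\utility'$ deducts $\Gap^{\mathrm{UCB}}/|\Agents|$ from at most $|\Agents|$ matched agents of $\MSetopt$, so the total penalty never exceeds the gap $\Gap^{\mathrm{UCB}}$ that separates $\MSetopt$ from every other matching under $\uUCB$. The paper compresses this into a one-line chain of inequalities (bounding $\sum_\agent\utility'_\agent(\MMap(\agent))\le\sum_\agent\uUCB_\agent(\MMap(\agent))$ outright rather than tracking $|\MSet\cap\MSetopt|$ exactly), but the argument is essentially identical.
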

\begin{proof}
Consider any matching $\MSet\neq\MSetopt$. Since \[ \sum_{\agent\in\Agents}\uUCB_\agent(\MMap(\agent))\le -\Delta^{\UCB} + \sum_{\agent\in\Agents} \uUCB_\agent(\MMapopt(\agent)) \]
by the definition of $\Delta^\UCB$, we have
\[ \sum_{\agent\in\Agents} \utility'_\agent( \MMap(\agent))\le\sum_{\agent\in\Agents} \uUCB_\agent(\MMap(\agent))\le \sum_{\agent\in\Agents} \biggl(\uUCB_\agent(\MMapopt(\agent)) - \frac{\Delta^\UCB}{|\Agents|}\biggr)\le\sum_{\agent\in\Agents} \utility'_\agent(\MMapopt(\agent)). \qedhere \]
\end{proof}

We now prove the main lemma for this analysis, restated below. Lemma \ref{lemma:main} shows that if the confidence sets are small enough, then the selected matching will be stable with respect to the true utilities. 
\main*
\begin{proof}[Proof of \Cref{lemma:main}]
The proof proceeds in five steps, which we now outline. We first show the matching returned by \textsc{ComputeMatch$'$} is the maximum weight matching $\MSet^{\text{opt}}$ with respect to $\utility$. We next show that $\MSet^*$ as defined in \textsc{ComputeMatch$'$} also equals $\MSet^{\text{opt}}$. These facts let us conclude that \textsc{ComputeMatch$'$} returns $(\MSetopt, \transfers^*)$. We then show $\Delta^\UCB$ is at least $0.1\Delta$. We then show that $(\MSetopt, \transfers^*)$ is stable with respect to $\utility'$. We finish by showing that this implies $(\MSetopt, \transfers^*)$ is a stable with respect to $\utility$. 

Throughout the proof, we will use the following observation about the expanded confidence sets: 
\begin{equation}\label{eq:d2hypo}
\max\bigl(\Confidence'_{\man,\woman}\bigr) - \min\bigl(\Confidence'_{\man,\woman}\bigr) \le \todoconst{} \frac{\Delta}{|\Agents|} \text{\quad and\quad} \max\bigl(\Confidence'_{\woman, \man}\bigr) - \min\bigl(\Confidence'_{\woman, \man}\bigr) \le \todoconst{} \frac{\Delta}{|\Agents|}
\end{equation}
for all $(\man, \woman)$ in the matching returned by \textsc{ComputeMatching$'$}. This follows from the assumptions in the lemma statement.

\item \paragraph{Proving {\normalfont\textsc{ComputeMatch$'$}} returns $\MSet^{\mathrm{opt}}$ as the matching.}
\normalfont\textsc{ComputeMatch$'$} by definition returns $\MSet^{*,2}$ always, so it suffices to show that $\MSet^{*,2} = \MSet^{\text{opt}}$. Note that $\MSet^{*,2}$ is a maximum weight matching with respect to $u^{\text{UCB}, 2}$. This means that
\begin{align*}
    \sum_{\agent \in \Agents} \utility_\agent(\mu_{\MSet^{*, 2}}(\agent)) 
    &\ge -\sum_{\agent \in \Agents} \bigl(\max\bigl(\Confidence'_{\agent, \mu_{\MSet^{*,2}}(\agent) }\bigr) - \min\bigl(\Confidence'_{\agent, \mu_{\MSet^{*,2}}(\agent) }\bigr)\bigr) + \sum_{\agent \in \Agents} u^{\text{UCB}, 2}_\agent(\mu_{\MSet^{*, 2}}(\agent)) \\
    &\ge - \todoconst{}\Delta + \sum_{\agent \in \Agents} u^{\text{UCB}, 2}_\agent(\mu_{\MSet^{*, 2}}(\agent)) \\
    &\ge -\todoconst{}\Delta + \sum_{\agent \in \Agents} u^{\text{UCB}, 2}_\agent(\mu_{\MSet^{\text{opt}}}(\agent)) \\
    &\ge -\todoconst{}\Delta + \sum_{\agent \in \Agents} \utility_\agent(\mu_{\MSet^{\text{opt}}}(\agent)).
\end{align*}
By the definition of the gap $\Delta$, we conclude that $\MSet^{*,2}= \MSet^{\text{opt}}$. 

\item \paragraph{Proving $\MSetopt = \MSet^{\mathrm{opt}}$.} Suppose for sake of contradiction that $\MSet^* \neq \MSet^{\text{opt}}$. Then
\[ \sum_{\agent \in \Agents} \uUCB_\agent(\mu_{\MSet^{*}}(\agent)) \ge  \sum_{\agent \in \Agents} \uUCB_\agent(\mu_{\MSet^{\text{opt}}}(\agent)) \ge \sum_{\agent \in \Agents} \utility_\agent(\mu_{\MSet^{\text{opt}}}(\agent)), \]
since $\MSet^*$ is a maximum weight matching with respect to $\uUCB$. Moreover, by the definition of the gap, we know that $\sum_{\agent \in \Agents} \utility_\agent(\mu_{\MSet^{*}}(\agent)) \le \sum_{\agent \in \Agents} \utility_\agent(\mu_{\MSet^{\text{opt}}}(\agent)) - \Delta$. Putting this all together, we see that 
\begin{align*}
  \sum_{\agent \in \Agents} \left(\max\bigl(\Confidence_{\agent, \mu_{\MSet^{*}}(\agent) }\bigr) - \min\bigl(\Confidence_{\agent, \mu_{\MSet^{*}}(\agent) }\bigr)\right) &\ge \sum_{\agent \in \Agents} \uUCB_\agent(\mu_{\MSet^{*}}(\agent)) - \sum_{\agent \in \Agents} \utility_\agent(\mu_{\MSet^{*}}(\agent))\\
  &\ge \Delta. 
\end{align*}
We now use this to lower bound the utility of $\MSet^*$ on $u^{\text{UCB}, 2}$. By the definition of the confidence sets, we see that
\begin{align*}
  \sum_{\agent \in \Agents} \utility^{\text{UCB}, 2}_\agent(\mu_{\MSet^{*}}(\agent)) &\ge  \sum_{\agent \in \Agents} \utility^{\text{UCB}}_\agent(\mu_{\MSet^{*}}(\agent)) + \frac 12 \sum_{\agent \in \Agents} \left(\max\bigl(\Confidence_{\agent, \mu_{\MSet^{*}}(\agent) }\bigr) - \min\bigl(\Confidence_{\agent, \mu_{\MSet^{*}}(\agent) }\bigr)\right) \\
  &\ge \sum_{\agent \in \Agents} \utility^{\text{UCB}}_\agent(\mu_{\MSet^{*}}(\agent))  + 0.5 \Delta.
\end{align*}
However, $\MSet^{\text{opt}}$ only achieves a utility of
\begin{align*}
  \sum_{\agent \in \Agents} \utility^{\text{UCB}, 2}_\agent(\mu_{\MSet^{\text{opt}}}(\agent))
    &\le \sum_{\agent \in \Agents} \utility_\agent(\mu_{\MSet^{\text{opt}}}(\agent)) +  \sum_{\agent \in \Agents} \left(\max\bigl(\Confidence'_{\agent, \mu_{\MSet^{\text{opt}}}(\agent) }\bigr) - \min\bigl(\Confidence'_{\agent, \mu_{\MSet^{\text{opt}}}(\agent) }\bigr)\right)  \\
    &\le  \sum_{\agent \in \Agents} \utility_\agent(\mu_{\MSet^{\text{opt}}}(\agent)) + \todoconst{}\Delta.
\end{align*}
But this contradicts the fact (from above) that $\MSet^{\text{opt}} = \MSet^{*,2}$ is a maximum weight matching with respect to $u^{\UCB, 2}$. Therefore, it must be that $\MSetopt = \MSet^{\text{opt}}$.

Putting the above two arguments together, we conclude \textsc{ComputeMatch$'$} returns $(\MSet^*, \transfers^*)$ in this case. 

\item \paragraph{Bounding the gap $\Gap^{\mathrm{UCB}}$.} 
We next show that $\Gap^{\text{UCB}} \ge \todoconst{} \Gap$.  We proceed by assuming
\begin{equation}\label{eq:contradiction}
\sum_{\agent\in\Agents} \uUCB_\agent(\MMap(\agent))\ge -\todoconst{}\Delta + \sum_{\agent\in\Agents} \uUCB_\agent(\MMapopt(\agent))
\end{equation}
for some $\MSet\neq\MSetopt$ and deriving a contradiction.

We first show that \eqref{eq:contradiction} implies a lower bound on
\[ S = \sum_{\agent\in\Agents} \Bigl(\max\bigl(\Confidence_{\agent, \MMap(\agent)}\bigr) - \min\bigl(\Confidence_{\agent, \MMap(\agent)}\bigr)\Bigr) \]
in terms of $\Delta$.
Because the confidence sets contain the true utilities and $\uUCB_\agent$ upper bounds $\utility_\agent$ pointwise, \eqref{eq:contradiction} implies
\[ S + \sum_{\agent\in\Agents}\utility_\agent(\MMap(\agent)) \ge \sum_{\agent\in\Agents} \uUCB_\agent(\MMap(\agent))\ge -\todoconst{}\Delta + \sum_{\agent\in\Agents} \utility_\agent(\MMapopt(\agent)). \]
Applying the definition of $\Delta$, we obtain the lower bound
\[ S\ge - \todoconst{}\Delta + \sum_{\agent\in\Agents} \utility_\agent(\MMapopt(\agent))  - \sum_{\agent\in\Agents} \utility_\agent(\MMap(\agent))\ge (1 - \todoconst{})\Delta. \]

Now, we apply the fact that $\MSet^{*} = \MSet^{*,2} = \MSet^{\text{opt}}$. We establish the following contradiction:
\begingroup
\allowdisplaybreaks
\begin{align*}
\todoconst{}\Delta + \sum_{\agent\in\Agents} \uUCB_\agent(\MMapopt(\agent))
&\ge\todoconst{}\Delta + \sum_{\agent\in\Agents} \utility_\agent(\MMapopt(\agent)) \\
&=\sum_{\agent\in\Agents} (\utility_\agent(\MMapopt(\agent)) + \todoconst{}\Delta/|\Agents|)  \\
&\stackrel{\text{(i)}}{\ge} \sum_{\agent\in\Agents} \utility^{\UCB,2}_\agent(\MMapopt(\agent)) \\
&\stackrel{\text{(ii)}}{\ge} \sum_{\agent\in\Agents} \utility^{\UCB,2}_\agent(\MMap(\agent)) \\
&\stackrel{\text{(iii)}}{\ge} \frac S2 + \sum_{\agent\in\Agents} \utility^{\UCB}_\agent(\MMap(\agent)) \\
&\stackrel{\text{(iv)}}{\ge} \left(\frac 12(1 - \todoconst{})\right)\Delta + \sum_{\agent\in\Agents} \uUCB_\agent(\MMap(\agent)) \\
&\stackrel{\text{(v)}}{\ge} \left(\frac 12(1 - \todoconst{}) - \todoconst{}\right)\Delta + \sum_{\agent\in\Agents} \uUCB_\agent(\MMapopt(\agent)).
\end{align*}
\endgroup
Here, (i) comes from \eqref{eq:d2hypo} in the lemma statement; (ii) holds because $\MSetopt = \MSet^{*,2}$ is a maximum weight matching with respect to $\utility^{\UCB,2}$; (iii) is by the definition of $\utility^{\UCB,2}$; (iv) follows from our lower bound on $S$; and (v) follows from \eqref{eq:contradiction}.

\item \paragraph{Proving that $(\MSet^*, \transfers^*)$ is stable with respect to $\utility'$.}
By \Cref{lemma:maximummatch}, $(\MSet^*, \prices')$ is an optimal primal-dual pair with respect to $\utility'$. Now, it suffices to show that the primal-dual solution corresponds to the market outcome $(\MSet^*, \transfers^*)$ for $\utility'$.  To see this, notice that $\prices'_\agent = 0$ for unmatched agents and
\[ \prices'_\agent = \prices^*_\agent - \frac{\Delta^{\text{UCB}}}{2 |\Agents|} = \transfers^*_\agent + \utility'_\agent(\mu_{\MSet^*}(\agent)) \]
for matched agents. 

\item \paragraph{Proving that $(\MSet^*, \transfers^*)$ is stable with respect to $\utility$.} We show the stability $(\MSetopt, \transfers^*)$ with respect to $\utility$ by checking that individual rationality holds and that there are no blocking pairs. 

The main fact that we will use is that
\[\utility_\agent(\MMapopt(\agent)) \ge \utility'_\agent(\MMapopt(\agent)).\]
To prove this, we split into two cases: (i) agent $\agent$ is matched in $\MSetopt$ (i.e., $\MMapopt(\agent)\neq\agent$), and (ii) agent $\agent$ is not matched by $\MSetopt$. For (i), if $\agent$ is matched by $\MSetopt$, then
\[ \utility_\agent(\MMapopt(\agent)) \ge  \uUCB_\agent(\MMapopt(\agent)) - \todoconst{} \frac{\Delta}{|\Agents|} \ge \uUCB_\agent(\MMapopt(\agent)) - \frac{\Delta^{\UCB}}{|\Agents|} =  \utility'_\agent(\MMapopt(\agent)).\]
For (ii), if $\agent$ is not matched by $\MSetopt$, then $\utility_\agent(\MMapopt(\agent))\ge\utility'_\agent(\MMapopt(\agent))$ because both sides are $0$. 

For individual rationality, we thus have
\begin{align*}
 \utility_\agent(\mu_{\MSet^*}(\agent)) + \transfers^*_\agent &\ge \utility'_\agent(\mu_{\MSet^*}(\agent)) + \transfers^*_\agent \ge 0,
\end{align*}
where the second inequality comes from the individual rationality of $(\MSetopt, \transfers^*)$ with respect to $\utility'$. 

Let's next show that there are no blocking pairs. If $(\man, \woman)\in\MSetopt$, then we see that:
\[\utility_\man(\mu_{\MSet^*}(\man)) +  \transfers^*_\man +  \utility_{\woman}(\mu_{\MSet^*}(\woman)) +  \transfers^*_{\woman} = \utility_\man(\mu_{\MSet^*}(\man)) +\utility_{\woman}(\mu_{\MSet^*}(\woman)), \] as desired.  
Next, consider any pair $(\man, \woman)\not\in\MSetopt$. Then,
\[ \utility_\man(\woman) + \utility_\woman(\man) \le \uUCB_\man(\woman) + \uUCB_\woman(\man) = \utility'_\man(\woman) + \utility'_\woman(\man). \]
It follows that
\begin{align*}
\utility_\man(\mu_{\MSet^*}(\man)) +  \transfers^*_\man +  \utility_{\woman}(\mu_{\MSet^*}(\woman)) +  \transfers^*_{\woman}
&\ge\utility'_\man(\mu_{\MSet^*}(\man)) +  \transfers^*_\man +  \utility_{\woman}(\mu_{\MSet^*}(\woman)) +  \transfers^*_{\woman} \\
&\ge\utility'_\man(\woman) + \utility'_\woman(\man) \\
&\ge \utility_\man(\woman) + \utility_\woman(\man),
\end{align*}
where the second inequality comes from the fact that $(\MSetopt, \transfers^*)$ has no blocking pairs with respect to $\utility'$.

This completes our proof that $(\MSet^*, \transfers^*)$ is stable with respect to $\utility$. 
\end{proof}

Now, we are ready to prove Theorem \ref{thm:instancespecific}. 
\begin{proof}[Proof of Theorem \ref{thm:instancespecific}]
As in the proof of Theorem \ref{thm:instanceind}, the starting point for our proof is the typical approach in multi-armed bandits and combinatorial bandits \cite{GKJ12,DBLP-conf/icml/ChenWY13, LS2020} of bounding regret in terms of the sizes of the confidence interval of the chosen arms. Our approach does not quite compose cleanly with these proofs, since we need to handle the transfers in addition to the matching. 

We divide in two cases, based on the event $\Event$ that all of the confidence sets contain their respective true utilities at every time step $\round\le\Rounds$. That is, $\utility_{\man}(\woman)\in\Confidence_{\man,\woman}$ and $\utility_\woman(\man)\in\Confidence_{\woman,\man}$ for all $(\man,\woman)\in\Men\times\Women$ at all $\round$. 

\paragraph{Case 1: $\Event$ holds.} Let $\pulls_{\man\woman}^{\round}$ be the number of times that the pair $(\man,\woman)$ has been matched by round $\round$. For each pair $(\man, \woman)$, we maintain a ``blame'' counter $\blame^\round_{\man\woman}$. We will ultimately bound the total number of time steps where the algorithm chooses a matching that is not stable by \smash{$\sum_{(i,j)} b_{i,j}^T$}.

We increment the blame counters as follows. First, suppose that  \[\max\bigl(\Confidence_{\agent, \MMapt(\agent)}\bigr) - \min\bigl(\Confidence_{\agent, \MMapt(\agent)}\bigr) \le \todoconst{}\frac{\Delta}{|\Agents|} \]  for every matched agent $\agent \in \Agents$. By \Cref{lemma:main} and since the event $E$ holds, we know the chosen matching is stable and thus incurs $0$ regret. We do not increment any of the blame counters in this case. Now, suppose that \[\max\bigl(\Confidence_{\agent, \MMapt(\agent)}\bigr) - \min\bigl(\Confidence_{\agent, \MMapt(\agent)}\bigr) > \todoconst{}\frac{\Delta}{ |\Agents|} \] for some matched agent $\agent$. We increment the counter of the least-blamed pair $(\man, \woman) \in \MSet^t$. 

We now bound the blame counter $\blame^\Rounds_{\man\woman}$. We use the fact that the blame counter is only incremented when the corresponding confidence set  is sufficiently large, and that a new sample of the utilities is received whenever the blame counter is incremented. This means that:  
\[\blame^\Rounds_{\man\woman} = O\left(\frac{|\Agents|^2 \log (|\Agents| \Rounds))}{\Delta^2} \right).\]
The maximum regret incurred by any matching is at most $12 |\Agents|$ which means that the regret incurred by this case is at most:
\[ 12 |\Agents| \sum_{(\man, \woman)} \blame^\Rounds_{\man\woman} \le 12 |\Agents| \sum_{(\man, \woman)} O\left(\frac{|\Agents|^2 \log (|\Agents| \Rounds))}{\Delta^2}\right) = O \left(\frac{|\Agents|^5 \log (|\Agents| \Rounds))}{\Delta^2} \right).  \]

\paragraph{Case 2: $\Event$ does not hold.}  Since each $\pulls_{\man\woman}(\hat\utility_\man(\woman) -\utility_\man(\woman))$ is mean-zero and $1$-subgaussian and we have $O(|\Men||\Women|\Rounds)$ such random variables over $\Rounds$ rounds, the probability that any of them exceeds 
\[ 2\sqrt{\log(|\Men||\Women|\Rounds/\delta)}\le 2\sqrt{\log(|\Agents|^2\Rounds/\delta)} \]
is at most $\delta$ by a standard tail bound for the maximum of subgaussian random variables. It follows that $\Event$ fails to hold with probability at most $|\Agents|^{-2}\Rounds^{-2}$. In the case that $\Event$ fails to hold, our regret in any given round would be at most $12|\Agents|$ by the Lipschitz property in \Cref{prop:desiderata}. (Recall that our upper confidence bound is off by at most $6$ due to clipping the confidence interval to lie in $[-1, 1]$, so that the expanded confidence sets also necessarily lie in $[-3, 3]$.) Thus, the expected regret from this scenario is at most
\[ |\Agents|^{-2}\Rounds^{-2}\cdot 12|\Agents|\Rounds\le 12|\Agents|^{-1}\Rounds^{-1}, \]
which is negligible compared to the regret bound from when $\Event$ does occur.
\end{proof}

\subsection{Instance-independent regret bounds for \textsc{MatchUCB$'$}}

To establish \textit{instance-independent} regret bounds for \textsc{MatchUCB$'$}, we show that $(\MSet^{*}, \prices^{*})$ is indeed optimal with respect to $\uUCB$; the remainder then follows the same argument as \Cref{thm:instanceind}.
\begin{lemma}
\label{lemma:optimality}
The pair $(\MSet^{*}, \prices^{*})$ is an optimal primal-dual pair with respect to $\uUCB$.
\end{lemma}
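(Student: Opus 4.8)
The plan is to verify the claim directly from linear programming duality: we exhibit $\MSet^{*}$ (identified with its indicator matrix $\MMatrix^{*}\in\R^{\Men\times\Women}$, as in \Cref{thm:pd}) as a feasible solution to (P), exhibit $\prices^{*}$ as a feasible solution to (D), and check that their objective values agree; weak duality then forces both to be optimal, which is exactly the assertion. Primal feasibility is immediate because $\MSet^{*}$ is a matching, so all the work is in the dual feasibility of $\prices^{*}$ and in matching the two objective values, and in both places the only real content is bookkeeping of the additive correction $\Gap^{\mathrm{UCB}}/|\Agents|$.

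First I would record the two facts built into the construction. The utility function $\utility'$ agrees with $\uUCB$ on every pair except the two utility entries attached to each edge of $\MSet^{*}$, where $\utility'$ is smaller by exactly $\Gap^{\mathrm{UCB}}/|\Agents|$; here $\Gap^{\mathrm{UCB}}\ge 0$ since $\MSet^{*}$ is a maximum-weight matching for $\uUCB$. Also, $\prices^{*}$ is obtained from the (for $\utility'$) dual-feasible vector $\prices'$ by adding $\Gap^{\mathrm{UCB}}/|\Agents|$ to each agent matched in $\MSet^{*}$ and leaving the rest unchanged, so $\prices^{*}\ge\prices'$ entrywise.

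For dual feasibility of $\prices^{*}$: nonnegativity is clear from $\prices^{*}_\agent\ge\prices'_\agent\ge 0$. For the constraint at a pair $(\man,\woman)$ I would split into cases. If $(\man,\woman)\notin\MSet^{*}$, then $\utility'_\man(\woman)=\uUCB_\man(\woman)$ and $\utility'_\woman(\man)=\uUCB_\woman(\man)$, and since $\prices^{*}\ge\prices'$ entrywise the $\uUCB$-constraint $\prices^{*}_\man+\prices^{*}_\woman\ge\uUCB_\man(\woman)+\uUCB_\woman(\man)$ is inherited from the $\utility'$-constraint satisfied by $\prices'$. If $(\man,\woman)\in\MSet^{*}$, both endpoints are matched, so $\prices^{*}_\man+\prices^{*}_\woman=\prices'_\man+\prices'_\woman+2\Gap^{\mathrm{UCB}}/|\Agents|$, and this extra $2\Gap^{\mathrm{UCB}}/|\Agents|$ is precisely the amount by which $\uUCB_\man(\woman)+\uUCB_\woman(\man)$ exceeds $\utility'_\man(\woman)+\utility'_\woman(\man)$; so again the $\uUCB$-constraint follows from the $\utility'$-constraint for $\prices'$ (in fact with equality).

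For the objective values: by strong duality for (P)/(D) with utilities $\utility'$, $\sum_{\agent\in\Agents}\prices'_\agent$ equals the maximum total $\utility'$-weight of a matching on $\Agents$, which by \Cref{lemma:maximummatch} is attained at $\MSet^{*}$ and hence equals $\sum_{(\man,\woman)\in\MSet^{*}}\bigl(\uUCB_\man(\woman)+\uUCB_\woman(\man)\bigr)-2|\MSet^{*}|\cdot\Gap^{\mathrm{UCB}}/|\Agents|$ (two corrected entries per matched edge). Since $\MSet^{*}$ matches exactly $2|\MSet^{*}|$ agents, $\sum_{\agent\in\Agents}\prices^{*}_\agent=\sum_{\agent\in\Agents}\prices'_\agent+2|\MSet^{*}|\cdot\Gap^{\mathrm{UCB}}/|\Agents|$ equals the $\uUCB$-weight of $\MSet^{*}$, i.e.\ the primal objective of $\MMatrix^{*}$ with respect to $\uUCB$. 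By weak duality, $(\MSet^{*},\prices^{*})$ is then an optimal primal-dual pair. I do not expect a genuine obstacle here: the whole argument is a short feasibility-plus-value-matching check, and the only step to get right is that the downward shift taking $\uUCB$ to $\utility'$ on the matched edges cancels, entry by entry, against the upward shift taking $\prices'$ to $\prices^{*}$ (a degenerate case where $\Agents$ admits only one matching, so $\Gap^{\mathrm{UCB}}$ is vacuous, can be dismissed at the outset).
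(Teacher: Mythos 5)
Your proposal is correct and follows essentially the same route as the paper's proof: primal feasibility is immediate, dual feasibility is checked by splitting on whether $(\man,\woman)$ lies in $\MSet^{*}$ (with the $2\Gap^{\mathrm{UCB}}/|\Agents|$ increment exactly offsetting the downward shift from $\uUCB$ to $\utility'$ on matched edges), and the objective values are matched via \Cref{lemma:maximummatch} and strong duality for $\utility'$, after which weak duality concludes. The only differences are cosmetic (you additionally note $\prices^{*}\ge\prices'\ge 0$ explicitly and remark on the degenerate single-matching case).
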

\begin{proof}
It suffices to verify feasibility and, by weak duality, check that $\MSetopt$ and $\pricesopt$ achieve the same objective value. It is clear that $\MSetopt$ is primal feasible. For dual feasibility, if $(\man, \woman)\not\in\MSetopt$, then
\[ \pricesopt_\man + \pricesopt_\woman\ge\prices'_\man + \prices'_\woman\ge\utility'_\man(\woman) + \utility'_\woman(\man) = \uUCB_\man(\woman) + \uUCB_\woman(\man); \]
and if $(\man, \woman)\in\MSetopt$, then
\[ \pricesopt_\man + \pricesopt_\woman = \prices'_\man + \prices'_\woman + 2\frac{\Delta^{\text{UCB}}}{|\Agents|}\ge\utility'_\man(\woman) + \utility'_\woman(\man) + 2\frac{\Delta^{\text{UCB}}}{|\Agents|} = \uUCB_\man(\woman) + \uUCB_\woman(\man). \]
Finally, we check that they achieve the same objective value with respect to $\uUCB$. By \Cref{lemma:maximummatch} and strong duality, $\MSetopt$ achieves the same objective value as $\prices'$ with respect to $\utility'$. Hence
\[ \sum_{\agent\in\Agents} \uUCB_\agent(\MMapopt(\agent)) = 2|\MSetopt|\frac{\Delta^\UCB}{|\Agents|} + \sum_{{\agent\in\Agents}} \utility'_\agent(\MMapopt(\agent))  = 2|\MSetopt|\frac{\Delta^\UCB}{|\Agents|} +\sum_{\agent\in\Agents} \prices'_\agent = \sum_{\agent\in\Agents} \pricesopt_\agent. \qedhere \]
\end{proof}

\section{Proofs for Section \ref{appendix:alternatedef}}\label{appendix:proofrevenue}

\searchfricinf*

\begin{proof}[Proof of \Cref{thm:searchfricinf}]
The algorithm is defined as follows. We set confidence sets according to \textsc{MatchUCB} and run essentially that algorithm, but with a modified \textsc{ComputeMatch}. Instead of \textsc{ComputeMatch}, we use the following algorithm. The platform first computes a matching with transfers $(\MSet^*, \transfers^*)$ according to the UCB estimates $\uUCB$, like before. Then, the platform chooses $\MSet^*$ to be the selected matching, and sets the transfers according to:
\[ \transfers_\agent = \transfers^*_\agent - \epsilon +  \max\bigl(\Confidence_{\agent, \MMap(\agent))}\bigr) - \min\bigl(\Confidence_{\agent, \MMap(\agent)}\bigr).\]
This choice of transfers has a clean economic intuition: agents should be compensated based on the platform's uncertainty about their utilities with $\epsilon$ of their transfer shaved off as revenue for the platform.

First, we show that if the confidence sets contain the true utilities, then $(\MSet^*, \transfers)$ is $\epsilon$-stable. It suffices to show that $(\MSet^*, \transfers')$ where:
\[\transfers'_\agent = \transfers^*_\agent +  \max\bigl(\Confidence_{\agent, \MMap(\agent))}\bigr) - \min\bigl(\Confidence_{\agent, \MMap(\agent)}\bigr)  \]
is stable. First, we see that 
\[\utility_\agent(\MMapUCB(\agent)) + \transfers'_\agent = \uUCB_\agent(\MMapUCB(\agent)) + \transfers^*_\agent \ge 0,\] since $(\MSet, \transfers^*)$ is stable with respect to $\uUCB$. Furthermore, we see that:
\begin{align*}
 \bigl(\utility_\man(\MMap(\man)) + \transfers'_\man\bigr) + \bigl(\utility_\woman(\MMap(\woman)) + \transfers'_\woman\bigr) &\ge   \bigl(\uUCB_\man(\MMap(\man)) + \transfers^*_\man\bigr) + \bigl(\uUCB_\woman(\MMap(\woman)) + \transfers^*_\woman\bigr) \\
 &\ge \uUCB_\man(\woman) + \uUCB_\woman(\man) \\
 &\ge \utility_\man(\woman) + \utility_\woman(\man),
\end{align*}
where the second to last line follows from the fact that $(\MSet, \transfers^*)$ is stable with respect to $\uUCB$. 

We first show that $\subsidies$ is a feasible solution to \eqref{eq:instabilityNTU}:
\begin{align*}
  \lefteqn{\min \bigl(\utility_\man(\woman)  - \utility_\man(\MMapUCB(\man))- \subsidies_\man, \utilityii_\woman(\man) - \utility_\woman(\MMapUCB(\woman)) - \subsidies_\woman\bigr)} \\
  &=  \min \bigl(\utility_\man(\woman)  - \uUCB_\man(\MMapUCB(\man)), \utilityii_\woman(\man) - \uUCB_\woman(\MMapUCB(\woman))\bigr) \\
  &\le \min \bigl(\uUCB_\man(\woman)  - \uUCB_\man(\MMapUCB(\man)), \uUCB_\woman(\man) - \uUCB_\woman(\MMapUCB(\woman))\bigr) \\
  &\le 0,
\end{align*}
where the last step uses the fact that $\MMapUCB$ is stable with respect to $\uUCB$ by definition. Moreover, we see that 
\[\utility_\agent(\MMapUCB(\agent)) + \subsidies_\agent = \uUCB_\agent(\MMapUCB(\agent)) \ge 0,\] where the last inequality uses that $\MMapUCB$ is stable with respect to $\uUCB$ by definition. This implies that $\subsidies$ is feasible.

We see that the platform's revenue is equal to:
\begin{align*}
  -\sum_{t=1}^T \sum_{\agent \in \Agents_t} \transfers_\agent &= -\sum_{t=1}^T \sum_{\agent \in \Agents_t} \transfers^*_\agent + \sum_{t=1}^T \sum_{\agent \in \Agents_t} \epsilon + \sum_{t=1}^T \sum_{\agent \in \Agents_t} \left(\max\bigl(\Confidence_{\agent, \MMap(\agent))}\bigr) - \min\bigl(\Confidence_{\agent, \MMap(\agent)}\bigr)\right) \\
  &= \epsilon \sum_{t=1}^T |\Agents_t| - \sum_{t=1}^T \sum_{\agent \in \Agents_t} \left(\max\bigl(\Confidence_{\agent, \MMap(\agent))}\bigr) - \min\bigl(\Confidence_{\agent, \MMap(\agent)}\bigr)\right).
\end{align*}
Using the proof of \Cref{thm:instanceind}, we see that  
\[\sum_{t=1}^T \sum_{\agent \in \Agents_t} \left(\max\bigl(\Confidence_{\agent, \MMap(\agent))}\bigr) - \min\bigl(\Confidence_{\agent, \MMap(\agent)}\bigr)\right) \le O(|\Agents| \sqrt{n \Rounds} \log(|\Agents| T)),\] as desired.

\end{proof}

\section{Proofs for Section \ref{appendix:matchntu}}\label{appendix:proofmatchNTU}

\subsection{Proof of Proposition \ref{prop:desiderataNTU}}

\begin{proof}[Proof of Proposition \ref{prop:desiderataNTU}]

We first prove the first part of the statement, and then the second part of the statement.

\item \paragraph{Proof of part (a).}  We note that it follows immediately from Definition \ref{definition:instabilityNTU} that \NTUinstabmeasure{} is nonnegative. Let's now show that $\InstabilityNTU{\utility}{\MSet}$ is zero if and only if $(\MSet, \transfers)$ is stable. It is not difficult to see that the infimum of \eqref{eq:instabilityNTU} is attained at some $\subsidies^*$. 

If $\InstabilityNTU{\utility}{\MSet} = 0$, then we know that $\subsidies^*_\agent = 0$ for all $\agent \in \Agents$. The constraints in the optimization problem imply that $\MSet$ has no blocking pairs and individually rationality is satisfied, as desired.

If $\MSet$ is stable, then we see that $\subsidies = \vec{0}$ is a feasible solution to \eqref{eq:instabilityNTU}, which means that the optimum of \eqref{eq:instabilityNTU}  is at most zero. This coupled with the fact that $\InstabilityNTU{\utility}{\MSet}$ is always nonnegative means that $\InstabilityNTU{\utility}{\MSet} = 0$ as desired. 

\item \paragraph{Proof of part (b).} Consider two utility functions $\utility$ and $\utilityii$. To show Lipchitz continuity, it suffices to show that for any matching $\MSet$:
\[|\InstabilityNTU{\utility}{\MSet} - \InstabilityNTU{\utilityii}{\MSet}| \le 2\sum_{\agent\in\Agents} \norm{\utility_\agent - \utilityii_\agent}_\infty. \] We show that:
\[\InstabilityNTU{\utilityii}{\MSet} \le  \InstabilityNTU{\utility}{\MSet} + 2\sum_{\agent\in\Agents} \norm{\utility_\agent - \utilityii_\agent}_\infty, \] noting that the other direction follows from an analogous argument. Let $\subsidies^*$ be an optimal solution to \eqref{eq:instabilityNTU} for the utilities $\utility$. Consider the solution $\subsidies_\agent = \subsidies^*_\agent + 2 \norm{\utility_\agent - \utility_\agent}_{\infty}$. We first verify that $\subsidies$ is a feasible solution to \eqref{eq:instabilityNTU} for $\utilityii$. We see that:
\begin{align*}
  & \min \bigl(\utilityii_\man(\woman) - \utilityii_\man(\MMap(\man)) - \subsidies_\man, \utilityii_\woman(\man) - \utilityii_\woman(\MMap(\woman)) - \subsidies_\woman\bigr) \\
  &=    \min \bigl(\utilityii_\man(\woman) - \utilityii_\man(\MMap(\man)) - \subsidies^*_\man - 2 \norm{\utility_\man - \utilityii_\man}_{\infty}, \utilityii_\woman(\man) - \utilityii_\woman(\MMap(\woman)) - \subsidies^*_\woman - 2 \norm{\utility_\woman - \utilityii_\woman}_{\infty}\bigr) \\
  &\le  \min \bigl(\utility_\man(\woman) - \utility_\man(\MMap(\man)) - \subsidies^*_\man, \utility_\woman(\man) - \utility_\woman(\MMap(\woman)) - \subsidies^*_\woman\bigr) \\
  &\le 0,
\end{align*}
as desired. Moreover, we see that 
\[\utilityii_\agent(\MMap(\agent)) + \subsidies_\agent = \utilityii_\agent(\MMap(\agent)) + \subsidies^*_\agent + 2 \norm{\utility_\agent - \utility_\agent}_{\infty} \le \utility_\agent(\MMap(\agent)) + \subsidies^*_\agent \ge 0.\]
Thus we have demonstrated that $\subsidies$ is feasible. This means that:
\[\InstabilityNTU{\utilityii}{\MSet} \le \sum_{\agent \in \Agents} \subsidies_\agent = \sum_{\agent \in \Agents} \subsidies^*_\agent + 2\sum_{\agent\in\Agents} \norm{\utility_\agent - \utilityii_\agent}_\infty = [\InstabilityNTU{\utility}{\MSet} + 2\sum_{\agent\in\Agents} \norm{\utility_\agent - \utilityii_\agent}_\infty, \]
as desired. 

\end{proof}

\subsection{Proof of Theorem \ref{thm:matchingNTU}}

We show that the algorithmic approach from \Cref{sec:regret} can be adapted to the setting of matching with non-transferable utilities. 

Drawing intuition from \Cref{sec:regret}, at each round, we compute a stable matching for utilities given by the upper confidence bounds. More precisely, suppose we have a collection $\mathscr{C}$ of confidence sets \smash{$\Confidence_{\man, \woman}, \Confidence_{\woman, \man}\subseteq\R$} such that $\utility_\man(\woman)\in \Confidence_{\man, \woman}$ and $\utility_\woman(\man)\in\Confidence_{\woman, \man}$ for all $(\man, \woman) \in \Men \times \Women$. Our algorithm uses $\mathscr{C}$ to get an upper confidence bound for each agent's utility function and then computes a stable matching with transfers as if these upper confidence bounds were the true utilities (see \textsc{ComputeMatchNTU}). This can be implemented efficiently if we use, e.g., the Gale-Shapley algorithm (either the customer-proposing algorithm or the provider-proposing algorithm will work). 

\begin{algorithm}[tb] \caption{$\textsc{ComputeMatchNTU}$: Compute matching with transfers from confidence sets}\label{alg:fromconfsetsNTU}
\begin{algorithmic}[1]
\Procedure{ComputeMatchNTU}{$\mathscr{C}$}
  \For{$(\man, \woman) \in \Men \times \Women$}
      \State $\uUCB_{\man}(\woman)\gets\max\bigl( \Confidence_{\man, \woman }\bigr)$;\quad
      $\uUCB_{\woman}(\man)\gets \max \bigl(\Confidence_{\woman, \man}\bigr)$ \Comment{UCB estimates of utilities.}
    \EndFor
      \State Run any version of the Gale-Shapley algorithm \cite{GS62} on $\uUCB$ to obtain a matching $\MSetopt$. 
  \State \textbf{return} $\MSetopt$
\EndProcedure
\end{algorithmic}
\end{algorithm}

The core property of \textsc{ComputeMatchNTU} is that we can upper bound \NTUinstabmeasure{} by the sum of the sizes of the relevant confidence sets, assuming that the confidence sets contain the true utilities. 
\begin{proposition}
\label{lemma:confsetNTU}
Consider a collection confidence sets $\mathscr{C}$ such that $\utility_{\man}(\woman) \in \Confidence_{\man, \woman}$ and $\utility_{\woman}(\man) \in \Confidence_{\woman,\man}$ for all $(\man, \woman) \in \Men \times \Women$. The instability of the output \smash{$\MSetUCB$} of \textsc{ComputeMatch} satisfies
\begin{equation}\label{eq:instaboundNTU}
\InstabilityNTU{\utility}{\MSetUCB} \le \sum_{\agent \in \Agents^t} \Bigl(\max\bigl(\Confidence_{\agent, \MMapUCB(\agent)}\bigr) - \min\bigl(\Confidence_{\agent, \MMapUCB(\agent)}\bigr)\Bigr).\end{equation}
\end{proposition}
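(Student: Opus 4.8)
The plan is to exhibit an explicit feasible subsidy vector for the minimization \eqref{eq:instabilityNTU} that defines $\InstabilityNTU{\utility}{\MSetUCB}$ and whose total cost is at most the right-hand side of \eqref{eq:instaboundNTU}; since $\InstabilityNTU{\utility}{\MSetUCB}$ is a \emph{minimum} over feasible subsidies, this immediately gives the claimed bound. Two facts drive the argument: (i) $\uUCB$ pointwise upper bounds $\utility$, because $\uUCB_\agent(\agent') = \max(\Confidence_{\agent,\agent'}) \ge \utility_\agent(\agent')$ whenever $\utility_\agent(\agent') \in \Confidence_{\agent,\agent'}$ (and trivially on the diagonal, where both sides are $0$); and (ii) $\MSetUCB$, being the output of Gale--Shapley on $\uUCB$, is stable with respect to $\uUCB$ — equivalently, by Proposition \ref{prop:desiderataNTU}(a) for NTU instability, the all-zero subsidy is feasible in the $\uUCB$-version of \eqref{eq:instabilityNTU}, i.e.\ $\uUCB_\agent(\MMapUCB(\agent)) \ge 0$ for all $\agent$ and $\min\bigl(\uUCB_\man(\woman) - \uUCB_\man(\MMapUCB(\man)),\, \uUCB_\woman(\man) - \uUCB_\woman(\MMapUCB(\woman))\bigr) \le 0$ for all $(\man,\woman)\in\Men\times\Women$.

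Next I would set $\subsidies_\agent \coloneqq \uUCB_\agent(\MMapUCB(\agent)) - \utility_\agent(\MMapUCB(\agent))$ — that is, pay each agent exactly the gap between its optimistic and true utility for its realized partner in $\MSetUCB$. By fact (i) this is nonnegative, and since $\utility_\agent(\MMapUCB(\agent)) \in \Confidence_{\agent,\MMapUCB(\agent)}$ while $\uUCB_\agent(\MMapUCB(\agent)) = \max(\Confidence_{\agent,\MMapUCB(\agent)})$, we get $\subsidies_\agent \le \max(\Confidence_{\agent,\MMapUCB(\agent)}) - \min(\Confidence_{\agent,\MMapUCB(\agent)})$; summing over $\agent$ bounds $\sum_{\agent\in\Agents}\subsidies_\agent$ by exactly the right-hand side of \eqref{eq:instaboundNTU}. (For an unmatched agent $\agent$, $\MMapUCB(\agent)=\agent$ and $\subsidies_\agent = 0$, consistent with the convention that the corresponding confidence-set term vanishes, as in Lemma \ref{lemma:confset}.)

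It then remains to verify feasibility of $\subsidies$ in \eqref{eq:instabilityNTU}. The useful identity is $\utility_\agent(\MMapUCB(\agent)) + \subsidies_\agent = \uUCB_\agent(\MMapUCB(\agent))$, from which individual rationality is immediate by fact (ii): $\utility_\agent(\MMapUCB(\agent)) + \subsidies_\agent = \uUCB_\agent(\MMapUCB(\agent)) \ge 0$. For the no-blocking-pair constraints, fix $(\man,\woman)\in\Men\times\Women$; using the identity together with $\utility_\man(\woman) \le \uUCB_\man(\woman)$ from fact (i),
\[
\utility_\man(\woman) - \utility_\man(\MMapUCB(\man)) - \subsidies_\man = \utility_\man(\woman) - \uUCB_\man(\MMapUCB(\man)) \le \uUCB_\man(\woman) - \uUCB_\man(\MMapUCB(\man)),
\]
and symmetrically $\utility_\woman(\man) - \utility_\woman(\MMapUCB(\woman)) - \subsidies_\woman \le \uUCB_\woman(\man) - \uUCB_\woman(\MMapUCB(\woman))$. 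Taking the minimum of the two left-hand sides and applying fact (ii) yields $\min\bigl(\utility_\man(\woman) - \utility_\man(\MMapUCB(\man)) - \subsidies_\man,\, \utility_\woman(\man) - \utility_\woman(\MMapUCB(\woman)) - \subsidies_\woman\bigr) \le 0$, which is the constraint. Hence $\subsidies$ is feasible and $\InstabilityNTU{\utility}{\MSetUCB} \le \sum_{\agent\in\Agents}\subsidies_\agent$, completing the proof.

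I do not expect a genuine obstacle here: unlike the transferable-utilities Lemma \ref{lemma:confset}, there is no need for a primal/dual reformulation or a coalition-maximization decomposition, since the subsidy formulation of NTU instability is tailor-made for this kind of argument. The only mild care points are handling unmatched agents uniformly via the convention $\utility(\agent,\agent) = 0$, and ensuring the chosen subsidies do not themselves create a blocking pair — which is exactly where pointwise optimism ($\utility_\man(\woman)\le\uUCB_\man(\woman)$) is invoked.
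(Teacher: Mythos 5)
Your proof is correct and follows essentially the same route as the paper: both exhibit the subsidy vector $\subsidies_\agent = \uUCB_\agent(\MMapUCB(\agent)) - \utility_\agent(\MMapUCB(\agent))$, verify feasibility in \eqref{eq:instabilityNTU} using the stability of $\MSetUCB$ with respect to $\uUCB$ together with pointwise optimism, and bound the total cost by the sum of confidence-set widths. Your explicit treatment of unmatched agents is a minor point of added care the paper leaves implicit.
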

\begin{proof}
We construct subsidies for this setting to be:
\[ \subsidies_\agent = \max\bigl(\Confidence_{\agent, \MMap(\agent)}\bigr) - \utility_\agent(\MMap(\agent))\le \max\bigl(\Confidence_{\agent, \MMap(\agent)}\bigr) - \min\bigl(\Confidence_{\agent, \MMap(\agent)}\bigr).\]

\paragraph{Step 1: Verifying feasibility.} We first show that $\subsidies$ is a feasible solution to \eqref{eq:instabilityNTU}. 
\begin{align*}
  & \min \bigl(\utility_\man(\woman)  - \utility_\man(\MMapUCB(\man))- \subsidies_\man, \utilityii_\woman(\man) - \utility_\woman(\MMapUCB(\woman)) - \subsidies_\woman\bigr) \\
  &=  \min \bigl(\utility_\man(\woman)  - \uUCB_\man(\MMapUCB(\man)), \utilityii_\woman(\man) - \uUCB_\woman(\MMapUCB(\woman))\bigr) \\
  &\le \min \bigl(\uUCB_\man(\woman)  - \uUCB_\man(\MMapUCB(\man)), \uUCB_\woman(\man) - \uUCB_\woman(\MMapUCB(\woman))\bigr) \\
  &\le 0,
\end{align*}
where the last step uses the fact that $\MMapUCB$ is stable with respect to $\uUCB$ by definition. Moreover, we see that 
\[\utility_\agent(\MMapUCB(\agent)) + \subsidies_\agent = \uUCB_\agent(\MMapUCB(\agent)) \ge 0,\] where the last inequality uses that $\MMapUCB$ is stable with respect to $\uUCB$ by definition. This implies that $\subsidies$ is feasible.

\paragraph{Step 2: Computing the objective.} We next compute the objective of \eqref{eq:instabilityNTU} at $\subsidies$ and use this to bound $\InstabilityNTU{\utility}{\MSet^*}$. A simple calculation shows that:
\[\InstabilityNTU{\utility}{\MSet^*} \le \sum_\agent \subsidies_\agent =  \sum_{\agent \in \Agents} \Bigl(\max\bigl(\Confidence_{\agent, \MMapUCB(\agent)}\bigr) - \min\bigl(\Confidence_{\agent, \MMapUCB(\agent)}\bigr)\Bigr),\]
as desired.

\end{proof}

\subsubsection{Explicit algorithm and regret bounds}\label{subsec:basicNTU}

Using the same intuition as Section \ref{sec:regret}, the regret bound of \Cref{lemma:confsetNTU} hints at an algorithm: {each round, select the matching with transfers returned by \textsc{ComputeMatchNTU} and update confidence sets accordingly}. To instantiate this approach, it remains to construct confidence intervals that contain the true utilities with high probability. 

We showcase this algorithm in the simple setting of unstructured preferences. For this setting, we can construct our confidence intervals following the classical UCB approach. That is, for each utility value involving the pair $(\man, \woman)$, we take a length \smash{$O(\sqrt{\log(|\Agents|\Rounds)} / \pulls_{\man\woman})$} confidence interval centered around the empirical mean, where $\pulls_{\man\woman}$ is the number of times the pair has been matched before. We describe this construction precisely in \Cref{alg:bandits} (\textsc{MatchNTUUCB}).

\begin{algorithm}
\caption{\textsc{MatchNTUUCB}: A bandit algorithm for matching with non-transferable utilities.}\label{alg:banditsNTU}
\begin{algorithmic}[1]
\Procedure{MatchNTUUCB}{$T$}
  \For{$(\man, \woman) \in \Men \times \Women$} \Comment{Initialize confidence intervals and empirical mean.}
  \State $\Confidence_{\man, \woman} \gets [-1, 1]$;\quad $\Confidence_{\woman, \man} \gets [-1, 1]$; \quad $\hat{\utility}_\man(\woman)\gets 0$; \quad $\hat{\utility}_\woman(\man)\gets 0$
  \EndFor
  \For{$1 \le \round \le\Rounds$}
  \State $\MSet^\round \gets \Call{ComputeMatchNTU}{\Confidence}$
  \For{$(\man, \woman) \in \MSet^\round$} \Comment{Set confidence intervals and update means.}
  \State Update $\hat\utility_\man(\woman)$ and $\hat\utility_\woman(\man)$ from feedback; increment counter $\pulls_{\man\woman}$
  \State $\Confidence_{\man, \woman} \gets \bigl[\mean_{\man}(\woman)-8\sqrt{{\log(|\Agents|\Rounds)} / {\pulls_{\man\woman}}},  \mean_{\man}(\woman) + 8\sqrt{{\log(|\Agents|\Rounds)}/{\pulls_{\man, \woman}}}\,\bigr]\cap [-1, 1]$
  \State $\Confidence_{\woman, \man} \gets \bigl[\mean_{\woman}(\man)-8\sqrt{{\log(|\Agents|\Rounds)}/{\pulls_{\man\woman}}},  \mean_{\woman}(\man) +8\sqrt{{\log(|\Agents|\Rounds)}/{\pulls_{\man, \woman}}} \,\bigr]\cap [-1, 1]$ 
  \EndFor
  \EndFor
\EndProcedure
\end{algorithmic}
\end{algorithm}

To analyze \textsc{MatchNTUUCB}, recall that \Cref{lemma:confset} bounds the regret at each step by the lengths of the confidence intervals of each pair in the selected matching. Like in Section \ref{sec:regret}, this yields the following instance-independent regret bound:
\begin{theorem}
\label{thm:instanceindNTU}
\textsc{MatchNTUUCB} incurs expected  regret \smash{$\mathbb{E}(\Regret_\Rounds) \le O\bigl({|\Agents|}^{3/2} \sqrt{\Rounds} \sqrt{\log(|\Agents| \Rounds)}\bigr)$}.
\end{theorem}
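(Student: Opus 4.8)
The plan is to transcribe the proof of \Cref{thm:instanceind} (unstructured, transferable utilities) almost verbatim, replacing \Cref{lemma:confset} by its non-transferable analogue \Cref{lemma:confsetNTU} and deleting all of the primal--dual and transfer bookkeeping, since \textsc{MatchNTUUCB} returns only a matching. The argument splits on the good event $\Event$ that every confidence interval maintained by \textsc{MatchNTUUCB} (\Cref{alg:banditsNTU}) contains the corresponding true utility at every round, i.e. $\utility_\man(\woman)\in\Confidence_{\man,\woman}$ and $\utility_\woman(\man)\in\Confidence_{\woman,\man}$ for all $(\man,\woman)\in\Men\times\Women$ and all $\round\le\Rounds$. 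First I would dispose of $\Event^c$: each recentered partial sum $\pulls_{\man\woman}(\mean_\man(\woman)-\utility_\man(\woman))$ is mean-zero and $1$-subgaussian, there are $O(|\Agents|^2\Rounds)$ of them, and the confidence radii are $8\sqrt{\log(|\Agents|\Rounds)/\pulls_{\man\woman}}$, so the standard tail bound for the maximum of subgaussian variables gives $\Pr[\Event^c]\le|\Agents|^{-2}\Rounds^{-2}$; on $\Event^c$ the per-round regret is $O(|\Agents|)$ by the Lipschitz property of \Cref{prop:desiderataNTU} (the clipped UCB estimate of any utility is wrong by at most $2$), so $\Event^c$ contributes $O(|\Agents|^{-1}\Rounds^{-1})$ to $\mathbb{E}(\Regret_\Rounds)$, which is negligible.

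On $\Event$, since the matching $\MSet^\round$ returned by $\textsc{ComputeMatchNTU}(\mathscr{C})$ is Gale--Shapley stable with respect to $\uUCB$ by construction, \Cref{lemma:confsetNTU} bounds the round-$\round$ regret by $\sum_{\agent\in\Agents}\bigl(\max(\Confidence_{\agent,\MMapt(\agent)})-\min(\Confidence_{\agent,\MMapt(\agent)})\bigr)$, which by the interval construction is
\[ O\Biggl(\sum_{(\man,\woman)\in\MSet^\round}\sqrt{\tfrac{\log(|\Agents|\Rounds)}{\pulls^\round_{\man\woman}}}\Biggr), \]
where $\pulls^\round_{\man\woman}$ counts the rounds before $\round$ in which $(\man,\woman)$ was matched. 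Summing over $\round$ is then the usual combinatorial-bandits rearrangement: write each summand as $\sqrt{\log(|\Agents|\Rounds)}\,w$ with $w=1/\sqrt{\pulls^\round_{\man\woman}}$, concatenate all $L\le|\Agents|\Rounds$ of these weights over all rounds and all matched pairs, sort them in decreasing order $\tilde w_1\ge\dots\ge\tilde w_L$, note that a fixed pair can have weight $\ge\tilde w_l$ in at most $1+1/\tilde w_l^2$ rounds so that $l\le|\Agents|^2(1+1/\tilde w_l^2)$ and hence $\tilde w_l\le\min\bigl(1,(l/|\Agents|^2-1)^{-1/2}\bigr)$, and integrate to get $\sum_{l\le L}\tilde w_l=O(|\Agents|\sqrt{|\Agents|\Rounds})$. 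Multiplying back the $\sqrt{\log(|\Agents|\Rounds)}$ gives cumulative regret $O(|\Agents|^{3/2}\sqrt{\Rounds}\sqrt{\log(|\Agents|\Rounds)})$ on $\Event$, which together with the negligible $\Event^c$ term proves the theorem.

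Because \Cref{lemma:confsetNTU}, \Cref{prop:desiderataNTU}, and the subgaussian tail bound are all in hand, there is no real obstacle at the level of \Cref{thm:instanceindNTU} itself: the proof is a routine transcription. The conceptual crux lives one level down, in \Cref{lemma:confsetNTU}. In the transferable setting \Cref{lemma:confset} could bound \instabmeasure{} using that stable outcomes are welfare-maximizing (so pointwise-larger utilities only inflate the max-weight term) and that transfers cancel; neither holds in the NTU model, where stable matchings need not be welfare-optimal, so \Cref{lemma:confsetNTU} must instead be proved by exhibiting the explicit feasible subsidy $\subsidies_\agent=\max(\Confidence_{\agent,\MMapUCB(\agent)})-\utility_\agent(\MMapUCB(\agent))$ for the linear program in \Cref{definition:instabilityNTU} and checking the blocking-pair and individual-rationality constraints from the facts that $\MSetUCB$ is Gale--Shapley stable for $\uUCB$ and that $\uUCB$ pointwise dominates $\utility$; its objective is at most $\sum_\agent\bigl(\max(\Confidence_{\agent,\MMapUCB(\agent)})-\min(\Confidence_{\agent,\MMapUCB(\agent)})\bigr)$. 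It is worth emphasizing that, exactly as in the transferable case, the bound must be stated in terms of the confidence widths of the \emph{matched} pairs only --- a Lipschitz bound over all pairs would be too weak for sublinear regret --- and it is precisely \Cref{lemma:confsetNTU} that delivers this fine-grained form.
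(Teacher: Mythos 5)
Your proposal is correct and follows essentially the same route as the paper: the same split on the good event $\Event$, the same subgaussian tail bound and Lipschitz argument (via \Cref{prop:desiderataNTU}) for $\Event^c$, and on $\Event$ the same application of \Cref{lemma:confsetNTU} followed by the standard sort-and-integrate bound on the confidence widths of matched pairs. Your remarks about why \Cref{lemma:confsetNTU} must be proved via the explicit feasible subsidy $\subsidies_\agent=\max(\Confidence_{\agent,\MMapUCB(\agent)})-\utility_\agent(\MMapUCB(\agent))$ rather than by the welfare-maximization argument of the transferable-utility case also match the paper's treatment exactly.
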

\begin{proof}
This proof proceeds very similarly to the proof of Theorem \ref{thm:instanceind}. We consider the event $\Event$ that all of the confidence sets contain their respective true utilities at every time step $\round\le\Rounds$. That is, $\utility_{\man}(\woman)\in\Confidence_{\man, \woman}$ and $\utility_\woman(\man)\in\Confidence_{\woman, \man}$ for all $(\man,\woman)\in\Men\times\Women$ at all $\round$.

\item \paragraph{Case 1: $\Event$ holds.} 
By \Cref{lemma:confset}, we may bound
\[ \InstabilityNTU[^\round]{\utility}{\MSet^\round} \le \sum_{\agent \in \Agents^t} \Bigl(\max\bigl(\Confidence_{\agent, \MMapt(\agent)}\bigr) - \min\bigl(\Confidence_{\agent, \MMapt(\agent)}\bigr)\Bigr) = O\paren*{\sum_{(\man,\woman)\in\MSet^\round} \sqrt{\frac{\log(|\Agents|\Rounds)}{\pulls_{\man\woman}^{\round}}}},\]
where $\pulls_{\man\woman}^{\round}$ is the number of times that the pair $(\man,\woman)$ has been matched at the start of round $\round$. Let $w^\round_{\man, \woman} = \frac{1}{\sqrt{\pulls_{\man\woman}^{\round}}}$ be the size of the confidence set (with the log factor scaled out) for $(\man, \woman)$ at the start of round $\round$. 

At each time step $\round$, let's consider the list consisting of $w^\round_{\man_\round, \woman_\round}$ for all $(\man_\round, \woman_\round) \in \MSet^\round$. Let's now consider the overall list consisting of the concatenation of all of these lists over all rounds. Let's order this list in decreasing order to obtain a list $\tilde{w}_1, \ldots, \tilde{w}_L$ where $L = \sum_{\round=1}^{\Rounds} |\MSet^\round| \le n \Rounds$. In this notation, we observe that:
\[\sum_{\round=1}^{\Rounds} \InstabilityNTU[^\round]{\utility}{\MSet^\round} \le \sum_{\round=1}^{\Rounds} \sum_{\agent \in \Agents^t} \Bigl(\max\bigl(\Confidence_{\agent, \MMapt(\agent)}\bigr) - \min\bigl(\Confidence_{\agent, \MMapt(\agent)}\bigr)\Bigr) = \log(|\Agents| \Rounds) \sum_{l=1}^L \tilde{w}_l.   \]
We claim that $\tilde{w}_l \le O\paren*{\min(1, \frac{1}{\sqrt{(l / |\Agents|^2) -1}})}$. The number of rounds that a pair of agents can have their confidence set have size at least $\tilde{w}_l$ is upper bounded by $1 + \frac{1}{\tilde{w}_l^2}$. Thus, the total number of times that any confidence set can have size at least $\tilde{w}_l$ is upper bounded by $(|\Agents|^2)(1 + \frac{1}{\tilde{w}_l^2})$. 

Putting this together, we see that:
\begin{align*}
  \log(|\Agents| \Rounds) \sum_{l=1}^L \tilde{w}_l &\le O\paren*{\sum_{l=1}^{L} \min(1, \frac{1}{\sqrt{(l / |\Agents|^2) -1}})} \\
  &\le O\paren*{\log(|\Agents| \Rounds) \sum_{l=1}^{n T} \min(1, \frac{1}{\sqrt{(l / |\Agents|^2) -1}})} \\
  &\le O\paren*{ |\Agents| \sqrt{nT} \log(|\Agents| \Rounds)}. 
\end{align*}

\item \paragraph{Case 2: $\Event$ does not hold.} Since each $\pulls_{\man\woman}(\hat\utility_\man(\woman) -\utility_\man(\woman))$ is mean-zero and $1$-subgaussian, and we have $O(|\Men||\Women|\Rounds)$ such random variables over $\Rounds$ rounds, the probability that any of them exceeds 
\[ 2\sqrt{\log(|\Men||\Women|\Rounds/\delta)}\le 2\sqrt{\log(|\Agents|^2\Rounds/\delta)} \]
is at most $\delta$ by a standard tail bound for the maximum of subgaussian random variables. It follows that $\Event$ fails to hold with probability at most $|\Agents|^{-2}\Rounds^{-2}$. In the case that $\Event$ fails to hold, our regret in any given round would be at most $4|\Agents|$ by the Lipschitz property in \Cref{prop:desiderataNTU}. (Recall that our upper confidence bound for any utility is wrong by at most two due to clipping each confidence interval to lie in $[-1, 1]$.) Thus, the expected regret from this scenario is at most
\[ |\Agents|^{-2}\Rounds^{-2}\cdot 4|\Agents|\Rounds\le 4|\Agents|^{-1}\Rounds^{-1}, \]
which is negligible compared to the regret bound from when $\Event$ does occur.
\end{proof}

\end{document}